\documentclass{article}
\usepackage{Packages}
\iclrfinalcopy
\title{Last-Iterate Guarantees for Online Reinforcement Learning in Structured Constrained MDPs}

\author{%
  Nam Phuong Tran \\
  Inria centre at the University of Lille\\
  Lille, France \\
   \And
   Trinh Ha Mai Huynh \\
   Vin Motion\\
   Vietnam \\
   \And
   Tuyen Pham Le \\
   Vin Motion\\
   Hanoi, Vietnam \\
   \And
   Van-Truong Nguyen \\
   Vin Motion\\
   Hanoi, Vietnam \\
   \And
   Quan Nguyen \\
   Vin Motion\\
   Hanoi, Vietnam \\
   \And
   Long Tran-Thanh\\
   Department of Computer Science\\
   University of Warwick\\
   Coventry, United Kingdom\\
}

\begin{document}

\addtocontents{toc}{\protect\setcounter{tocdepth}{0}}

\maketitle
\lhead{Preprint}

\begin{abstract}
In safety-critical applications, deployment uses a single policy, 
whose performance and constraint satisfaction should hold directly rather than 
only for an average or mixture of training policies. 
This motivates last-iterate guarantees in constrained reinforcement learning. 
Recent progress has established such guarantees in exact-gradient or tabular online settings, 
yet scalable results for structured large-state problems remain open.
We develop a general, statistically efficient framework for last-iterate convergence in structured Constrained MDPs (CMDPs). 
Our analysis separates contraction of the regularised primal-dual dynamics from actor approximation 
and statistical errors in policy evaluation under online exploration. 
This enables model-free on- and off-policy learning with structured function approximation: 
optimistic policy evaluation avoids explicit transition-model construction, 
while a compact parametric actor avoids maintaining mixtures or histories of past policies. 
We instantiate the framework for linear CMDPs and general function approximation, 
obtaining representation-dependent complexity and improved target-accuracy dependence over prior optimistic regularised primal-dual analyses.
We further validate the stabilising effect predicted by our theory on a synthetic linear CMDP:
the regularised method exhibits stable last-iterate behaviour, 
whereas its unregularised counterpart shows larger oscillations.
\end{abstract}

\section{Introduction}

Constrained Reinforcement Learning (RL) studies sequential decision making in which an agent maximizes
reward while satisfying explicit safety, resource, or performance constraints. Constrained Markov
decision processes (CMDPs) provide a standard framework for this problem
\citep{altman1999constrained,efroni2020explorationexploitationconstrainedmdps}. Safe robotics offers
a concrete motivation: reinforcement learning is increasingly used for locomotion
\citep{kumar2024rapid}, manipulation \citep{kroemer2021review}, and navigation
\citep{xu2025navrl}, where optimising reward alone is insufficient. A robot must also avoid unsafe
behaviour, and deployment ultimately relies on a single learned policy rather than an average of the
policies encountered during training.
Other applications in which the deployed policy must satisfy
constraints are discussed in Appendix~\ref{sec:Comprehensive related-work}.
Moreover, in these applications, state spaces are often large or effectively
continuous, and practical methods must remain computationally manageable. These applications therefore
make three general requirements especially clear: reliability of the final iterate, scalability
beyond tabular models, and efficient computation.

Safety requirements are often encoded through handcrafted rules, such as barrier-style constraints
\citep{ames2017control,wang2017safety,cheng2019end}, or through fixed reward penalties
\citep{lee2024learning,fu2022coupling,xu2025navrl}. Handcrafted
rules can be difficult to design for complex long-horizon tasks, while a fixed scalarized reward need
not recover an optimal constrained policy \citep{ding2024lastiterateconvergentpolicygradient}.
Lagrangian-based primal-dual methods instead adapt the reward-constraint tradeoff during learning.
Yet they also face a fundamental challenge: their iterates can oscillate, so average reward and
constraint satisfaction do not guarantee that the final policy is near-optimal and feasible
\citep{ding2024lastiterateconvergentpolicygradient}. For constrained learning, this distinction is
essential: one ultimately deploys a single policy, not an average of all policies produced during
training. Last-iterate convergence is therefore the natural theoretical objective.

Recent work has made important progress in this direction. In the exact setting, where the CMDP is
known, \citet{ding2024lastiterateconvergentpolicygradient} showed that regularised policy-gradient
primal-dual methods can stabilise the learning dynamics and achieve last-iterate convergence. More
recently, \citet{montenegro2024lastiterate,lu2026augmented} established model-free last-iterate convergence for
general parameterised policies using stochastic policy gradients. This moves the theory beyond
tabular policies, but relies on conditions ensuring that on-policy gradient estimates make global
progress and does not explicitly address exploration. In a complementary direction,
\citet{müller2024trulynoregretlearningconstrained,zuo2026flexdome} used optimism under uncertainty to address
exploration, but their online framework is tabular and model-based. Thus, a gap remains between
scalable model-free policy optimisation and explicit online exploration. The tabular dependence is
inadequate for large state spaces, while constructing an explicit optimistic transition
model can be computationally demanding. Moreover, the implicit optimistic policy update
accumulates past critic estimates; without a compact parametric representation, the policy
representation and its statistical complexity can grow with the number of iterations
\citep{lin2025optimisticworkshop}.

A natural way to move beyond the tabular regime is to exploit structured
function approximation, including linear CMDPs
\citep{jin2020provably,ghosh24aToward} and more general value-function
classes. Under suitable structural assumptions, complexity can depend on
the feature dimension or function-class complexity rather than the number
of states. Moreover, when the state space is large, model-free methods can be computationally more efficient 
since they avoid explicitly constructing a transition model. 
Extending online last-iterate guarantees to these settings requires preserving contraction 
across optimisation steps 
while controlling both statistical policy-evaluation error and 
approximation error from a compact parametric actor.
These challenges call for a unified, estimator-agnostic analysis.

Our central idea is a general framework that separates optimisation from statistical learning. The
master analysis controls the contraction of the regularised primal-dual dynamics, while modular
actor and critic conditions quantify the approximation and estimation errors arising from actor
fitting and policy evaluation under online exploration. Consequently, the same convergence argument
accommodates on- and off-policy data collection and different structured CMDP classes whenever these
errors can be controlled. The model-free critic exploits problem structure without constructing a
transition model, while the actor compresses the history-dependent policy update into a
fixed-dimensional parameterization rather than storing all past critics.

\textbf{Contributions.}
Motivated by these limitations, we ask whether \textit{a model-free
primal-dual method can achieve last-iterate convergence in structured CMDPs while explicitly
addressing exploration from online interaction}. We answer this question affirmatively. Our main
contributions are threefold:

\textit{First}, we develop a general algorithmic and analytical framework for last-iterate convergence of
online regularised policy-gradient primal-dual methods. 
It separates the contraction analysis from the approximation and statistical errors introduced by actor fitting, 
policy evaluation, and exploration, thereby accommodating both on- and off-policy data collection 
and different structured CMDP classes whenever these errors can be controlled. 
Its model-free optimistic critic and compact parametric actor avoid 
explicit transition-model construction and storage of the full critic history.

\textit{Second}, we instantiate the framework for linear CMDPs and general function
approximation, with statistical complexity governed by the feature
dimension or function-class complexity rather than state-space size.
Moreover, the linear-CMDP instantiation yields a computationally
efficient algorithm.

\textit{Third}, we sharpen the dependence on the target accuracy $\epsilon$ in the regularised primal-dual analysis of
    \citet{müller2024trulynoregretlearningconstrained}. In the comparable online tabular setting without actor
    approximation error, this improves the last-iterate episode complexity from
    \(\widetilde O(\epsilon^{-14})\) to \(\widetilde O(\epsilon^{-6})\).

There is a large literature on constrained reinforcement learning and primal-dual methods for CMDPs; 
we defer a detailed discussion to Appendix~\ref{sec:Comprehensive related-work}.
Table~\ref{tab:related-work-cmdp-comparison} lists the sample complexities
of CMDP methods, highlighting that our work achieves the tightest rate
among existing methods for online CMDPs.
 Moreover, to the best of our knowledge, we establish the first non-asymptotic \textit{last-iterate guarantees}
for online RL in \textit{structured CMDPs} while \textit{explicitly addressing the exploration problem}.


\section{Preliminaries}
\label{sec:prelim}
For \(K\in\mathbb N_+\), let \([K]\triangleq\{1,\ldots,K\}\). For a finite set
\(\mathcal X\), let \(\Delta(\mathcal X)\) denote the probability simplex over \(\mathcal X\), and
for \(p,q\in\Delta(\mathcal X)\), let
$\KL(p\|q)\triangleq\sum_{x\in\mathcal X}p(x)\log\frac{p(x)}{q(x)}$.
The notations $\widetilde O$ and $\widetilde\Omega$ suppress polylogarithmic factors.
For any $x\in \mathbb R$, let $[x]_+ \triangleq \max\{0,x\}$.
For a matrix $A$, let $A^\dagger$ denote its Moore-Penrose pseudoinverse.

\paragraph{Episodic CMDPs.}
We consider an episodic constrained Markov decision process (CMDP)
$M=(\mathcal S,\mathcal A,H,P,r,u,s_1)$, where $\mathcal S$ is a finite state space, $\mathcal A$ is a finite action space , $P=\{P_h\}_{h=1}^H$ is the transition
kernel, $r=\{r_h\}_{h=1}^H$ is the reward function, and
$u=\{u_{i,h}\}_{i\in[I],h\in[H]}$ are the utilities associated with the safety constraints, 
where $I$ is the number of constraints. The
learner observes stochastic rewards and utilities supported on $[0,1]$, with conditional means
$r_h(s,a)$ and $u_{i,h}(s,a)$, respectively.

A policy \(\pi = \{\pi_h\}_{h=1}^H\) is a sequence of stochastic decision rules
\(\pi_h(\cdot \mid s) \in \Delta(\mathcal{A})\). Let $\Pi$ denote the class of these policies. For any policy \(\pi\), let
$
d_h^\pi(s,a) \triangleq \mathbb{P}_\pi(s_h=s, a_h=a \mid s_1),
\;
d_h^\pi(s) \triangleq \sum_{a \in \mathcal{A}} d_h^\pi(s,a)
$
denote the occupancy measure and state marginal induced by \(\pi\).
For each $(h,s,a)$,  and every function
\(g:\mathcal S\to\mathbb R\), define $ (P_h g)(s,a)\triangleq\mathbb E_{s'\sim P_h(\cdot\mid s,a)}[g(s')].
$
For any \(r'=\{r_h'\}_{h=1}^H\) defined on state-action pairs, define
$
Q_{r',h}^\pi(s,a)
\triangleq
\mathbb{E}_\pi\left[\sum_{t=h}^H r_t'(s_t,a_t)\,\middle|\, s_h=s, a_h=a\right].
$
For any function \(f:\mathcal S\times\mathcal A\to\mathbb R\) and
\(h\in[H]\), define the \(\pi\)-Bellman operator by
\[
(\mathbb B_{r',h}^{\pi}f)(s,a)
\triangleq
\begin{cases}
r'_h(s,a)
+\mathbb E_{s'\sim P_h(\cdot\mid s,a),\,a'\sim\pi_{h+1}(\cdot\mid s')}
[f(s',a')], & h<H,\\
r'_H(s,a), & h=H.
\end{cases}
\]
Then \(Q_{r',h}^\pi\) satisfies the Bellman recursion
\[
Q_{r',h}^\pi(s,a) = (\Bell_{r',h}^{\pi} Q_{r',h+1}^\pi)(s,a),
\qquad
V_{r',h}^\pi(s)
=
\mathbb{E}_{a \sim \pi_h(\cdot \mid s)}\left[Q_{r',h}^\pi(s,a)\right].
\]
Here $Q_{r',H+1}^\pi\equiv0$.
In particular, \(V_r^\pi\) denotes the expected cumulative reward, and
$
V_u^\pi \triangleq \bigl(V_{u_1}^\pi,\dots,V_{u_I}^\pi\bigr) \in \mathbb{R}^I
$
collects the expected cumulative utilities.
When \(h=1\) and the initial state is \(s_1\), we write
$
V_{r'}^\pi \triangleq V_{r',1}^\pi(s_1).
$
In this work, our goal is to solve the constrained control problem
\begin{equation}
\label{eq:cmdp-primal}
\max_{\pi \in \Pi} \; V_r^\pi
\qquad
\text{s.t.}
\qquad
V_{u_i}^\pi \ge c_i,
\quad
\forall i \in [I],
\end{equation}
where \(c\in[0,H]^I\) is the vector of safety thresholds. Let $\pi^\star$ denote an optimal feasible policy.
We also define the log-linear policy class $\Pi_{\rm lin}$: given an
actor feature map $\varphi:\cS\times\cA\to\mathbb R^{d_{\rm a}}$ satisfying
$\|\varphi(s,a)\|_2\leq1$ and parameters $\omega=\{\omega_h\}_{h=1}^H$, define
\begin{equation}
\pi_{h}(a\mid s,\omega)
=
\frac{\exp\bigl(\langle\varphi(s,a),\omega_h\rangle\bigr)}
{\sum_{a'\in\mathcal{A}}\exp\bigl(\langle\varphi(s,a'),\omega_h\rangle\bigr)}.
\label{eq:master-actor-policy}
\end{equation}
We denote the corresponding policy by $\pi(\omega)$. The real-valued scores supplied to a softmax are called logits; here,
for a log-linear policy, the logit for action $a$ at state $s$ and stage $h$ is $\langle\varphi(s,a),\omega_h\rangle$.

\paragraph{Regularised primal-dual formulation.} To obtain a bounded dual domain, we impose the standard Slater condition.
\begin{assumption}[Slater condition]
\label{ass:slater}
There exist a policy \(\bar{\pi}\in\Pi\) and a constant \(\xi>0\) such that
$V_{u_i}^{\bar\pi}\geq c_i+\xi$ for every $i\in[I]$.
\end{assumption}
It implies $\|\lambda^\star\|_1\leq H/\xi$ for an optimal dual solution $\lambda^\star$
\citep{ying2022dual}; hence, we work on $\Lambda\triangleq[0,\lambda_{\max}]^I$ with
$\lambda_{\max}\geq H/\xi$.
By strong duality, the constrained problem \eqref{eq:cmdp-primal} is equivalent to the Lagrangian
saddle-point problem \citep{altman1999constrained,paternain2019constrainedreinforcementlearningzero}
\begin{equation}
\label{eq:lagrangian}
\max_{\pi\in\Pi}\min_{\lambda\in\Lambda} L(\pi,\lambda),
\qquad
L(\pi,\lambda)
\triangleq
V_r^\pi + \lambda^\top (V_u^\pi - c).
\end{equation}
Although \eqref{eq:lagrangian} involves the scalarized reward \(r+\lambda^\top u\), it is important
not to confuse the CMDP with a single unconstrained MDP obtained by fixing \(\lambda\).
optimising a fixed scalarization is generally not equivalent to solving the constrained problem, since
the appropriate tradeoff must be adjusted through the dual dynamics
\citep{altman1999constrained,ding2024lastiterateconvergentpolicygradient}. Because vanilla
primal-dual dynamics can oscillate, we add entropy regularisation to the primal variable and
quadratic regularisation to the dual variable to stabilise the last iterate
\citep{ding2024lastiterateconvergentpolicygradient}.

For any policy \(\pi\), define the entropy cost
$
\psi_h^\pi(s,a) \triangleq - \log \pi_h(a\mid s)
$,  and the entropy-value function
$
V_{\psi^\pi}^\pi
\triangleq
\mathbb{E}_\pi\left[\sum_{h=1}^H \psi_h^\pi(s_h,a_h)\,\middle|\, s_1\right].
$
For a regularisation parameter \(\tau>0\), define the regularised Lagrangian
\begin{equation}
\label{eq:lagrangian-reg}
L_\tau(\pi,\lambda)
\triangleq
V_r^\pi
+
\lambda^\top (V_u^\pi-c)
+
\tau V_{\psi^\pi}^\pi
+
\frac{\tau}{2}\|\lambda\|_2^2,
\qquad
(\pi,\lambda)\in\Pi\times\Lambda.
\end{equation}
Let \((\pi_\tau^\star,\lambda_\tau^\star)\) denote a saddle point of \(L_\tau\) over
\(\Pi\times\Lambda\). Given the current iterate \((\pi^k,\lambda_k)\), define $
\psi_{k,h}(s,a) \triangleq -\log \pi_h^k(a\mid s),
\;z_k \triangleq r + \lambda_k^\top u + \tau \psi_k.
$
Following \citep{müller2024trulynoregretlearningconstrained}, the exact primal-dual update admit
closed-form expressions
\begin{equation}
\label{eq:reg-primal}
\pi_h^{k+1}(\cdot \mid s)
\propto
\pi_h^k(\cdot \mid s)\exp\bigl(\eta Q_{z_k,h}^{\pi^k}(s,\cdot)\bigr),
\quad
\lambda_{k+1}
=
\operatorname{proj}_{\Lambda}
\bigl((1-\eta\tau)\lambda_k - \eta (V_u^{\pi^k}-c)\bigr).
\end{equation}
Moreover, uniform initialization and exponential primal update ensure that
\(\pi_h^k(a\mid s)>0\) for every $(k,h,s,a)$, thus \(\psi_{k,h}(s,a)\) is finite.


\section{Master Algorithm and Analysis}
\label{section:master-rpgpd}
So far, we have presented the regularised primal-dual formulation in an idealised setting.
We now move to the online setting, where the CMDP is unknown, and policy gradients must be estimated from data collected through interaction with the environment. 
To separate the convergence analysis from model-specific estimation and optimisation details, we formulate an oracle-based algorithm with two abstract components: an optimistic policy-evaluation oracle (critic) and an actor-fitting oracle. 
We state sufficient conditions on the critic and actor oracles. 
Under these conditions, we derive a master contraction recursion that makes explicit how the two sources of error affect the last iterate. 
Concrete policy-evaluation constructions are developed in Sections \ref{section:linear-cmdp-explicit-actor}
 and \ref{section:general-function-approximation}.

\paragraph{Master algorithm.}
Let $\KL_{k,h}(s) \triangleq \KL\bigl(\pi_{\tau,h}^{\star}(\cdot \mid s) \,\|\, \pi_h^k(\cdot \mid s)\bigr) $, and define the potential
$
\Phi_k
\triangleq 
\sum_{h=1}^H
\mathbb E_{s \sim d_h^{\pi_{\tau}^{\star}}}
\Bigl[
\KL_{k,h}(s)
\Bigr]
+
\frac{1}{2}\|\lambda_k-\lambda_{\tau}^{\star}\|_2^2.
$
Let $\omega^k=(\omega_h^k)_{h=1}^H$ and
$Q_{z_k}^k=(Q_{z_k,h}^k)_{h=1}^H$ denote the collections of stagewise actor
parameters and critic estimates, respectively. At iteration $k$, the Online
Policy-Evaluation oracle ($\mathrm{OPE}$) returns the optimistic critic
$Q_{z_k}^k$, together with the utility estimates $V_u^k$. The actor-fitting
oracle $\mathrm{ActorFit}$ then returns $\omega^{k+1}$ by fitting the
$H$ stagewise logit targets
$\{\langle\varphi,\omega_h^k\rangle+\eta Q_{z_k,h}^k\}_{h=1}^H$.
The on- and off-policy variants differ only in the dataset supplied
to the $\mathrm{OPE}$: in the on-policy mode, $\cD^k$ consists of a fresh batch of
$N$ independent trajectories generated by $\pi^k$; in the off-policy mode,
$\cD^k$ augments $\cD^{k-1}$ with a single trajectory generated by $\pi^k$.
The resulting Regularsied Policy-Gradient Primal Dual (RPGPD) procedure is presented in
Algorithm~\ref{alg:master-rpgpd}.

\begin{algorithm}[t]
\caption{Online \texttt{RPGPD}}
\label{alg:master-rpgpd}
\begin{algorithmic}[1]
\STATE \textbf{Input}: $K, N,\eta,\tau,\Lambda$, a sampling scheme (on-policy or off-policy), $\mathrm{ActorFit}$, and $\mathrm{OPE}$.
\STATE \textbf{Initialize}: set $\omega_h^1=0$ for every $h$, $\lambda_1=0$, and
$\cD^0=\varnothing$; define $\pi^1=\pi(\omega^1)$.
\FOR{$k=1,\ldots,K-1$}
\STATE Collect data:
\(
\cD^k \leftarrow
\begin{cases}
\text{on-policy}: &\{N\text{ fresh traj. } \sim \pi^k\},
\\
\text{off-policy}: &\cD^{k-1}\cup
\{\text{1 fresh traj. } \sim \pi^k\}.
\end{cases}
\)
\STATE Update critic: $\{Q_{z_k,h}^k\}_{h=1}^H,V_u^k\leftarrow
\mathrm{OPE}(\pi^k,\lambda_k,\cD^k)$.
\STATE Update actor: $\omega^{k+1}\leftarrow
\mathrm{ActorFit}(\omega^k,Q_{z_k}^k,\eta)$ and define
$\pi^{k+1}=\pi(\omega^{k+1})$.
\STATE Update dual variable: $\lambda_{k+1}\leftarrow
\proj_\Lambda[(1-\eta\tau)\lambda_k-\eta(V_u^k-c)]$.
\ENDFOR
\STATE \textbf{Output}: $\pi^K$.
\end{algorithmic}
\end{algorithm}

\paragraph{Policy-evaluation oracle.}
We first state the critic requirements used by the master analysis.
\begin{assumption}[Policy-evaluation oracle]
\label{ass:master-ope}
With probability at least $1-\delta$, simultaneously for every $k\in[K-1]$, the oracle returns
$Q_{z_k}^k$ and $V_u^k$ such that, with $Q_{z_k,H+1}^k\equiv0$ and
$V_{z_k,h}^k(s)\triangleq
\mathbb E_{a\sim\pi_h^k(\cdot\mid s)}[Q_{z_k,h}^k(s,a)]$ and
$V_{z_k}^k\triangleq V_{z_k,1}^k(s_1)$:
\begin{enumerate}
\item[(i)] \textbf{Optimism:} $Q_{z_k,h}^k\geq\Bell_{z_k,h}^{\pi^k}Q_{z_k,h+1}^k$ pointwise for every $h\in[H]$;
\item[(ii)] \textbf{Consistency:} there is $\beta_k\geq0$ for which
$V_{z_k}^k-V_{z_k}^{\pi^k}\leq\beta_k$ and
$\max_{i\in[I]}|V_{u_i}^k-V_{u_i}^{\pi^k}|\leq\beta_k$;
\item [(iii)] \textbf{Boundedness:} $0\leq V_{u_i}^k\leq H$ and, for every $(h,s)$,
\begin{equation}
\sum_{a\in\cA}\pi_h^k(a\mid s)
\exp\bigl(\eta Q_{z_k,h}^k(s,a)\bigr)
\bigl(Q_{z_k,h}^k(s,a)\bigr)^2
\leq C_{\eta,\tau,\Lambda,k}.
\label{eq:master-local-norm}
\end{equation}
\end{enumerate}
\end{assumption}

The three conditions capture standard requirements for optimistic online
policy evaluation. Optimism makes $Q_{z_k}^k$ an upper bound for $Q_{z_k}^{\pi^k}$,
a common mechanism in online RL for handling uncertainty and encouraging
exploration. Consistency keeps the estimated regularised return and utilities
close to their true values, with error controlled by $\beta_k$, while
boundedness controls the dual update and the second-order term of the
exponential actor update. Importantly, these are modular oracle-level
conditions: they do not prescribe a particular estimator or sampling scheme
and can be verified by suitable constructions for both on- and off-policy data
collection and for structured CMDP classes, including linear CMDPs.

\paragraph{Actor oracle.}
Having specified the critic requirements, we next characterize how accurately
the actor oracle must approximate the ideal policy update. Given $Q_{z_k,h}^k$, the ideal entropy-regularised mirror-ascent step is
\[
\widetilde\pi_h^{k+1}(a\mid s)
\propto
\pi_h^k(a\mid s)
\exp\bigl(\eta Q_{z_k,h}^k(s,a)\bigr),
\]
whereas $\mathrm{ActorFit}$ returns a policy $\pi^{k+1}$ in the log-linear policy class.  The master
analysis requires only that, for some $\epsilon_{\rm act}\geq0$ and every
$(k,h,s)\in[K-1]\times[H]\times\cS$, the fitted update satisfies
\begin{equation}
\KL\bigl(\pi_{\tau,h}^\star(\cdot\mid s)\,\|\,\pi_h^{k+1}(\cdot\mid s)\bigr)
-\KL\bigl(\pi_{\tau,h}^\star(\cdot\mid s)\,\|\,\widetilde\pi_h^{k+1}(\cdot\mid s)\bigr)
\leq \eta\epsilon_{\rm act}.
\label{eq:master-projection-bound}
\end{equation}
This condition measures the KL progress lost by projecting the ideal update
onto the log-linear class $\Pi_{\rm lin}$ and contributes $\eta H\epsilon_{\rm act}$ to the
master recursion.
Although explicit, the ideal update accumulates past critics and need not
remain in the log-linear class. Following \citet{lin2025optimisticworkshop},
$\mathrm{ActorFit}$ approximates each target increment $\eta Q_{z_k,h}^k$
using a linear combination of the actor features $\varphi$. Moreover, we
can relax this approximation requirement by observing that, for a
state-dependent baseline $b:\cS\to\mathbb R$, subtracting $b(s)$
multiplies every unnormalized action weight at state $s$ by the same
factor $e^{-\eta b(s)}$, which cancels upon normalization:
\[
\frac{
\pi_h^k(a\mid s)\exp\bigl(\eta(Q_{z_k,h}^k(s,a)-b(s))\bigr)
}{
\sum_{a'\in\cA}
\pi_h^k(a'\mid s)\exp\bigl(\eta(Q_{z_k,h}^k(s,a')-b(s))\bigr)
}
=
\widetilde\pi_h^{k+1}(a\mid s).
\]
In words, a log-linear policy may represent the action differences of
$Q_{z_k,h}^k$ without representing a state-dependent term $b(s)$ common
to all actions. Thus, choosing $b$ to remove such a term can reduce the
actor approximation error compared with fitting $Q_{z_k,h}^k$ directly as in \cite{lin2025optimisticworkshop}.

Furthermore, fitting the actor over every triplet in $\cS\times\cA^2$ can be
infeasible when the state space is large. Adapting the coreset approach
of \citet{lin2025optimisticworkshop} to action differences, we instead
evaluate the actor loss on a fixed, finite set of triplets $\cD_{\rm exp}$.

To state the actor approximation condition, for any triplet $(s,a,a')$, define the action differences
$\Delta\varphi(s,a,a')\triangleq\varphi(s,a)-\varphi(s,a')$ and
$\Delta Q_{z_k,h}^k(s,a,a')\triangleq Q_{z_k,h}^k(s,a)-Q_{z_k,h}^k(s,a')$.

\begin{assumption}[Log-linear actor approximation]
\label{ass:master-actor-oracle}
There exists a fixed finite weighted coreset $(\cD_{\rm exp},\rho_{\rm exp})$, where
$\cD_{\rm exp}\subset\cS\times\cA\times\cA$ and
$\rho_{\rm exp}\in\Delta(\cD_{\rm exp})$, with
\begin{equation*}
G_{\rm exp}
\triangleq
\sum_{\mathclap{(s,a,a')\in\cD_{\rm exp}}}
\rho_{\rm exp}(s,a,a')\Delta\varphi(s,a,a')\Delta\varphi(s,a,a')^\top,
\quad
\kappa_G
\triangleq
\sup_{s,a,a'}\|\Delta\varphi(s,a,a')\|_{G_{\rm exp}^{\dagger}}
<\infty.
\end{equation*}
We require
$\Delta\varphi(s,a,a')\in\operatorname{range}(G_{\rm exp})$ for every $(s,a,a')$.
Moreover, there is an $\epsilon_{\rm bias}\geq0$ such that
\begin{equation}
\max_{\substack{k\in[K-1]\\h\in[H]}}
\inf_{\substack{\omega\in\mathbb R^{d_{\rm a}} \\ b:\cS\to\mathbb R}}\sup_{s,a}
\left|
\langle\varphi(s,a),\omega-\omega_h^k\rangle
-\eta\bigl(Q_{z_k,h}^k(s,a)-b(s)\bigr)
\right|
\leq \eta\epsilon_{\rm bias}.
\label{eq:master-actor-bias}
\end{equation}
\end{assumption}
A detailed comparison explaining
why the Assumption \ref{ass:master-actor-oracle} is weaker than
that of \citet{lin2025optimisticworkshop} is deferred to
Appendix~\ref{app:On actor assumption}. 
The coreset $\cD_{\rm exp}$ and its corresponding weight distribution
$\rho_{\rm exp}$ can be constructed using $G$-optimal design, yielding
$|\cD_{\rm exp}|=\widetilde O(d_{\rm a})$ and
$\kappa_G=O(\sqrt{d_{\rm a}})$
\citep{lattimore2020goodfeatures,lin2025optimisticworkshop};
see Appendix~\ref{app:On actor assumption} for details.
Since this construction is addressed in the literature, we assume for
simplicity that $(\cD_{\rm exp},\rho_{\rm exp})$ is given.

In implementation, we fit differences in $Q$-values between actions,
which eliminate $b$ without estimating it.
In particular, for class $\Pi_{\rm lin}$, $\mathrm{ActorFit}$ computes
$\omega^{k+1}=(\omega_h^{k+1})_{h=1}^H$ by solving
\begin{equation}
\begin{aligned}
\omega_h^{k+1}
\in
\argmin_{w_h \in \mathbb R^{d_{\rm a}}}\sum_{(s,a,a')\in\cD_{\rm exp}}\rho_{\rm exp}(s,a,a')
\left(
\langle\Delta\varphi(s,a,a'),w_h-\omega_h^k\rangle
-\eta \Delta Q_{z_k,h}^k(s,a,a')
\right)^2.
\end{aligned}
\label{eq:explicit-actor-loss}
\end{equation}
Thus, the loss fits the ideal policy update $\widetilde\pi^{k+1}$ within the log-linear class $\Pi_{\rm lin}$, without estimating
the baseline function $b$.
Here, $\epsilon_{\rm bias}$ measures uniform approximation error, while $\kappa_G$ controls its amplification under
coreset fitting.

\begin{proposition}[Log-linear actor certificate]
\label{prop:explicit-actor-certificate}
Under Assumption~\ref{ass:master-actor-oracle}, the $\mathrm{ActorFit}$ update in
\eqref{eq:explicit-actor-loss} satisfies \eqref{eq:master-projection-bound} with
$\epsilon_{\rm act}=2(\kappa_G+1)\epsilon_{\rm bias}$.
\end{proposition}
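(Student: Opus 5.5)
Fix $(k,h)$ and write $g(s,a)\triangleq\langle\varphi(s,a),\omega_h^{k+1}-\omega_h^k\rangle-\eta Q_{z_k,h}^k(s,a)$ for the logit error of the fitted update. Because $\pi_h^k$ is log-linear with logits $\langle\varphi,\omega_h^k\rangle$, we have $\pi_h^{k+1}(a\mid s)\propto\widetilde\pi_h^{k+1}(a\mid s)\,e^{g(s,a)}$. The plan has two steps. First, show that the action differences $g(s,a)-g(s,a')$ are uniformly at most $2(\kappa_G+1)\eta\epsilon_{\rm bias}$. Second, convert this logit-difference bound into the KL bound \eqref{eq:master-projection-bound}.

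\textbf{KL step.} Let $\pi^\star=\pi^\star_{\tau,h}(\cdot\mid s)$. A direct computation gives
\[
\KL(\pi^\star\|\pi_h^{k+1})-\KL(\pi^\star\|\widetilde\pi_h^{k+1})
=\log\mathbb E_{a'\sim\widetilde\pi_h^{k+1}}\bigl[e^{g(s,a')}\bigr]-\mathbb E_{a\sim\pi^\star}\bigl[g(s,a)\bigr].
\]
The right-hand side is at most $\max_{a'}g(s,a')-\min_a g(s,a)=\sup_{a,a'}\bigl(g(s,a')-g(s,a)\bigr)$. Any state-dependent shift of $g$ cancels in this expression, so only the action differences of $g$ matter. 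It therefore suffices to prove $|g(s,a)-g(s,a')|\le 2(\kappa_G+1)\eta\epsilon_{\rm bias}$ for all $(s,a,a')$.

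\textbf{Regression step.} By Assumption~\ref{ass:master-actor-oracle}, for any slack $\varepsilon'>0$ there are $\bar\omega$ and $b$ whose residual $e(s,a)\triangleq\langle\varphi(s,a),\bar\omega-\omega_h^k\rangle-\eta(Q_{z_k,h}^k(s,a)-b(s))$ satisfies $\sup|e|\le\eta\epsilon_{\rm bias}+\varepsilon'$. The baseline cancels in action differences, so the difference targets satisfy
\[
\eta\,\Delta Q(s,a,a')=\langle\Delta\varphi(s,a,a'),\bar\omega-\omega_h^k\rangle-\Delta e(s,a,a'),\qquad |\Delta e|\le 2(\eta\epsilon_{\rm bias}+\varepsilon').
\]
The problem \eqref{eq:explicit-actor-loss} is weighted least squares in $v=w_h-\omega_h^k$. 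Any minimiser satisfies the normal equations $G_{\rm exp}v=\sum\rho_{\rm exp}\Delta\varphi\,\eta\Delta Q$. Write $\hat v$ for the fitted increment $\omega_h^{k+1}-\omega_h^k$ and $\bar v=\bar\omega-\omega_h^k$. Subtracting $G_{\rm exp}\bar v$ gives
\[
G_{\rm exp}(\hat v-\bar v)=-\sum\rho_{\rm exp}\,\Delta\varphi\,\Delta e.
\]
The right-hand side lies in $\operatorname{range}(G_{\rm exp})$. By the range assumption, every test direction $\Delta\varphi(s,a,a')$ also lies in $\operatorname{range}(G_{\rm exp})$, so $\langle\Delta\varphi,\hat v-\bar v\rangle=\langle\Delta\varphi,G_{\rm exp}^\dagger G_{\rm exp}(\hat v-\bar v)\rangle$. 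This identity holds even if the minimiser is not unique, since minimisers differ only by elements of $\ker G_{\rm exp}$, which are orthogonal to every $\Delta\varphi$.

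\textbf{Main obstacle.} The key estimate is a Cauchy–Schwarz bound in the $G_{\rm exp}^\dagger$-geometry:
\[
\bigl|\langle\Delta\varphi(s,a,a'),\hat v-\bar v\rangle\bigr|
=\Bigl|\textstyle\sum\rho_{\rm exp}\,\Delta\varphi(s,a,a')^\top G_{\rm exp}^\dagger\Delta\varphi_j\,\Delta e_j\Bigr|
\le\|\Delta\varphi(s,a,a')\|_{G_{\rm exp}^\dagger}\Bigl(\textstyle\sum\rho_{\rm exp}\Delta e_j^2\Bigr)^{1/2}
\le 2\kappa_G(\eta\epsilon_{\rm bias}+\varepsilon').
\]
Here the middle inequality uses the identity $\sum_j\rho_j(\Delta\varphi^\top G^\dagger\Delta\varphi_j)^2=\Delta\varphi^\top G^\dagger G G^\dagger\Delta\varphi=\|\Delta\varphi\|_{G^\dagger}^2$, and I would verify this identity carefully. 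We then write $g(s,a)-g(s,a')=\langle\Delta\varphi,\hat v-\bar v\rangle+\Delta e(s,a,a')$. This gives
\[
|g(s,a)-g(s,a')|\le 2\kappa_G(\eta\epsilon_{\rm bias}+\varepsilon')+2(\eta\epsilon_{\rm bias}+\varepsilon').
\]
Letting $\varepsilon'\to0$ yields $2(\kappa_G+1)\eta\epsilon_{\rm bias}$. Combined with the KL step, this gives \eqref{eq:master-projection-bound} with $\epsilon_{\rm act}=2(\kappa_G+1)\epsilon_{\rm bias}$.
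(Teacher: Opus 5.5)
Your proof is correct and follows essentially the same route as the paper. You reduce the KL gap to the action-difference spread of the logit residual, apply the normal equations against an $\eta\epsilon_{\rm bias}$-approximate comparator, transfer the coreset error through $\kappa_G$ using the range condition, and let the slack vanish. The only difference is cosmetic: the paper first bounds $\|\omega_h^{k+1}-\omega_h^{k,0}\|_{G_{\rm exp}}$ by testing the normal equation against the error and then applies a pseudoinverse Cauchy--Schwarz, whereas you apply Cauchy--Schwarz directly to the $\rho_{\rm exp}$-weighted sum via $G_{\rm exp}^\dagger G_{\rm exp}G_{\rm exp}^\dagger=G_{\rm exp}^\dagger$, and that identity is valid.
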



The actor error $\epsilon_{\rm act}$ measures how accurately the log-linear actor 
represents and fits the ideal update.  It can decrease with a richer actor class 
and vanishes under exact representability and fitting, 
as for tabular actors with independent state-action logits, where $\epsilon_{\rm act}=0$.

\paragraph{Last-iterate convergence of Algorithm~\ref{alg:master-rpgpd}.}
Combining the abstract critic condition with the actor projection bound yields
a geometric recursion for the master potential. Under the additional Slater
and fixed-domain conditions, the same result converts this potential bound
into last-iterate reward suboptimality and constraint-violation guarantees
for the original CMDP.

\begin{theorem}[General bound for the master algorithm]
\label{thm:master-recursion}
Fix $K\geq 1$. Let
$
G_\tau\triangleq\sqrt{I}(H+\tau\lambda_{\max}).
$
Under Assumption~\ref{ass:master-ope}, suppose that $\mathrm{ActorFit}$
satisfies \eqref{eq:master-projection-bound} and $0<\eta\tau\leq1$.
Then, with probability at least $1-\delta$, 
Algorithm~\ref{alg:master-rpgpd} satisfies
\begin{equation}
\Phi_K
\leq
e^{-\eta\tau(K-1)}\Phi_1
+
\sum_{k=1}^{K-1}
(1-\eta\tau)^{K-1-k}
\left[
\frac{\eta^2}{2}
\bigl(HC_{\eta,\tau,\Lambda,k}+G_\tau^2\bigr)
+
\eta(1+I\lambda_{\max})\beta_k
+
\eta H\epsilon_{\rm act}
\right].
\label{eq:master-recursion}
\end{equation}
Moreover, if Assumption~\ref{ass:slater} holds, $0<\tau\leq1$, and
$\lambda_{\max}=2H\log(e|\cA|)/\xi$, then, for every $i\in[I]$,
\begin{equation}
\begin{aligned}
V_r^{\pi^\star}-V_r^{\pi^K}
\leq
H^{3/2}\sqrt{2\Phi_K}
+\tau H\log|\cA|,
\quad
c_i-V_{u_i}^{\pi^K}
\leq
H^{3/2}\sqrt{2\Phi_K}
+\frac{\tau H(1+\tau\log|\cA|)}{\xi}.
\end{aligned}
\label{eq:master-regularisation-conversion}
\end{equation}
\end{theorem}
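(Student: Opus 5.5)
Throughout, we work on the probability-$(1-\delta)$ event of Assumption~\ref{ass:master-ope} and aim for a one-step inequality
\[
\Phi_{k+1}\le(1-\eta\tau)\Phi_k+\tfrac{\eta^2}{2}\bigl(HC_{\eta,\tau,\Lambda,k}+G_\tau^2\bigr)+\eta(1+I\lambda_{\max})\beta_k+\eta H\epsilon_{\rm act}.
\]
Unrolling it gives \eqref{eq:master-recursion}, using $(1-\eta\tau)^{K-1}\le e^{-\eta\tau(K-1)}$.

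\textbf{Primal step.} Fix $(h,s)$. Write $\widetilde\pi_h^{k+1}\propto\pi_h^k e^{\eta Q^k_{z_k,h}}$ and $Z_{k,h}(s)$ for its normaliser. The mirror-descent identity gives
\[
\KL(\pi^\star_{\tau,h}\|\widetilde\pi^{k+1}_h)=\KL_{k,h}(s)-\eta\langle\pi^\star_{\tau,h},Q^k_{z_k,h}\rangle+\log Z_{k,h}(s).
\]
We bound $\log Z\le\eta\langle\pi^k_h,Q^k\rangle+\frac{\eta^2}{2}\sum_a\pi^k_h e^{\eta Q^k}(Q^k)^2$ via $e^x\le1+x+\frac{x^2}{2}e^{x_+}$ and $\log(1+y)\le y$. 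By \eqref{eq:master-local-norm} the quadratic term is at most $\frac{\eta^2}{2}C_{\eta,\tau,\Lambda,k}$. Adding \eqref{eq:master-projection-bound}, taking $\mathbb E_{s\sim d_h^{\pi^\star_\tau}}$ and summing over $h$, we are left with the term $-\eta\sum_h\mathbb E_{d_h^{\pi^\star_\tau}}\langle\pi^\star_{\tau,h}-\pi^k_h,Q^k_{z_k,h}\rangle$.

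For this term we use a performance-difference lemma with the estimated $Q^k$ in place of the true one. Optimism (i) gives $Q^k_h\ge z_{k,h}+P_hV^k_{h+1}$. Telescoping along $\pi^\star_\tau$ then yields
\[
\sum_h\mathbb E_{d^{\pi^\star_\tau}_h}\langle\pi^\star_{\tau,h}-\pi^k_h,Q^k_h\rangle\ge V^{\pi^\star_\tau}_{z_k}-V^k_{z_k}.
\]
Consistency (ii) replaces $V^k_{z_k}$ by $V^{\pi^k}_{z_k}$ at cost $\beta_k$. The key observation is that $z_k$ carries $\tau\psi_k=-\tau\log\pi^k$. Hence
\[
V^{\pi^\star_\tau}_{z_k}=V^{\pi^\star_\tau}_{r+\lambda_k^\top u}+\tau V^{\pi^\star_\tau}_{\psi^{\pi^\star_\tau}}+\tau\sum_h\mathbb E_{d^{\pi^\star_\tau}_h}\KL_{k,h}.
\]
This gives $V^{\pi^\star_\tau}_{z_k}-V^{\pi^k}_{z_k}=L_\tau(\pi^\star_\tau,\lambda_k)-L_\tau(\pi^k,\lambda_k)+\tau\sum_h\mathbb E\KL_{k,h}$, since the $\tfrac{\tau}{2}\|\lambda_k\|^2$ and $c$ terms cancel. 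The last piece is the source of the $(1-\eta\tau)$ factor on the KL part.

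\textbf{Dual step.} Projection onto $\Lambda$ is nonexpansive and $\lambda^\star_\tau\in\Lambda$. The update is a gradient step on the $\tau$-strongly convex map $\lambda\mapsto L_\tau(\pi^k,\lambda)$ with gradient estimate $V^k_u-c+\tau\lambda_k$, whose norm is at most $G_\tau$. Expanding the square,
\[
\tfrac12\|\lambda_{k+1}-\lambda^\star_\tau\|^2\le\tfrac{1-\eta\tau}{2}\|\lambda_k-\lambda^\star_\tau\|^2-\eta\bigl(L_\tau(\pi^k,\lambda_k)-L_\tau(\pi^k,\lambda^\star_\tau)\bigr)+\tfrac{\eta^2}{2}G_\tau^2+\eta\|\lambda_k-\lambda^\star_\tau\|_1\beta_k.
\]
The last term comes from $V^k_u$ versus $V^{\pi^k}_u$ and is at most $\eta I\lambda_{\max}\beta_k$. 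Summing the primal and dual inequalities, the Lagrangian gaps combine into $L_\tau(\pi^\star_\tau,\lambda_k)-L_\tau(\pi^k,\lambda^\star_\tau)$. This is nonnegative by the saddle-point property, because $L_\tau(\pi^\star_\tau,\lambda_k)\ge L_\tau(\pi^\star_\tau,\lambda^\star_\tau)\ge L_\tau(\pi^k,\lambda^\star_\tau)$, so it can be dropped.

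\textbf{Main obstacle.} The delicate part is the sign and bookkeeping of the optimistic performance-difference step. Optimism must be used only in the direction that upper-bounds $V^k$ relative to the comparator, and the entropy of $\pi^\star_\tau$ must cancel correctly against $L_\tau$. We will check this carefully, including the convention that $\psi$ enters $L_\tau$ as a reward.

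\textbf{Conversion to \eqref{eq:master-regularisation-conversion}.} First, Pinsker's inequality and the performance-difference lemma give $|V^{\pi}_{g}-V^{\pi^\star_\tau}_{g}|\le H\sum_h\mathbb E_{d^{\pi^\star_\tau}_h}\|\pi_h-\pi^\star_{\tau,h}\|_1\le H^{3/2}\sqrt{2\Phi_K}$ for $g\in\{r,u_i\}$, using Cauchy--Schwarz over $h$. It therefore suffices to compare $\pi^\star_\tau$ with $\pi^\star$. For the reward, $L_\tau(\pi^\star_\tau,\lambda^\star_\tau)\ge L_\tau(\pi^\star,\lambda^\star_\tau)$ together with complementary slackness and entropy at most $H\log|\cA|$ gives the $\tau H\log|\cA|$ term. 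For the constraints, the dual optimality condition gives $[c_i-V^{\pi^\star_\tau}_{u_i}]_+\le\tau\lambda^\star_{\tau,i}$ whenever $\lambda^\star_\tau$ is interior. Slater's condition applied to the regularised Lagrangian gives $\|\lambda^\star_\tau\|_1\le H(1+\tau\log|\cA|)/\xi<\lambda_{\max}$, which ensures the box constraint is inactive; this is the role of the choice $\lambda_{\max}=2H\log(e|\cA|)/\xi$. Together these give the $\tau H(1+\tau\log|\cA|)/\xi$ term.
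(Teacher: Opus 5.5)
Your proposal is correct and follows the paper's proof essentially step for step. The shared steps are: the ideal exponentiated step with the log-normaliser bound; optimistic Bellman telescoping along $\pi_\tau^\star$ together with the entropy identity, which extracts the $(1-\eta\tau)$ factor on the KL part; the projected dual step with the $\|\lambda_k-\lambda_\tau^\star\|_1\beta_k$ correction; dropping the nonnegative saddle gap; and, for the conversion, the Slater bound $\|\lambda_\tau^\star\|_1\le H(1+\tau\log|\mathcal A|)/\xi<\lambda_{\max}$, which keeps the dual optimality condition unclipped.

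One detail is missing. To invoke \eqref{eq:master-local-norm}, which involves $e^{\eta Q}$ rather than your $e^{(\eta Q)_+}$, you need $Q_{z_k,h}^k\ge0$. This follows from optimism and $z_k\ge0$ by backward induction from $Q_{z_k,H+1}^k\equiv0$, as the paper notes.
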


The key distinction from 
\citet[Lemma~4.2]{müller2024trulynoregretlearningconstrained}
is that our choice
$\lambda_{\max}=2H\log(e|\cA|)/\xi$, independent of the target
accuracy $\epsilon$, gives the strict inequality
$\lambda_{\tau,i}^\star<\lambda_{\max}$ for every $i\in[I]$.
Therefore, dual optimality yields
$[c_i-V_{u_i}^{\pi_\tau^\star}]_+
=\tau\lambda_{\tau,i}^\star=O(\tau)$.
Thus, our conversion keeps $\lambda_{\max}$ fixed and permits
$\tau=\Theta(\epsilon)$, which is the primary source of the sharper $\epsilon$
dependence later in Section \ref{section:linear-cmdp-explicit-actor} and \ref{section:general-function-approximation}.
By contrast, \citet{müller2024trulynoregretlearningconstrained}'s radius-dependent bound balances terms
of order $1/\lambda_{\max}$ and $\tau\lambda_{\max}$, requiring
$\lambda_{\max}=\Theta(\epsilon^{-1})$ and
$\tau=\Theta(\epsilon^2)$ to attain $O(\epsilon)$ error.
Remark~\ref{rem:fixed-domain-conversion} gives the
detailed comparison, and Appendix~\ref{app:accuracy-improvement} records its
sample-complexity consequences.

Environment structure and sampling enter the recursion only through the
oracle certificates $\beta_k$, $C_{\eta,\tau,\Lambda,k}$, and
$\epsilon_{\rm act}$. The conversion bounds in Theorem~\ref{thm:master-recursion} then translate
a bound on $\Phi_K$ into last-iterate reward and constraint guarantees
for the original CMDP. Unlike the tabular model-based framework of
\citet{müller2024trulynoregretlearningconstrained}, this analysis supports
model-free evaluation with linear or general function approximation and
either on- or off-policy sampling, provided the oracle assumptions hold.

\section{Last-Iterate Convergence for Linear CMDPs}
\label{section:linear-cmdp-explicit-actor}
This section instantiates the policy-evaluation interface of
Section~\ref{section:master-rpgpd} under linear-CMDP realizability. We first
give a linear policy-evaluation routine shared by both sampling modes, followed
by joint oracle and last-iterate guarantees for fresh on-policy batches and
cumulative off-policy data.
Throughout this section, we work under the following linear-CMDP realizability assumption.
\begin{assumption}[Linear CMDP]
\label{ass:linear-cmdp}
There exist a known feature map \(\phi : \mathcal{S}\times\mathcal{A}\to\mathbb{R}^{d_{\rm c}}\),
signed measures \(\{\mu_h\}_{h=1}^H\) with \(\mu_h(\cdot)\in\mathbb{R}^{d_{\rm c}}\),
reward parameters \(\{\upsilon_{r,h}\}_{h=1}^H\subset\mathbb{R}^{d_{\rm c}}\), and utility parameters
\(\{\upsilon_{u_i,h}\}_{i \in [I],\, h \in [H]} \subset \mathbb{R}^{d_{\rm c}}\) such that for all
\((s,a,h)\),
$
P_h(\cdot \mid s,a) = \langle \phi(s,a), \mu_h(\cdot) \rangle,
\;
r_h(s,a) = \langle \phi(s,a), \upsilon_{r,h} \rangle,
\;
u_{i,h}(s,a) = \langle \phi(s,a), \upsilon_{u_i,h} \rangle .
$
Moreover, for all \((s,a,h)\),
$
\|\phi(s,a)\|_2 \le 1,
\;
\|\upsilon_{r,h}\|_2 \le \sqrt{d_{\rm c}},
\;
\|\upsilon_{u_i,h}\|_2 \le \sqrt{d_{\rm c}},
\;
\left\|
\left(
|\mu_h^{(1)}|(\mathcal S),\ldots,|\mu_h^{(d_{\rm c})}|(\mathcal S)
\right)
\right\|_2
\leq\sqrt{d_{\rm c}}.
$
\end{assumption}

\paragraph{Linear policy-evaluation oracle.}
The policy-evaluation computation is shared by both sampling modes. Given a trajectory dataset
$\cD$, let $\cD_h$ denote its stage-$h$ transitions and let $\Sigma_h^k$ be their regularised
empirical covariance, as specified in Algorithm~\ref{alg:explicit-linear-ope}. Initialize
$V_{r,H+1}^k=V_{u_i,H+1}^k=V_{\psi,H+1}^k=0$. Index the samples by $\ell$ and set
$y_{r,h}^\ell=r_h^\ell$, $y_{u_i,h}^\ell=u_{i,h}^\ell$, and $y_{\psi,h}^\ell=0$. Then, for
$j\in\{r,u_1,\ldots,u_I,\psi\}$, the ridge-regression coefficients are

\begin{equation}
\widehat\theta_{j,h}^k
=(\Sigma_h^k)^{-1}
\sum_{\ell=1}^{|\cD_h|}
\phi(s_h^\ell,a_h^\ell)
\bigl(y_{j,h}^\ell+V_{j,h+1}^k(s_{h+1}^\ell)\bigr).
\label{eq:explicit-linear-ope-regressions}
\end{equation}

Algorithm~\ref{alg:explicit-linear-ope} summarizes the resulting backward recursion. At each stage,
it evaluates the coefficients in \eqref{eq:explicit-linear-ope-regressions}, adds elliptical bonuses,
truncates the component critics to their natural ranges, and propagates their policy averages. The
reward, utility, and entropy components are estimated separately before being combined using
$\lambda_k$ and $\tau$. In particular, the immediate entropy cost $\psi_{k,h}$ is known from
$\pi^k$, so only its value function $V_{\psi,h}^k$ is estimated.

\begin{algorithm}[t]
\caption{Linear $\mathrm{OPE}(\pi^k,\lambda_k,\cD)$}
\label{alg:explicit-linear-ope}
\begin{algorithmic}[1]
\STATE \textbf{Input}: $\pi^k,\lambda_k$, a trajectory dataset $\cD$, and confidence radii
$\alpha_r,\alpha_u,\alpha_\psi$.
\STATE Set $V_{r,H+1}^k=V_{u_i,H+1}^k=V_{\psi,H+1}^k=0$ for every $i\in[I]$.
\FOR{$h=H,H-1,\ldots,1$}
\STATE Set
$\Sigma_h^k\leftarrow I_{d_{\rm c}}+\sum_{(s_h,a_h,r_h,u_h,s_{h+1})\in\cD_h}
\phi(s_h,a_h)\phi(s_h,a_h)^\top$.
\STATE Compute the regression coefficients in \eqref{eq:explicit-linear-ope-regressions}.

\STATE Set $\alpha_{u_i}\equiv\alpha_u$ for $i\in[I]$ and
$b_{j,h}^k(s,a)=\alpha_j\|\phi(s,a)\|_{(\Sigma_h^k)^{-1}}$ for
$j\in\{r,u_1,\ldots,u_I,\psi\}$. For $j\in\{r,u_1,\ldots,u_I\}$, define
\begin{equation}
\begin{aligned}
Q_{j,h}^k
&=\mathrm{Truncate}_{[0,H+1-h]}
\bigl(\langle\phi,\widehat\theta_{j,h}^k\rangle+b_{j,h}^k\bigr),
\\
Q_{\psi,h}^k(s,a)
&=\psi_{k,h}(s,a)
+\mathrm{Truncate}_{[0,(H-h)\log|\cA|]}
\bigl(\langle\phi(s,a),\widehat\theta_{\psi,h}^k\rangle+b_{\psi,h}^k(s,a)\bigr).
\end{aligned}
\label{eq:explicit-linear-ope-components}
\end{equation}
\STATE Set $V_{j,h}^k(s)=\mathbb E_{a\sim\pi_h^k(\cdot\mid s)}[Q_{j,h}^k(s,a)]$ for
$j\in\{r,u_1,\ldots,u_I,\psi\}$.
\ENDFOR
\STATE Return $Q_{z_k,h}^k=Q_{r,h}^k+\lambda_k^\top Q_{u,h}^k+\tau Q_{\psi,h}^k$ and
$V_u^k=(V_{u_1,1}^k(s_1),\ldots,V_{u_I,1}^k(s_1))$.
\end{algorithmic}
\end{algorithm}

\paragraph{On- and off-policy guarantees.}
The same linear oracle supports both modes. At iteration $k$, on-policy
evaluation uses $N$ fresh trajectories, whereas off-policy evaluation appends
one trajectory to the cumulative dataset. The following lemma gives the
corresponding certificates.
\begin{lemma}[Linear policy-evaluation certificates]
\label{lem:explicit-linear-ope-certificates}
Under Assumptions~\ref{ass:master-actor-oracle} and
\ref{ass:linear-cmdp}, suppose $0<\tau\leq1$, $0<\eta\tau\leq1/4$, and
$
\eta H(1+I\lambda_{\max}+\tau\log|\cA|)\leq1/4.
$
For each sampling mode, there exist mode-specific confidence radii such that
Algorithm~\ref{alg:explicit-linear-ope} satisfies
Assumption~\ref{ass:master-ope} with probability at least $1-\delta$.
For the on-policy mode,
\begin{equation}
\sup_{k<K}\beta_k
=
\widetilde O\left(
\bigl(1+I\lambda_{\max}+\tau\log|\cA|\bigr)
\frac{d_{\rm c}^{3/2}H^2}{\sqrt N}
\right).
\label{eq:explicit-linear-on-policy-ope-rate}
\end{equation}
For the off-policy mode,
\begin{equation}
\sum_{k=1}^{K-1}\beta_k
=
\widetilde O\left(
\bigl(1+I\lambda_{\max}+\tau\log|\cA|\bigr)H^2
\sqrt{d_{\rm c}(d_{\rm c}^2+d_{\rm a})K}
\right).
\label{eq:explicit-linear-off-policy-ope-rate}
\end{equation}
Moreover, in both modes,
$
\sup_{k<K}C_{\eta,\tau,\Lambda,k}
=
\widetilde O\left(
H^2\bigl(1+I\lambda_{\max}+\tau\log|\cA|\bigr)^2
\right).
\label{eq:explicit-linear-common-second-moment-rate}
$
\end{lemma}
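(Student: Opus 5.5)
We follow the standard LSVI-UCB template of \citet{jin2020provably}, run componentwise for $j\in\{r,u_1,\ldots,u_I,\psi\}$. We first establish a uniform self-normalised concentration bound. The main task is to bound, for every $(k,h)$ and every $(s,a)$,
\[
\bigl|\langle\phi(s,a),\widehat\theta_{j,h}^k\rangle-(\Bell_{j,h}^{\pi^k}Q_{j,h+1}^k)(s,a)\bigr|\leq\alpha_j\|\phi(s,a)\|_{(\Sigma_h^k)^{-1}} .
\]
By Assumption~\ref{ass:linear-cmdp}, the Bellman target $r_h+P_hV_{j,h+1}^k$ is linear in $\phi$, with a parameter of norm $O(H\sqrt{d_{\rm c}})$; for $j=\psi$ the bound is $O(H\log|\cA|\sqrt{d_{\rm c}})$. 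The ridge error therefore splits into a regularisation bias of that size plus the martingale term $\|\sum_\ell\phi^\ell(V_{j,h+1}^k(s_{h+1}^\ell)-P_hV_{j,h+1}^k)\|_{(\Sigma_h^k)^{-1}}$.

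The main obstacle is controlling this martingale term. The next-stage value $V_{j,h+1}^k$ depends on the data, through $\widehat\theta$ and $\Sigma$, and also on $\pi^k$. Here Assumption~\ref{ass:master-actor-oracle} enters. Since $\omega^k$ is produced by ActorFit, $\pi^k_{h+1}$ is a log-linear policy with parameter in $\mathbb R^{d_{\rm a}}$. So $V_{j,h+1}^k$ lies in a class indexed by the following parameters:
\begin{enumerate}
\item[(a)] a vector $\theta\in\mathbb R^{d_{\rm c}}$;
\item[(b)] a matrix $\Sigma^{-1}$ with $d_{\rm c}^2$ entries;
\item[(c)] a scalar radius;
\item[(d)] $\omega_{h+1}\in\mathbb R^{d_{\rm a}}$, with norm bounded polynomially in $K,\eta,H$ via the closed form of \eqref{eq:explicit-actor-loss} and $\kappa_G$.
\end{enumerate}
For $j=\psi$ the class also includes the entropy term $-\log\pi_{h+1}$. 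The softmax map is Lipschitz in $\omega$ and $\log\pi$ is $2$-Lipschitz in the logits, so the class has $\varepsilon$-log-covering number $\widetilde O(d_{\rm c}^2+d_{\rm a})$.

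The two sampling modes need separate concentration arguments. For the off-policy mode (cumulative data, adaptively chosen policies), we apply the self-normalised bound of Abbasi-Yadkori with a union bound over the net. This gives
\[
\alpha_j=\widetilde O\bigl(H_j\sqrt{d_{\rm c}(d_{\rm c}^2+d_{\rm a})}\bigr),
\]
where $H_r=H_u=H$ and $H_\psi=H\log|\cA|$. For the on-policy mode, the batch is fresh and independent of $\pi^k$ given the past. However, $V_{j,h+1}^k$ is still fitted on the same batch, so we again cover, but only over the critic parameters. This yields a radius $\alpha_j=\widetilde O(H_jd_{\rm c})$ together with a union bound over $k<K$.

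With this concentration in hand, we verify the three conditions of Assumption~\ref{ass:master-ope}.

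\textbf{Optimism (i).} Optimism holds componentwise: adding the bonus gives $\langle\phi,\widehat\theta\rangle+b\geq\Bell Q_{j,h+1}$. Truncation preserves this, since $\Bell Q_{j,h+1}$ already lies in the truncation range. Combining with $\lambda_k\geq0$ and $\tau>0$, and using that $\psi_{k,h}$ is added exactly, we get $Q_{z_k,h}^k\geq\Bell_{z_k,h}^{\pi^k}Q_{z_k,h+1}^k$.

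\textbf{Consistency (ii).} We use the performance-difference/telescoping identity under $\pi^k$:
\[
V_{j}^k-V_j^{\pi^k}=\sum_h\mathbb E_{d_h^{\pi^k}}\bigl[Q_{j,h}^k-\Bell Q_{j,h+1}^k\bigr]\in\Bigl[0,\,2\sum_h\mathbb E_{d_h^{\pi^k}}b_{j,h}^k\Bigr].
\]
This gives both the one-sided bound for $z_k$ and the two-sided bound for $u_i$. We then set $\beta_k=2(\alpha_r+I\lambda_{\max}\alpha_u+\tau\alpha_\psi)\sum_h\mathbb E_{d_h^{\pi^k}}\|\phi\|_{(\Sigma_h^k)^{-1}}$.

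For the on-policy mode, $\Sigma_h^k$ is built from $N$ i.i.d.\ draws from $d_h^{\pi^k}$. A matrix concentration argument gives $\mathbb E\|\phi\|_{(\Sigma_h^k)^{-1}}\leq\widetilde O(\sqrt{d_{\rm c}/N})$. Summing over $h$ and multiplying by $\alpha=\widetilde O(Hd_{\rm c})$ yields \eqref{eq:explicit-linear-on-policy-ope-rate}.

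For the off-policy mode, we compare expectations with the realised trajectory $k$ via Azuma, then apply the elliptical potential lemma:
\[
\sum_k\|\phi_h^k\|_{(\Sigma_h^k)^{-1}}\leq\sqrt{2d_{\rm c}K\log(1+K/d_{\rm c})}.
\]
Summing over $h$ and multiplying by the off-policy radius gives \eqref{eq:explicit-linear-off-policy-ope-rate}. The $\min(1,\cdot)$ issue in the potential lemma is absorbed because the features are bounded and truncation is applied.

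\textbf{Boundedness (iii).} Truncation gives $0\leq V_{u_i}^k\leq H$ and $|Q_{z_k,h}^k-\tau\psi_{k,h}|\leq H(1+I\lambda_{\max}+\tau\log|\cA|)=:B$. The step-size condition ensures $\eta B\leq1/4$, so $e^{\eta(Q-\tau\psi)}\leq e^{1/4}$. The entropy part contributes the factor
\[
\sum_a\pi\, e^{-\eta\tau\log\pi}(\tau\log\pi)^2=\sum_a\pi^{1-\eta\tau}\tau^2\log^2\pi .
\]
Since $\eta\tau\leq1/4$, this is bounded by $O(\tau^2)$ times a $|\cA|$-polylog factor, because $\sup_{x\in(0,1]}x^{3/4}\log^2x$ is a constant. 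Using $(x+y)^2\leq2x^2+2y^2$ then gives $C_{\eta,\tau,\Lambda,k}=\widetilde O(B^2)$.

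Finally, we union-bound all events to reach total probability $1-\delta$.
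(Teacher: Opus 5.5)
Your overall architecture matches the paper's: componentwise optimistic ridge regression, a critic-only cover for fresh batches, a critic-plus-actor cover for cumulative data, telescoping to $2\sum_h\mathbb E_{d_h^{\pi^k}}[b_{j,h}^k]$, covariance concentration on-policy, and Azuma plus the elliptical potential off-policy. However, the boundedness step does not prove the stated bound. You use only $\eta\tau\le1/4$ and $\sup_{x\in(0,1]}x^{3/4}\log^2x<\infty$. This gives $p^{1-\eta\tau}\log^2(1/p)\le c\,p^{1/4-\eta\tau}$, and your estimate then sums over actions to as much as $c|\cA|^{3/4+\eta\tau}$ (take $\pi_h^k(\cdot\mid s)$ uniform). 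Likewise, the $B^2$ part carries $\sum_a p_a^{1-\eta\tau}\le|\cA|^{\eta\tau}$; this is tight and equals $|\cA|^{1/4}$ at $\eta\tau=1/4$. Even the exact entropy moment at the uniform policy, $|\cA|^{\eta\tau}\log^2|\cA|$, is polynomial in $|\cA|$ under $\eta\tau\le1/4$ alone. Polynomial factors in $|\cA|$ are not hidden by $\widetilde O$, and removing exactly such a factor is what the paper claims over the prior tabular analysis. The missing ingredient is the second step-size condition. The bound $\eta H(1+I\lambda_{\max}+\tau\log|\cA|)\le1/4$ gives $\eta\tau\log|\cA|\le1/4$, hence $|\cA|^{\eta\tau}\le e^{1/4}$. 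The log-moment must then be taken at scale $t=1/\log(e|\cA|)$. Using $\log^2(1/p)\le4e^{-2}t^{-2}p^{-t}$ and concavity gives $\sum_ap_a^{1-\eta\tau}\log^2(1/p_a)\le C\log^2(e|\cA|)\,|\cA|^{\eta\tau+t}\le C'\log^2(e|\cA|)$ for $|\cA|\ge2$.

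The off-policy branch needs three further corrections.

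First, $\Sigma_h^k$ already contains trajectory $k$. So $\mathbb E_{d_h^{\pi^k}}\|\phi\|_{(\Sigma_h^k)^{-1}}$ is not the conditional expectation of $\|\phi_h^k\|_{(\Sigma_h^k)^{-1}}$ given the past, and your Azuma comparison is not a martingale argument. Use $\Sigma_h^k\succeq\Sigma_h^{k-1}$ to dominate the bonus by the predictable width built from $\Sigma_h^{k-1}$. Azuma and the elliptical potential then apply, with $\|\phi_h^k\|_{(\Sigma_h^{k-1})^{-1}}\le1$ because $\Sigma_h^{k-1}\succeq I$; truncation plays no role.

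Second, the self-normalised bound over a net of log-size $\widetilde O(d_{\rm c}^2+d_{\rm a})$ adds $d_{\rm c}$ to that log-size, not multiplies it. This gives $\alpha_j=\widetilde O(H_j\sqrt{d_{\rm c}^2+d_{\rm a}})$. Your radius $\widetilde O(H_j\sqrt{d_{\rm c}(d_{\rm c}^2+d_{\rm a})})$, multiplied by the width sum $\widetilde O(H\sqrt{d_{\rm c}K})$, overshoots the claimed rate by a factor $\sqrt{d_{\rm c}}$.

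Third, $\kappa_G$ does not bound $\|\omega_h^k\|_2$, because it does not control $\|G_{\rm exp}^\dagger\|_{\rm op}$. For example, if the only nonzero differences are $\pm(1,\pm\epsilon)$ with equal coreset weights, then $\kappa_G=\sqrt2$, but an $O(1)$ target gap between the two triplets forces a fitted coordinate of order $1/\epsilon$. The argmin in ActorFit is also non-unique along $\ker G_{\rm exp}$. This is repairable at a logarithmic cost, but the paper avoids parameter norms altogether. The fitted-logit identity contracts log-odds by $(1-\eta\tau)$ per step, giving the uniform radius $L_{\rm act}$. The centred logits $f_{\omega,h}$ then form a sup-norm ball in a function space of dimension at most $d_{\rm a}$, with log-cover $d_{\rm a}\log(1+2L_{\rm act}/\varepsilon)$.

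Finally, on-policy, your matrix concentration must be the relative (whitened) form $N\Gamma_{k,h}\preceq2\sum_n\phi_h^{k,n}(\phi_h^{k,n})^\top+C\iota_{\rm on}I$. An additive operator-norm deviation bound in general yields only an $N^{-1/4}$ width.
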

\begin{theorem}[On- and off-policy linear-CMDP guarantees]
\label{thm:explicit-linear-last-iterate}
\label{thm:explicit-on-policy-last-iterate}
\label{thm:explicit-final-last-iterate}
Fix $\epsilon,\delta\in(0,1)$. Under Assumptions~\ref{ass:slater},
\ref{ass:linear-cmdp}, and \ref{ass:master-actor-oracle}, run
Algorithm~\ref{alg:master-rpgpd} with Algorithm~\ref{alg:explicit-linear-ope} in either sampling
mode and use the corresponding parameter choices in the appendix. Then, with probability at least
$1-\delta$,
\begin{equation}
\max\left\{
\bigl[V_r^{\pi^\star}-V_r^{\pi^K}\bigr]_+,
\max_{i\in[I]}\bigl[c_i-V_{u_i}^{\pi^K}\bigr]_+
\right\}
\leq\widetilde O\left(
\epsilon+
\frac{H^{5/3}}{\xi^{1/3}}\epsilon_{\rm act}^{1/3}
\right).
\label{eq:explicit-linear-final-error}
\end{equation}
For the on-policy mode, use the parameter choices in
\eqref{eq:explicit-on-policy-final-parameters} of
Appendix~\ref{app:explicit-on-policy-final-rate}. The total number of on-policy episode rollouts
satisfies
\begin{equation}
NK
=\widetilde O\left(
\frac{d_{\rm c}^3I^6H^{22}}{\xi^6\epsilon^6}
\max\left\{1,\min\left\{
\frac{H^4}{\xi^4\epsilon^4},
\frac{1}{H^{8/3}\xi^{8/3}
\epsilon_{\rm act}^{4/3}}
\right\}\right\}
\right).
\label{eq:explicit-on-policy-sample-complexity}
\end{equation}
For the off-policy mode, use the parameter choices in
\eqref{eq:explicit-final-parameters} of Appendix~\ref{app:explicit-off-policy-final-rate}. The number
of off-policy episode rollouts satisfies
\begin{equation}
K
=\widetilde O\left(
\frac{d_{\rm c}(d_{\rm c}^2+d_{\rm a})I^4H^{14}}
{\xi^4\epsilon^4}
\max\left\{1,\min\left\{
\frac{H^2}{\xi^2\epsilon^2},
\frac{1}
{H^{4/3}\xi^{4/3}\epsilon_{\rm act}^{2/3}}
\right\}\right\}
\right).
\label{eq:explicit-final-sample-complexity}
\end{equation}
\end{theorem}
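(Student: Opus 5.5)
The plan is to combine Theorem~\ref{thm:master-recursion} with the certificates of Lemma~\ref{lem:explicit-linear-ope-certificates} and Proposition~\ref{prop:explicit-actor-certificate}, and then tune $(\tau,\eta,K,N)$. First fix $\lambda_{\max}=2H\log(e|\cA|)/\xi$, so that $1+I\lambda_{\max}+\tau\log|\cA|=\widetilde O(IH/\xi)$; denote this quantity by $\Gamma$. Choose $\eta$ so that the step-size conditions of the lemma hold, i.e.\ $\eta\le 1/(4H\Gamma)$ and $\eta\tau\le 1/4$. By a union bound the oracle conditions of Assumption~\ref{ass:master-ope} and the actor bound \eqref{eq:master-projection-bound} hold simultaneously with probability $1-\delta$. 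In that event, Lemma~\ref{lem:explicit-linear-ope-certificates} gives $C_{\eta,\tau,\Lambda,k}=\widetilde O(H^2\Gamma^2)$, and also $G_\tau^2\le I(H+\lambda_{\max})^2=\widetilde O(H^2\Gamma^2)$ up to $I$ factors. Note also that $\Phi_1\le H\log|\cA|+I\lambda_{\max}^2/2$, because $\pi^1$ is uniform.

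\textbf{On-policy.} Bound the geometric sum by $\sum_k(1-\eta\tau)^{K-1-k}\le 1/(\eta\tau)$. This gives
\[
\Phi_K\le e^{-\eta\tau(K-1)}\Phi_1+\frac{\eta}{2\tau}\widetilde O(H^3\Gamma^2)+\frac{\Gamma}{\tau}\sup_k\beta_k+\frac{H\epsilon_{\rm act}}{\tau}.
\]
Plug this into \eqref{eq:master-regularisation-conversion}. The error is then at most $H^{3/2}\sqrt{2\Phi_K}+\tau H\log|\cA|+\tau H(1+\tau\log|\cA|)/\xi$. Choose $\tau=\Theta(\epsilon\xi/H)$ so that the regularisation bias is $O(\epsilon)$. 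It then suffices that $\Phi_K\lesssim\epsilon^2/H^3$. Each of the four terms is forced below this level:
\begin{itemize}
\item $K\gtrsim(\eta\tau)^{-1}\log(\Phi_1H^3/\epsilon^2)$;
\item $\eta\lesssim\tau\epsilon^2/(H^6\Gamma^2)$;
\item $\sup_k\beta_k\lesssim\tau\epsilon^2/(H^3\Gamma)$, which through \eqref{eq:explicit-linear-on-policy-ope-rate} fixes $N\gtrsim d_{\rm c}^3H^{10}\Gamma^4/(\tau^2\epsilon^4)$;
\item the actor term $H^{3/2}\sqrt{H\epsilon_{\rm act}/\tau}$.
\end{itemize}
Set $NK$ equal to the product of these choices.

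\textbf{Off-policy.} Here $\beta_k$ is only controlled in sum, and the geometric weights obstruct a direct use of \eqref{eq:explicit-linear-off-policy-ope-rate}. First try bounding $(1-\eta\tau)^{K-1-k}\le 1$ on the $\beta$ term. This yields $\eta\Gamma\sum_k\beta_k=\eta\Gamma^2H^2\sqrt{d_{\rm c}(d_{\rm c}^2+d_{\rm a})K}$, with the other terms treated as before. Then require $\eta\tau K\approx\log(\cdot)$, i.e.\ $\eta=\widetilde\Theta(1/(\tau K))$. The $\beta$ contribution becomes $\Gamma^2H^2\sqrt{d(\cdot)}/(\tau\sqrt K)\lesssim\epsilon^2/H^3$, which gives $K\gtrsim d_{\rm c}(d_{\rm c}^2+d_{\rm a})\Gamma^4H^{10}/(\tau^2\epsilon^4)$. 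The variance term $\eta H^3\Gamma^2/\tau=H^3\Gamma^2/(\tau^2K)$ is dominated by this. Substituting $\Gamma\sim IH/\xi$ and $\tau\sim\epsilon\xi/H$ should give the $I^4H^{14}/(\xi^4\epsilon^4)$ leading rate, up to the exponent bookkeeping I would verify.

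\textbf{Actor error and the main obstacle.} The main obstacle is handling $\epsilon_{\rm act}$, together with the resulting $\max\{1,\min\{\cdot\}\}$ factor and the $\epsilon_{\rm act}^{1/3}$ term. The actor contribution $H^2\sqrt{\epsilon_{\rm act}/\tau}$ grows as $\tau$ shrinks, so $\tau$ should not be pinned at $\epsilon\xi/H$. Instead, balance the regularisation bias $\tau H/\xi$ against $H^2\sqrt{\epsilon_{\rm act}/\tau}$. This gives $\tau^\ast=(\xi^2H^2\epsilon_{\rm act})^{1/3}$, with error $H^{5/3}\epsilon_{\rm act}^{1/3}/\xi^{1/3}$, matching \eqref{eq:explicit-linear-final-error}. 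Then take $\tau=\max\{\epsilon\xi/H,\min\{\tau^\ast,\cdot\}\}$ appropriately, capped so that $\tau\le 1$. The sample complexities carry a $\tau^{-2}$ (or $\tau^{-4}$) dependence. Re-expressing them relative to the baseline $\tau=\epsilon\xi/H$ produces the extra multiplicative factor: it is $(H/(\xi\epsilon))^{4}$ or $(H/(\xi\epsilon))^{2}$ when the actor term forces a larger $\epsilon$-scale choice, and otherwise it is capped via $\epsilon_{\rm act}$. Working out exactly which regime gives the stated $\min$ expression is the delicate calculation I expect to consume most effort. The remaining steps are routine bookkeeping of polylogarithmic factors and checking that the constraints on $\eta$ are compatible.
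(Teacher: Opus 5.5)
Your proposal follows the paper's proof essentially step for step. With $\lambda_{\max}=2H\log(e|\cA|)/\xi$ you unroll the master recursion, using the geometric sum $1/(\eta\tau)$ for the local-norm, actor and on-policy $\beta$ terms and weights bounded by one for the cumulative off-policy $\beta$-sum. You then force each term to $O(\epsilon^2/H^3)$, which yields the paper's orders of $\eta$, $K$, $N$, and balance the $O(\tau H\log(e|\cA|)/\xi)$ bias against $H^2\sqrt{\epsilon_{\rm act}/\tau}$.

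The final step you call delicate is just the identity $\tau^{-p}=\max\{1,\min\{a^{-p},b^{-p}\}\}$ for $\tau=\min\{1,\max\{a,b\}\}$, with $a=\xi\epsilon/(H\log(e|\cA|))$ and $b=(H^2\xi^2\epsilon_{\rm act})^{1/3}/\log^{2/3}(e|\cA|)$. So the $(H/(\xi\epsilon))^p$ entry is the regime where the accuracy scale dominates, not where the actor term pushes $\tau$ up. You also need the remark that when the cap $\tau=1$ binds, the actor floor already exceeds $H$, so the bound holds trivially.
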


Beyond any dependence encoded by the representation dimensions $d_{\rm c}$ and $d_{\rm a}$, both
guarantees depend on $|\cA|$ only through logarithmic factors and contain no separate polynomial
factor in $|\cA|$.  This is a substantial improvement in action-space dependence over the online
tabular bound of \citet[Theorem~5.1]{müller2024trulynoregretlearningconstrained}, which is
polynomial in $|\cA|$. We refer the reader to Appendix~\ref{app:actor-action-dependence} for a
detailed explanation of this improvement. 

Moreover, the sharper dependence on $\epsilon$ follows from the tighter
conversion in Theorem~\ref{thm:master-recursion} and the contraction analysis. 
Unlike the radius-dependent conversion of
\citet[Lemma~4.2]{müller2024trulynoregretlearningconstrained}, our argument
avoids the inverse-radius term and preserves geometric contraction with less
conservative choices of $\tau$ and $\eta$.
In particular, under a direct tabular specialization, our off-policy bound scales as
$\widetilde O(\epsilon^{-6})$, improving upon the
$\widetilde O(\epsilon^{-14})$ accuracy dependence implied by the
model-based tabular analysis \citep{müller2024trulynoregretlearningconstrained}. 
Appendix~\ref{app:accuracy-improvement} provides the
full sample-complexity and tabular comparison.

Although our off-policy sample complexity does not attain the
$\Omega(\epsilon^{-2})$ lower bound \citep{pmlr-v132-domingues21a}, its accuracy dependence matches the
$O(\epsilon^{-6})$ iteration complexity of RPGPD
\citep{ding2024lastiterateconvergentpolicygradient}. Thus, sampling adds only
an additional multiplicative factor depending on the representation complexity. 

\section{General Function Approximation}
\label{section:general-function-approximation}
This section extends the policy-evaluation construction of
Section~\ref{section:linear-cmdp-explicit-actor} to general critic classes while keeping the
master algorithm and log-linear actor of Section~\ref{section:master-rpgpd}.  Here, we state
only the function-class condition, the oracle idea, and the final guarantees; precise
definitions, algorithms, oracle certificates, and proofs are deferred to
Appendix~\ref{app:general-function-approximation}.

\paragraph{Function classes and complexity.}
Let $\mathcal X\triangleq\cS\times\cA$ and
$\mathcal J=\{r,u_1,\ldots,u_I,\psi\}$.  For every $h\in[H]$, let
$\mathcal F_{r,h}$ and $\mathcal F_{u_i,h}$ take values in $[0,H+1-h]$, and let
$\mathcal F_{\psi,h}$ take values in $[0,(H-h)\log|\cA|]$.  The entropy class models only $P_hV_{\psi,h+1}^k$, since the known immediate
cost $\psi_{k,h}(s,a)$ is included explicitly in the
action-value estimate.

\begin{assumption}[Value closeness]
\label{ass:general-bellman-closeness}
For every $h\in[H]$, every function $V:\cS\to[0,H-h]$, and every
function $W:\cS\to[0,(H-h)\log|\cA|]$, the following inclusions hold:
\begin{equation*}
\begin{aligned}
r_h+P_h V
\in\mathcal F_{r,h},
\qquad
u_{i,h} + P_h V
\in\mathcal F_{u_i,h},
\, i\in[I],
\qquad
P_h W
\in\mathcal F_{\psi,h}.
\end{aligned}
\end{equation*}

\end{assumption}
This is a standard componentwise value-closeness condition, similar to that of
\citet{liu2023optimistic}. We summarize the
critic complexity by the scale-dependent eluder dimension $d_{\rm E}$, which controls sequential
confidence widths, and the domain metric entropy $\mathfrak h_{\mathcal X}$, which controls the
discretization of data-dependent bonuses.  Their precise definitions and evaluation scales are given
in Appendix~\ref{app:general-complexity}.
\paragraph{General optimistic policy evaluation.}
The general oracle fits the reward, utility, and entropy-value critics backward, adds a
nonnegative stable bonus to each component, adds $-\log\pi_h^k$ exactly to the entropy critic, and
scalarizes only after componentwise fitting.  
The complete construction is Algorithm~\ref{alg:general-ope} in
Appendix~\ref{app:general-ope}; its on- and off-policy certificates are proved in
Appendices~\ref{app:general-on-policy} and \ref{app:general-replay}.

\paragraph{Last-iterate guarantees.}
We obtain the sample complexity stated in the theorem below.

\begin{theorem}[General last-iterate guarantees]
\label{thm:general-last-iterate}
Fix $\epsilon,\delta\in(0,1)$.  Under Assumptions~\ref{ass:slater},
\ref{ass:master-actor-oracle}, and \ref{ass:general-bellman-closeness},
Algorithm~\ref{alg:master-rpgpd}, instantiated with either the on-policy or off-policy general
oracle and the corresponding parameter choices specified below, guarantees, with probability at
least $1-\delta$,
\begin{equation}
\max\left\{
\bigl[V_r^{\pi^\star}-V_r^{\pi^K}\bigr]_+,
\max_{i\in[I]}\bigl[c_i-V_{u_i}^{\pi^K}\bigr]_+
\right\}
 \leq
\widetilde O\left(
\epsilon+\frac{H^{5/3}}{\xi^{1/3}}\epsilon_{\rm act}^{1/3}
\right).
\label{eq:general-final-error}
\end{equation}
For the on-policy oracle, use the parameter choices in
\eqref{eq:general-on-policy-parameters} of Appendix~\ref{app:general-on-policy}.  The total number
of on-policy episode rollouts satisfies
\begin{equation}
NK
 =
\widetilde O\left(
\frac{d_{\rm E}^3\mathfrak h_{\mathcal X}I^6H^{22}}
{\xi^6\epsilon^6}
\max\left\{1,\min\left\{
\frac{H^4}{\xi^4\epsilon^4},
\frac{1}{H^{8/3}\xi^{8/3}
\epsilon_{\rm act}^{4/3}}
\right\}\right\}
\right).
\label{eq:general-on-policy-sample-complexity}
\end{equation}
For the off-policy oracle, use the parameter choices in
\eqref{eq:general-replay-parameters} of Appendix~\ref{app:general-replay}.  The number of
off-policy episodes satisfies
\begin{equation}
K
 =
\widetilde O\left(
\frac{d_{\rm E}(d_{\rm E}^2\mathfrak h_{\mathcal X}+d_{\rm a})I^4H^{14}}
{\xi^4\epsilon^4}
\max\left\{1,\min\left\{
\frac{H^2}{\xi^2\epsilon^2},
\frac{1}{H^{4/3}\xi^{4/3}
\epsilon_{\rm act}^{2/3}}
\right\}\right\}
\right).
\label{eq:general-replay-sample-complexity}
\end{equation}
\end{theorem}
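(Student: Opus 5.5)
The plan mirrors the linear-CMDP argument of Theorem~\ref{thm:explicit-linear-last-iterate}. The general oracle enters only through the certificates $\beta_k$ and $C_{\eta,\tau,\Lambda,k}$, so the whole proof reduces to plugging these into Theorem~\ref{thm:master-recursion}. The steps are: (a) establish oracle certificates for the general critic; (b) combine them with the actor certificate of Proposition~\ref{prop:explicit-actor-certificate}; (c) feed both into the master recursion \eqref{eq:master-recursion} and the conversion \eqref{eq:master-regularisation-conversion}; (d) tune $(\tau,\eta,K,N)$.

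\textbf{Step 1: general certificates.} This is the analogue of Lemma~\ref{lem:explicit-linear-ope-certificates}. Optimism (i) follows from two facts. First, Assumption~\ref{ass:general-bellman-closeness} makes the regression targets $r_h+P_hV_{r,h+1}^k$, $u_{i,h}+P_hV_{u_i,h+1}^k$ and $P_hV_{\psi,h+1}^k$ realizable in the respective classes. Second, a least-squares confidence set, with radius controlled by a covering of the data-dependent value functions, contains them with probability $1-\delta$. The stable bonus then upper-bounds the width, and truncation preserves the componentwise inequality. The exact term $-\log\pi_h^k$ then gives $Q_{z_k,h}^k\ge \Bell^{\pi^k}_{z_k,h}Q_{z_k,h+1}^k$ after scalarization, using $\lambda_k\ge0$ and $\tau>0$.

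Consistency (ii) comes from the simulation lemma. Since the Bellman residual is at most twice the bonus, $V_{z_k}^k-V_{z_k}^{\pi^k}\le (1+I\lambda_{\max}+\tau\log|\cA|)\,\mathbb E_{\pi^k}\sum_h 2b_h^k$. The same bound holds for $|V_{u_i}^k-V_{u_i}^{\pi^k}|$; the lower direction also uses the fitted-value confidence.

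To bound the bonuses:
\begin{itemize}
\item \textbf{On-policy.} The $N$ fresh trajectories from $\pi^k$ are the evaluation distribution. A generalization bound for the eluder-type width then gives $\beta_k=\widetilde O((1+I\lambda_{\max}+\tau\log|\cA|)H^2 d_{\rm E}^{3/2}\mathfrak h_{\mathcal X}^{1/2}/\sqrt N)$.
\item \textbf{Off-policy.} The bonuses are summed over $k$ by the eluder-dimension pigeonhole argument. The actor class enters the covering through $d_{\rm a}$, because policies are log-linear. This gives $\sum_k\beta_k=\widetilde O((1+I\lambda_{\max}+\tau\log|\cA|)H^2\sqrt{d_{\rm E}(d_{\rm E}^2\mathfrak h_{\mathcal X}+d_{\rm a})K})$.
\end{itemize}

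Boundedness (iii) is purely deterministic, so it carries over verbatim from the linear case. Because $|Q_{z_k,h}^k|\le H(1+I\lambda_{\max})+\tau(\psi_{k,h}+H\log|\cA|)$, the unbounded piece is $\psi_{k,h}=-\log\pi^k_h$. The weight $\pi_h^k e^{\eta Q}$ contains $\pi_h^k e^{-\eta\tau\log\pi_h^k}=(\pi_h^k)^{1-\eta\tau}$, and $\sum_a (\pi_h^k(a\mid s))^{1-\eta\tau}\log^2\pi_h^k(a\mid s)$ stays bounded up to log factors when $\eta\tau\le1/4$. Under $\eta H(1+I\lambda_{\max}+\tau\log|\cA|)\le1/4$ this yields $C=\widetilde O(H^2(1+I\lambda_{\max}+\tau\log|\cA|)^2)$.

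\textbf{Step 2: recursion and conversion.} Take $\lambda_{\max}=2H\log(e|\cA|)/\xi$ and $\epsilon_{\rm act}=2(\kappa_G+1)\epsilon_{\rm bias}$. Inserting these into \eqref{eq:master-recursion} gives
\[
\Phi_K\lesssim e^{-\eta\tau K}\Phi_1+\frac{\eta}{\tau}\bigl(HC+G_\tau^2\bigr)+\frac{(1+I\lambda_{\max})}{\tau}\sup_k\beta_k+\frac{H\epsilon_{\rm act}}{\tau}
\]
in the on-policy case. In the off-policy case, the weighted sum $\eta\sum_k(1-\eta\tau)^{K-1-k}\beta_k$ is bounded crudely by $\eta\sum_k\beta_k$. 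Here $\Phi_1\le H\log|\cA|+I\lambda_{\max}^2/2$.

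Now apply \eqref{eq:master-regularisation-conversion} with $\tau=\Theta(\epsilon/(H\log|\cA|)\cdot\xi)$, so that the bias terms $\tau H\log|\cA|$ and $\tau H/\xi$ are $O(\epsilon)$. Each remaining piece of $H^{3/2}\sqrt{2\Phi_K}$ must be at most $\epsilon$, i.e. each term of $\Phi_K$ at most $\epsilon^2/H^3$:
\begin{itemize}
\item the actor term requires $H\epsilon_{\rm act}/\tau\lesssim\epsilon^2/H^3$, which is the source of the $H^{5/3}\xi^{-1/3}\epsilon_{\rm act}^{1/3}$ floor in \eqref{eq:general-final-error};
\item the step-size term requires $\eta\lesssim \tau\epsilon^2/(H^3\cdot H^2 I^2\lambda_{\max}^2)$;
\item contraction requires $K\gtrsim(\eta\tau)^{-1}\log(\Phi_1 H^3/\epsilon^2)$;
\item the statistical term requires $\sup_k\beta_k\lesssim\tau\epsilon^2/(H^3I\lambda_{\max})$, which fixes $N$ on-policy. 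Off-policy it requires $\eta\sum_k\beta_k\propto\eta\sqrt K$ to be small, which fixes $K$.
\end{itemize}
The $\max\{1,\min\{\cdot\}\}$ factor handles the regime where $\epsilon_{\rm act}$ exceeds the floor: there one may inflate $\tau$ to $\tau\asymp \max(\epsilon,\ \text{actor-driven }\tau)$, and the two expressions in the min correspond to which constraint binds.

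\textbf{Main obstacle.} I expect Step~1 to be the hard part, specifically the off-policy bonus-sum bound. Because the policies change across iterations, the confidence sets must be uniform over data-dependent targets $V^k_{j,h+1}$ that depend on $\pi^k$ and the bonuses. The covering number therefore has to account for the log-linear actor parameters (contributing $d_{\rm a}$), the dual variable, and the bonus class (contributing $\mathfrak h_{\mathcal X}$). Only then does the eluder-dimension summation go through with the stated dimension factor. The bookkeeping of exponents in Step~2 is routine, and it reproduces the $H^{22}$ and $H^{14}$ powers exactly as in the linear case with $d_{\rm c}\mapsto d_{\rm E}\mathfrak h_{\mathcal X}^{1/3}$ in the appropriate places.
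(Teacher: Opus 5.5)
Your architecture matches the paper's: general certificates, then the master recursion and the fixed-domain conversion, then taking $\tau$ as the larger of the accuracy and actor-balancing temperatures. However, two steps in your Step~1 do not go through as written.

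\textbf{First gap: the off-policy bonus is not predictable.} In the off-policy mode, $\cD^k$ already contains the trajectory drawn from $\pi^k$. So the fitted bonus $b_{j,h}^k$ is a function of the very point $X_k=(s_h^k,a_h^k)$ over which you must average.
\begin{itemize}
\item $\mathbb E_{d_h^{\pi^k}}[b_{j,h}^k]$ need not equal $\mathbb E[b_{j,h}^k(X_k)\mid\text{history before trajectory }k]$.
\item The eluder pigeonhole count requires the width at $X_k$ to be computed from points preceding $X_k$.
\end{itemize}
Hence neither Azuma--Hoeffding nor the pigeonhole argument applies to $b_{j,h}^k$ directly. The paper first passes to a predictable envelope. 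The upper half of the stable-width sandwich \eqref{eq:general-stable-width-sandwich} gives $b_{j,h}^k\le w(\mathcal C_{Z_h^k}(\widehat f_{j,h}^k,34\alpha_{j,h}),\cdot)$. Lemma~\ref{lem:general-post-data-width-envelope} (triangle inequality, then drop the newest point) bounds this by the width over pairs with $\|f-f'\|_{Z_h^{k-1}}^2\le136\alpha_{j,h}$, which is fixed before trajectory $k$. Only then is Lemma~\ref{lem:general-sequential-width} applied. In the linear case this step is the one-line $\Sigma_h^k\succeq\Sigma_h^{k-1}$, but the stable bonus is randomized and rounded, so you need the sandwich.

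\textbf{Second gap: the resolution error and evaluation scale are missing.} Your certificates $\beta_k=\widetilde O(\cdot/\sqrt N)$ and $\sum_k\beta_k=\widetilde O(\cdot\sqrt K)$ carry no resolution error, and you never fix the scale at which $d_{\rm E}$ and $\mathfrak h_{\mathcal X}$ are evaluated. Eluder-width bounds have an additive cutoff term ($\rho$ per step in Lemma~\ref{lem:general-population-eluder-width}), and sensitivity sampling with rounding needs additive slack in the confidence radius. Making these vanish forces scales that shrink with $N$ or $K$. The budget would then depend on complexity measures evaluated at a budget-dependent scale. That fixed point is harmless for linear classes, but not for a general class whose eluder dimension grows as the scale decreases.

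The paper avoids this as follows.
\begin{itemize}
\item It fixes $r_\epsilon$, $q_\epsilon=r_\epsilon/\sqrt{d_0}$ and $\nu_\epsilon=c_0q_\epsilon^2/B$ in \eqref{eq:general-accuracy-resolutions} before choosing any budget.
\item It keeps the slack $Mq_\epsilon^2$ in the radius; eluder counting turns this into $Mr_\epsilon$.
\item The certificates therefore acquire extra terms $CSHr_\epsilon$ and $CSHKr_\epsilon$, where $S=1+I\lambda_{\max}+\tau\log(e|\cA|)$.
\item It takes $r_\epsilon\asymp\tau\xi^2\epsilon^2/(L_\epsilon I^2H^6\log^2(e|\cA|))$, so these terms cost only $O(\epsilon^2/H^3)$ in $\Phi_K$.
\end{itemize}
This is exactly why the theorem's $d_{\rm E}$ and $\mathfrak h_{\mathcal X}$ are evaluated at the scales in \eqref{eq:general-complexity-convention}.

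\textbf{Smaller slips.}
\begin{itemize}
\item No cover of the dual variable is needed, since components are fitted separately and scalarized afterwards.
\item $HC_{\eta,\tau,\Lambda,k}=O(H^3S^2)$, so the local-norm stepsize condition is $\eta\lesssim\tau\epsilon^2/(H^6I^2\lambda_{\max}^2)$, not $H^5$. This is what gives $H^8$ in the on-policy $K$, and hence $H^{22}$ in $NK$.
\end{itemize}
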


The theorem extends last-iterate convergence beyond linear realizability:
its statistical complexity is governed by the critic-class quantities
$d_{\rm E}$ and $\mathfrak h_{\mathcal X}$, rather than the number of states.


\section{Experiment} \label{sec:experiment}
We conduct a synthetic experiment to examine the mechanism and
empirical effectiveness of regularisation. We consider an
episodic linear CMDP with unknown dynamics under fresh on-policy and cumulative
off-policy sampling. The learner uses sample-based optimistic policy evaluation,
while the hidden model is used only for post-hoc evaluation.
The log-linear actor is fitted using the action-difference objective
in~\eqref{eq:explicit-actor-loss}.
We present the setup and results here and defer all
details to Appendix \ref{app:experiment-details}.

\begin{figure}[h]
    \centering
    \includegraphics[width=\linewidth]{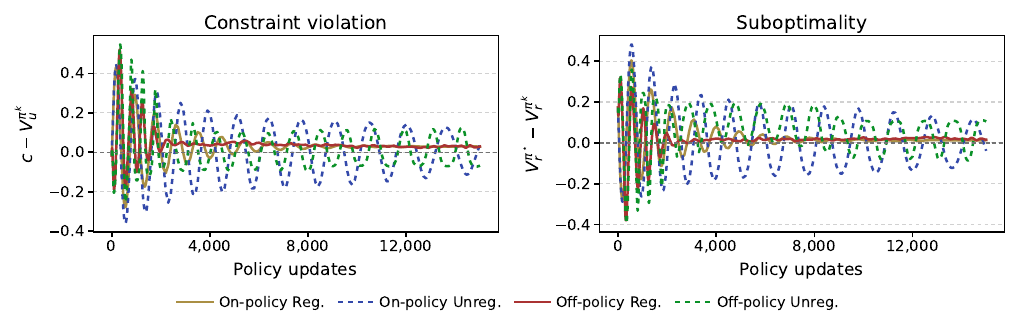}
    \caption{Online last-iterate performance in the synthetic linear CMDP over
    $1.5\times10^4$ policy updates.}
    \label{fig:small-linear-cmdp}
\end{figure}

We examine the last-iterate behaviour of the primal-dual dynamics on a small episodic linear CMDP whose optimal constrained policy
and value functions can be computed exactly.
We compare the regularised primal-dual method with a vanilla baseline using \(\tau=0\),
both trained at the same constraint threshold $c$. Both use sample-based optimistic
policy evaluation; exact evaluation is used only to assess the resulting policies.
Figure~\ref{fig:small-linear-cmdp} reports the signed constraint gap
\(c-V_u^{\pi^k}\) and signed reward gap
\(V_r^{\pi^\star}-V_r^{\pi^k}\).
Since pointwise averaging across seeds can mask fluctuations,
Figure~\ref{fig:small-linear-cmdp} shows one seed in each sampling mode.
The vanilla baseline (dashed lines) exhibits larger oscillations, whereas
the regularised configuration (solid lines) damps its initial oscillations
more quickly and yields substantially flatter trajectories in both
sampling modes.
We further compare within-run fluctuations of the unregularised and
regularised methods using the 30-seed statistic summarised in
Table~\ref{tab:small-linear-cmdp-variability} of
Appendix~\ref{app:experiment-details}.
In summary, regularised
configurations substantially reduce last-iterate oscillations in both settings.

\section{Conclusion}
We developed a model-free framework for last-iterate convergence in structured CMDPs,
with a computationally efficient linear-CMDP instantiation. By separating regularised
primal-dual dynamics from modular actor and critic conditions, it accommodates
on- and off-policy sampling and yields guarantees for linear CMDPs and general
function approximation. Empirically, our synthetic experiment illustrates stable last-iterate behaviour.
Extending the framework to continuous actions and neural actor-critic models
is a promising direction toward practical application.
\newpage

\subsection*{AI use statement}
We used AI tools to assist with the design and implementation of the
experimental code and to improve the manuscript's grammar, word choice,
and clarity. We take responsibility for the final content of this work,
including the text, claims, code, results, produced with the aid of generative AI.

\bibliographystyle{plainnat}
\bibliography{Ref}

\clearpage
\appendix

\thispagestyle{empty}
\renewcommand{\thetable}{\Alph{table}}
\setcounter{table}{0}

\renewcommand*\contentsname{Contents of Appendix}
\addtocontents{toc}{\protect\setcounter{tocdepth}{2}}
\doublespacing

\begingroup
\makeatletter
\renewcommand{\@pnumwidth}{2.5em}
\renewcommand{\@tocrmarg}{3.5em}
\makeatother
\tableofcontents
\endgroup
\singlespacing

\newpage
\section{Comprehensive Related Works}
\label{sec:Comprehensive related-work}
\begin{table}[htbp]
\centering
\caption{Comparison of online CMDP results with last-iterate guarantees for the final policy.
We include only works that explicitly account for exploration through online interaction
with an unknown CMDP. For a fair comparison with these prior tabular, model-based results,
we report sample complexity in the tabular setting using one-hot state-action features
for the actor and critic, with $\epsilon_{\rm act}=0$.
Sample complexity counts the complete episodes required to obtain an
$O(\epsilon)$-optimal, $O(\epsilon)$-feasible final policy.
Rates suppress logarithmic factors and dependence on problem parameters other than $\epsilon$.
The \textit{Structured CMDPs} column identifies guarantees
based on features or function classes, while \textit{Model free} indicates that
the method does not explicitly estimate the transition kernel.}
\label{tab:related-work-cmdp-comparison}
\small
\setlength{\tabcolsep}{4pt}
\begin{tabular}{@{}lccccc@{}}
\toprule
 &  & \makecell{Sample complexity}
& \makecell{Model free} & \makecell{Structured CMDPs}  \\
\midrule
\citet{müller2024trulynoregretlearningconstrained}  
 & & $\widetilde O(\epsilon^{-14})$ & \xmark & \xmark \\
 \citet{kitamura2024policygradientprimaldualalgorithm}  
 & & $\widetilde O(\epsilon^{-7})$ & \xmark & \xmark \\
\citet{zuo2026flexdome} 
& & $\widetilde O (\epsilon^{-6})$  & \xmark & \xmark  \\
\hline \\
\textbf{This work (off-policy)} 
& & $\widetilde O(\epsilon^{-6})$ 
& \cmark & \cmark  \\
\textbf{This work (on-policy)} 
& & $\widetilde O(\epsilon^{-10})$ 
& \cmark & \cmark  \\
\hline \\
\makecell{Lower bound \\ \cite{pmlr-v132-domingues21a} }
& & $\widetilde O(\epsilon^{-2})$ 
& --- & ---  \\
\bottomrule
\end{tabular}
\end{table}

\paragraph{Importance of last-iterate convergence.} Last-iterate guarantees are particularly valuable for a parameterised actor.
For the log-linear class defined above, the pointwise average
$K^{-1}\sum_{k=1}^{K}\pi_h(\cdot\mid s,\omega^k)$ generally cannot be
represented as $\pi_h(\cdot\mid s,\omega)$ by one parameter set $\omega$.
Sampling a past actor at the start of each episode implements a mixture,
but requires storing those actors; averaging their parameters produces
a log-linear policy without inheriting the mixture's reward or feasibility
guarantee. This distinction is relevant to constrained applications such
as language-model fine-tuning that optimises helpfulness subject to a
harmful-response cost \citep{dai2024safe} and wireless communication, where a deployed control policy must respect
average transmission-power limits \citep{bura2022dope}. 
In these applications, the single paramaterised policy deployed after training must satisfy
the constraints, making a last-iterate guarantee particularly desirable.

\paragraph{Last-iterate convergence \textit{without} online exploration}
A related line of work studies last-iterate convergence under specified
gradient, rollout, or coverage conditions.
\citet{ding2024lastiterateconvergentpolicygradient} establish nonasymptotic
guarantees for regularised and optimistic policy-gradient primal-dual updates;
their sample-based extension assumes access to policy rollouts from a simulator.
\citet{moskovitz2023reload} use optimistic ascent-descent to stabilise the
primal-dual iterates. Their ``optimism'' is a correction to the saddle-point
update, distinct from confidence bounds for an unknown CMDP.
For general parameterised policies, \citet{montenegro2024lastiterate}
analyse last-iterate convergence under gradient-domination and
gradient-estimator assumptions, while \citet{mondal2026lastiterate} use
compatible-approximation and Fisher non-degeneracy conditions together with
sampled rollouts. Recent augmented-Lagrangian work by \citet{lu2026augmented}
also establishes last-iterate guarantees under coverage and
gradient-evaluation assumptions. These results do not jointly establish
confidence-guided exploration from adaptively collected trajectories in an
unknown CMDP and a final-policy guarantee that accounts for the resulting
estimation error.

By contrast, our method explicitly addresses exploration in an unknown CMDP: it
constructs optimistic Bellman estimates from the collected trajectories, whose
high-probability certificates drive exploration and interface with the
last-iterate primal-dual contraction under both fresh on-policy sampling and
cumulative off-policy data. Our general-function result therefore broadens the critic and
statistical-exploration axes under structural Bellman assumptions, fixed-coreset
leverage control, and an explicit policy-space actor-projection error.

\paragraph{Last-iterate convergence \textit{with} online exploration.}
The closest works to ours in Table~\ref{tab:related-work-cmdp-comparison}
are \citet{müller2024trulynoregretlearningconstrained},
\citet{kitamura2024policygradientprimaldualalgorithm}, and
\citet{zuo2026flexdome}. All three address exploration in an unknown
tabular CMDP using optimistic, model-based methods. \citet{müller2024trulynoregretlearningconstrained} are
closest in their use of regularised primal-dual updates and a policy-dual
contraction; their online tuning yields $\widetilde O(\epsilon^{-14})$
episodes for an $O(\epsilon)$-optimal, $O(\epsilon)$-feasible final policy.
\citet{kitamura2024policygradientprimaldualalgorithm}  
obtain a $\widetilde O(\epsilon^{-7})$ uniform-PAC bound,
which controls the number of episodes with inaccurate or infeasible
policies. \citet{zuo2026flexdome} use decaying safety margins to control online strong
regret and constraint violation. Their online analysis yields
$\widetilde O(\epsilon^{-6})$ episode scaling, matching the accuracy
exponent of our off-policy bound, although their algorithmic approach
and analysis differ from ours.

Our work retains the optimistic, regularised primal-dual structure while
estimating Bellman quantities directly from trajectories, without explicitly
constructing a transition model. High-probability critic certificates account
for exploration error in the last-iterate analysis under both fresh on-policy
sampling and cumulative off-policy data. This gives \textit{model-free guarantees}
for linear CMDPs and general function classes under the stated structural
assumptions, with a compact log-linear actor. In the tabular specialisation 
with state-action features and
$\epsilon_{\rm act}=0$, our episode bounds are
$\widetilde O(\epsilon^{-6})$ off-policy and
$\widetilde O(\epsilon^{-10})$ on-policy. The off-policy rate matches
the state-of-the-art rate of \citet{zuo2026flexdome}.

\paragraph{Guarantee criteria for online CMDPs.}
Several notions of safety and convergence appear in the online CMDP literature and are not
interchangeable. Standard weak regret and
cumulative violation control signed sums \citep{efroni2020explorationexploitationconstrainedmdps} and therefore
allow cancellation: an unsafe episode may be offset by an overly conservative one. Strong regret
and hard constraint violation instead sum positive-part errors, while zero cumulative violation and
episode-wise safety impose still different requirements on the learning trajectory
\citep{wei2022tripleq,ghosh24aToward,stradi2025optimal,kitamura2025episodewise}.
Last-iterate convergence concerns the reward and feasibility of the policy returned for deployment,
rather than the complete path followed during learning
\citep{ding2024lastiterateconvergentpolicygradient,müller2024trulynoregretlearningconstrained,montenegro2024lastiterate,lu2026augmented,
zuo2026flexdome,kitamura2024policygradientprimaldualalgorithm,liu2025nearoptimal}.
A uniform tail guarantee for all sufficiently
late iterates would imply a strong regret.
However, sublinear strong regret provides a guarantee for the uniform
mixture of policies, $\tfrac{1}{K}\sum_{k=1}^{K}\pi_k$, but does not
directly guarantee the final policy $\pi_K$.

\paragraph{Primal-dual methods.}
CMDPs provide a standard framework for reinforcement learning with explicit safety constraints, 
with roots in stochastic control and broad use in safe RL and 
robotics~\citep{altman1999constrained,achiam2017constrained,tessler2019reward,brunke2022safe,gu2022review}. 
Classical algorithmic approaches include LP-based, dual, 
and primal-dual methods~\citep{efroni2020explorationexploitationconstrainedmdps, paternain2019constrainedreinforcementlearningzero,paternain2022safe}. LP-based methods often provide strong theoretical guarantees \citep{efroni2020explorationexploitationconstrainedmdps}, but require planning over occupancy measures and therefore scale poorly with large state spaces.
Dual methods optimise over the Lagrange multipliers, 
but evaluating each dual iterate typically requires solving the associated unconstrained MDP 
or policy-optimisation problem, leading to an inner planning 
or policy-optimisation loop~\citep{paternain2019constrainedreinforcementlearningzero,ying2022dual}.
Entropy-regularised dual formulations and cutting-plane methods can improve the optimisation
dependence when sufficiently accurate planning or policy-optimisation subroutines are available
\citep{ying2022dual,gladin2023algorithm}.
Primal-dual methods instead update the policy and multipliers together in a single loop, 
making them especially compatible with policy-gradient and actor-critic 
implementations in large-scale 
RL~\citep{ding2020natural,ding2022convergence,liu2021policy,bai2022achievingzeroconstraintviolation,qiu2020upper,ghosh2022provably}.
This computational convenience, however, comes with a known limitation: 
vanilla primal-dual dynamics may oscillate, and most existing guarantees control averaged values, 
mixture policies, or weak regret rather than 
the final policy iterate \citep{ding2024lastiterateconvergentpolicygradient}. 
Our work builds on the efficiency of primal-dual methods while targeting 
a last-iterate guarantee for the final deployed policy.

\paragraph{Technical novelty compared with \citep{müller2024trulynoregretlearningconstrained}}
The online tabular analysis of \citet{müller2024trulynoregretlearningconstrained} is the closest
regularised optimistic predecessor. Both methods combine entropy and quadratic dual regularisation
and are built around an exponentiated ideal policy update and a policy-dual Lyapunov argument.
\citet{müller2024trulynoregretlearningconstrained} estimate a
tabular transition model and perform optimistic dynamic programming, whereas our critic is obtained
directly from trajectory data by Bellman regression. The master theorem isolates the initial-state
evaluation error $\beta_k$, the local exponential moment $C_{\eta,\tau,\Lambda,k}$, and the actor
error $\epsilon_{\rm act}$, thereby separating the last-iterate argument from a particular statistical
estimator. This permits representation-dependent linear and general-function instantiations and a
fixed-dimensional actor instead of an exact tabular policy. A further analytical difference is the
fixed dual domain $\lambda_{\max}=2H\log(e|\cA|)/\xi$, which yields an $O(\tau)$ regularisation bias
without making the dual radius accuracy-dependent. Holding the dimensions, horizon, constraint
count, and Slater gap fixed, the exact-actor tabular specialisation has target-accuracy dependence
$\widetilde O(\epsilon^{-10})$ on-policy and $\widetilde O(\epsilon^{-6})$ with off-policy data, compared
with the $\widetilde O(\epsilon^{-14})$ terminal-accuracy dependence implied by their online
analysis. This comparison concerns the returned policy; their strong-regret theorem additionally
controls cumulative positive-part errors. Appendix~\ref{app:accuracy-improvement} and the following
comparison with the prior tabular analysis give the detailed rate accounting.

\section{Discussion of the Statistical Improvements}
\label{sec:statistical-improvement-discussion}

\subsection{On actor assumption}
\label{app:On actor assumption}

\paragraph{Comparison with uniform raw-critic approximation.}
The comparison is with Assumption~4.1 of
\citep{lin2025optimisticworkshop}, which uniformly approximates the raw
critic increment. For the same actor features and critic targets,
choosing $b\equiv0$ gives, for every $(k,h)$,
\begin{equation}
\begin{aligned}
&\inf_{\omega,b}\sup_{s,a}
\left|\langle\varphi(s,a),\omega-\omega_h^k\rangle
-\eta\bigl(Q_{z_k,h}^k(s,a)-b(s)\bigr)\right|
\\
&\qquad\leq
\inf_{\omega}\sup_{s,a}
\left|\langle\varphi(s,a),\omega-\omega_h^k\rangle
-\eta Q_{z_k,h}^k(s,a)\right|.
\end{aligned}
\label{eq:actor-baseline-raw-comparison}
\end{equation}
Here $\omega\in\mathbb R^{d_{\rm a}}$ and $b:\cS\to\mathbb R$.
Their bias tolerance corresponds to $\eta\epsilon_{\rm bias}$ in our
normalization. Their printed assumption places the infimum before the
supremum over iterations and stages, and therefore also implies the
per-update raw bound on the right of
\eqref{eq:actor-baseline-raw-comparison}. Consequently, their uniform
raw-critic requirement implies \eqref{eq:master-actor-bias}.
The example below shows that the converse fails. Thus, our approximation condition is strictly
weaker than the uniform raw-critic requirement of \citet{lin2025optimisticworkshop}.

\paragraph{Why action differences suffice.}
Fix $(k,h,\omega)$ and write
$e(s,a)=\langle\varphi(s,a),\omega-\omega_h^k\rangle
-\eta Q_{z_k,h}^k(s,a)$. Since $\cA$ is finite,
\begin{equation}
\begin{aligned}
\inf_b\sup_{s,a}|e(s,a)+\eta b(s)|
&=\frac12\sup_{s,a,a'}|e(s,a)-e(s,a')|
\\
&=\frac12\sup_{s,a,a'}
\left|\langle\Delta\varphi(s,a,a'),\omega-\omega_h^k\rangle
-\eta\Delta Q_{z_k,h}^k(s,a,a')\right|.
\end{aligned}
\label{eq:actor-baseline-difference-equivalence}
\end{equation}
Indeed, $|e(s,a)-e(s,a')|\leq2\sup_a|e(s,a)+\eta b(s)|$
for every $b$, and equality in the first line is attained by
$b(s)=-(\max_a e(s,a)+\min_a e(s,a))/(2\eta)$.
Thus \eqref{eq:master-actor-bias} is equivalent to a uniform
action-difference approximation bound with tolerance
$2\eta\epsilon_{\rm bias}$.
This identity explains both the baseline-free loss
\eqref{eq:explicit-actor-loss} and the factor $2$ in the certificate
$\epsilon_{\rm act}=2(\kappa_G+1)\epsilon_{\rm bias}$.
The proof of Proposition~\ref{prop:explicit-actor-certificate} below
establishes the coreset transfer and the KL bound.

\paragraph{A strict counterexample.}
Consider $H=2$, $I=1$, $\cA=\{-,+\}$, and four states: an initial state
$s_{\rm init}$ (so $s_1=s_{\rm init}$), two second-stage states
$s_{\rm zero}$ and $s_{\rm sign}$, and an absorbing state $s_{\rm abs}$.
Only at $s_{\rm sign}$ does the feature depend on the action. Specifically, let
$x(s_{\rm sign},+)=1$, $x(s_{\rm sign},-)=-1$, and $x(s,a)=0$ elsewhere.
Take $\phi(s,a)=\varphi(s,a)=(1,x(s,a))^\top/\sqrt2$, so
$d_{\rm c}=d_{\rm a}=2$. Set
\[
\begin{gathered}
P_1(\cdot\mid s,a)=\tfrac12\delta_{s_{\rm zero}}+\tfrac12\delta_{s_{\rm sign}},
\qquad P_2(\cdot\mid s,a)=\delta_{s_{\rm abs}},
\\
r_1=0,\qquad r_2(s,a)=\tfrac{1+x(s,a)}2,\qquad u_{1,h}=1,\qquad c=0.
\end{gathered}
\]
Both second-stage states are reachable. This is a linear CMDP satisfying
Assumption~\ref{ass:linear-cmdp}: take
$\upsilon_{r,1}=0$, $\upsilon_{r,2}=(1,1)^\top/\sqrt2$,
$\upsilon_{u_1,h}=(\sqrt2,0)^\top$, and
$\mu_h=(\sqrt2 P_h,0)^\top$. All stated norm bounds hold, and Slater's
condition holds with $\xi=1$.

Fix $0<\eta\tau\leq1$, initialize $\omega_1^1=0$ and
$\omega_2^1=(0,\sqrt2 v_1)^\top$ with $v_1>0$, and use exact evaluation.
At iteration $k$, write
$\omega_2^k=(0,\sqrt2 v_k)^\top$, so
$\pi_2^k(a\mid s)\propto e^{v_k x(s,a)}$. The terminal shaped critic is
\begin{equation}
Q_{z_k,2}^k(s,a)
=\underbrace{\frac12+\lambda_k
+\tau\log\sum_{a'\in\cA}e^{v_k x(s,a')}}_{b_k(s)}
+\left(\frac12-\tau v_k\right)x(s,a).
\label{eq:actor-counterexample-critic}
\end{equation}
Consequently, choosing the baseline $b_k$ and
$\omega=\omega_2^k+\eta\sqrt2(0,\frac12-\tau v_k)^\top$
makes the baseline-adjusted error zero. At the first stage, the uniform
policy is preserved and the shaped critic is constant, so its error is
also zero. In contrast,
\[
\begin{aligned}
\frac{\varphi(s_{\rm sign},+)+\varphi(s_{\rm sign},-)}2
&=\varphi(s_{\rm zero},+),
\\
\frac{Q_{z_k,2}^k(s_{\rm sign},+)+Q_{z_k,2}^k(s_{\rm sign},-)}2
-Q_{z_k,2}^k(s_{\rm zero},+)
&=\tau\log\cosh v_k.
\end{aligned}
\]
The triangle inequality therefore gives
\begin{equation}
\inf_{\theta\in\mathbb R^2}\sup_{s,a}
\left|\langle\varphi(s,a),\theta\rangle-Q_{z_k,2}^k(s,a)\right|
\geq\frac{\tau}{2}\log\cosh v_k>0.
\label{eq:actor-counterexample-raw-bias}
\end{equation}
For the increment $\eta Q_{z_k,2}^k$, this lower bound is multiplied by
$\eta$. Thus our approximation condition holds with
$\epsilon_{\rm bias}=0$, while the uniform raw-critic condition cannot hold
with zero bias.

A single coreset triplet $(s_{\rm sign},+,-)$ of weight one gives
$G_{\rm exp}=\operatorname{diag}(0,2)$ and $\kappa_G=1$.
The fitted policy equals the ideal update, with slope
\[
v_{k+1}=(1-\eta\tau)v_k+\eta/2,
\qquad v_k\longrightarrow\frac1{2\tau}\quad\text{as }k\to\infty.
\]
In particular, $v_k>0$ for every $k$. Moreover,
$\frac{\tau}{2}\log\cosh(1/(2\tau))\longrightarrow1/4$ as
$\tau\downarrow0$.
Hence the actual projection error is zero even though the raw-critic bias
certificate need not vanish. This does not assert that linear-CMDP
realizability alone makes $\epsilon_{\rm bias}$ small: nonlinear bonuses
and clipping may still require richer actor features.

\paragraph{Coreset coverage.}
Only the span of $\{\Delta\varphi(s,a,a')\}$ affects the policy;
its orthogonal complement consists of action-independent feature
directions. The range condition in
Assumption~\ref{ass:master-actor-oracle} ensures that coreset fitting
controls every direction in this span, while $G_{\rm exp}^{\dagger}$
allows irrelevant directions to remain singular.
For a finite feature image of rank $m\geq2$, applying
\citet[Theorem~4.4]{lattimore2020goodfeatures} in this span gives
$|\cD_{\rm exp}|=\widetilde O(m)$ and $\kappa_G^2\leq2m$.
For a bounded, possibly infinite image, the same asymptotic claim follows
by a finite-net argument. In span coordinates, choose image vectors
$y_1,\ldots,y_m$ forming a basis and set
$A=m^{-1}\sum_j y_jy_j^\top\succ0$. Take a finite
$\zeta$-net containing this basis with
$\zeta\leq\sqrt{\lambda_{\min}(A)}$, and let $G$ be its design matrix.
Then, for every image vector $y$ and a net point $y'$ within $\zeta$,
\[
\operatorname{tr}(G^{-1}A)\leq2m,
\qquad
\|G^{-1}\|_{\rm op}\leq\frac{2m}{\lambda_{\min}(A)},
\qquad
\|y\|_{G^{-1}}
\leq\sqrt{2m}+\zeta\sqrt{\frac{2m}{\lambda_{\min}(A)}}
\leq\sqrt{8m}.
\]
For $m=1$, a singleton whose feature norm is at least half the supremum
gives $\kappa_G\leq2$; for $m=0$, any singleton gives
$G_{\rm exp}=0$ and $\kappa_G=0$. Since $m\leq d_{\rm a}$ and
$\|\Delta\varphi\|_2\leq2$, this proves the stated support and leverage
orders without a compactness assumption. It is an existence statement;
the coreset is treated as given, not as obtainable from an arbitrary
infinite feature image at no computational cost.

\subsection{Origin of the sharper accuracy dependence}
\label{app:accuracy-improvement}
The improvement begins with the conversion from the regularised saddle point
to the original CMDP. The conversion argument in
Theorem~\ref{thm:master-recursion} bounds
the multiplier $\lambda_\tau^\star$ using Slater's condition, so the box radius
$\lambda_{\max}=2H\log(e|\cA|)/\xi$ need not grow as the target accuracy
decreases. The regularisation bias is consequently $O(\tau)$, allowing
$\tau=\Theta(\epsilon)$ when the other problem parameters are fixed.
The radius-dependent conversion of
\citet[Lemma~4.2]{müller2024trulynoregretlearningconstrained} instead balances
$1/\lambda_{\max}$ against $\tau\lambda_{\max}$, leading to
$\lambda_{\max}=\Theta(\epsilon^{-1})$ and $\tau=\Theta(\epsilon^2)$.
Remark~\ref{rem:fixed-domain-conversion} gives the two conversion inequalities.

To isolate this effect, take $\epsilon_{\rm act}=0$, hold all remaining
problem parameters fixed, and use the same oracle certificates in both
analyses. In the on-policy case, the master recursion requires
\[
\eta=\widetilde\Theta\!\left(
\frac{\tau\epsilon^2}{\lambda_{\max}^2}\right),\qquad
K=\widetilde O\!\left(\frac1{\eta\tau}\right),\qquad
N=\widetilde O\!\left(
\frac{\lambda_{\max}^4}{\tau^2\epsilon^4}\right).
\]
Here the stepsize controls the local-norm term, geometric contraction
controls the initial error, and the batch size controls critic error.
For off-policy sampling, the cumulative certificate gives
$K=\widetilde O(\lambda_{\max}^4/(\tau^2\epsilon^4))$ with
$\eta=\widetilde\Theta(1/(\tau K))$. Substitution yields the following
accuracy exponents; entries suppress universal constants and logarithmic
factors.
\begin{center}
\begin{tabular}{lcc}
\toprule
Quantity & Radius-dependent conversion & Fixed-domain conversion \\
\midrule
Dual radius $\lambda_{\max}$ & $\epsilon^{-1}$ & $1$ \\
Regularisation $\tau$ & $\epsilon^2$ & $\epsilon$ \\
On-policy iterations $K$ & $\epsilon^{-8}$ & $\epsilon^{-4}$ \\
On-policy batch size $N$ & $\epsilon^{-12}$ & $\epsilon^{-6}$ \\
On-policy episodes $NK$ & $\epsilon^{-20}$ & $\epsilon^{-10}$ \\
Off-policy episodes $K$ & $\epsilon^{-12}$ & $\epsilon^{-6}$ \\
\bottomrule
\end{tabular}
\end{center}
These are comparisons within our master analysis, not a restatement of the
published rates. Both columns retain geometric contraction, isolating the
effect of the conversion. For our fixed-domain guarantees with actor
approximation, $\epsilon_{\rm act}\lesssim\xi\epsilon^3/H^5$ keeps the
approximation floor in Theorem~\ref{thm:explicit-linear-last-iterate} at the
target order.

\paragraph{Comparison with the prior tabular analysis.}
\label{app:tabular-comparison}
The online model-based result of
\citet[Theorem~5.1]{müller2024trulynoregretlearningconstrained} uses one
episode per iteration and proves $\widetilde O(K^{13/14})$ strong regret.
Their Remark~5.1 also identifies a last-iterate guarantee: the slowest
terminal error term under that tuning is $\widetilde O(K^{-1/14})$, giving
$\widetilde O(\epsilon^{-14})$ episodes. Their exact-value Theorem~4.1
counts optimisation iterations and is therefore a different comparison.

For a direct tabular specialisation of our theorem, take one-hot critic and
actor features, so $d_{\rm c}=d_{\rm a}=|\cS||\cA|$ and
$\epsilon_{\rm bias}=\epsilon_{\rm act}=0$. The off-policy guarantee then gives
\[
K=\widetilde O\!\left(
\frac{(|\cS||\cA|)^3I^4H^{16}}{\xi^6\epsilon^6}
\right).
\]
The sharper accuracy dependence comes from the fixed-domain conversion,
geometric contraction, and terminal-error calibration. It is an improvement
in the analysis, not an inherent statistical advantage of model-free
evaluation: a suitable model-based oracle could use the same master
recursion. The guarantees also have different scopes. The cited online
theorem controls cumulative strong regret during learning, whereas our
theorem controls the returned policy and, in this specialisation, pays cubic
dependence on $|\cS||\cA|$.

\subsection{Actor- and action-space dependence}
\label{app:actor-action-dependence}
\begin{remark}[Statistical roles of the actor class and action space]
\label{rem:actor-action-dependence}
For the fresh-batch on-policy oracle, conditionally on the pre-iteration history, $\pi^k$ is fixed
and $\cD^k$ contains $N$ independent trajectories generated by this single policy.  The uniform
confidence argument therefore covers only the $d_{\rm c}$-dimensional critic coefficient and the
elliptical-bonus geometry, whose covering dimension is of order $d_{\rm c}^2+d_{\rm c}$.  No cover
of the actor class is needed, which yields the $d_{\rm c}^{3/2}/\sqrt N$ consistency term in
\eqref{eq:explicit-linear-on-policy-ope-rate}.  With off-policy data, by contrast, the fitted
value function depends on policies generated from the accumulated data.  Uniform control over the reachable
actor class is then required and contributes the $d_{\rm a}$ term in
$\sqrt{d_{\rm c}(d_{\rm c}^2+d_{\rm a})K}$ in
\eqref{eq:explicit-linear-off-policy-ope-rate}.  Thus, $d_{\rm a}$ is absent from the on-policy statistical
rate but may enter the off-policy rate; in either setting it may also affect the actor approximation
error and the computational cost of $\mathrm{ActorFit}$.  The general-function guarantees exhibit
the same split:
\eqref{eq:general-on-policy-sample-complexity} has no $d_{\rm a}$ term, whereas
\eqref{eq:general-replay-sample-complexity} contains one in the off-policy setting.

In both sampling regimes, the remaining explicit action-space dependence arises from entropy and
normalization terms through powers of $\log(e|\cA|)$.  In particular, retaining the escort moment
$|\cA|^{\eta\tau}$ and imposing the stated stepsize condition keep this factor bounded by a
universal constant.  Hence, for fixed representation dimensions, neither guarantee contains a
polynomial factor in $|\cA|$, whereas the online tabular guarantee of
\citet[Theorem~5.1 and Appendix~F]{müller2024trulynoregretlearningconstrained} is polynomial in
$|\cA|$ (its leading $K^{13/14}$ coefficient includes $|\cA|^{1/4}$).  This comparison concerns the
explicit dependence on action cardinality: with one-hot tabular features,
$d_{\rm c}=d_{\rm a}=|\cS||\cA|$, so the representation dimensions themselves recover polynomial
dependence on $|\cA|$.
\end{remark}


\section{Proofs for Section~\ref{section:master-rpgpd}}

We first transfer the coreset fitting error to the actor certificate, then
derive the primal and dual inequalities that give the master recursion.
Finally, we convert distance to the regularised saddle point into reward
and constraint bounds. The constants in these three steps are retained
because they specify the oracle interface and the fixed dual radius.

\subsection{Actor projection and the master recursion}
\label{app:actor-projection}

\begin{proof}[Proof of Proposition~\ref{prop:explicit-actor-certificate}]
The proof controls the fitted logit differences uniformly, then transfers
that error through softmax. Fix $(k,h)$ and $\nu>0$. By
Assumption~\ref{ass:master-actor-oracle}, choose
$\omega_h^{k,0}\in\mathbb R^{d_{\rm a}}$ and $b:\cS\to\mathbb R$ such that, with
\[
e_0(s,a)
=
\langle\varphi(s,a),\omega_h^{k,0}-\omega_h^k\rangle
-\eta\bigl(Q_{z_k,h}^k(s,a)-b(s)\bigr),
\]
we have $\|e_0\|_\infty\leq\eta\epsilon_{\rm bias}+\nu$.
The baseline cancels in $e_0(s,a)-e_0(s,a')$. The normal equation for the exact minimizer in
\eqref{eq:explicit-actor-loss} gives
\begin{equation}
G_{\rm exp}(\omega_h^{k+1}-\omega_h^{k,0})
=
-\sum_{(s,a,a')\in\cD_{\rm exp}}
\rho_{\rm exp}(s,a,a')\Delta\varphi(s,a,a')
\bigl(e_0(s,a)-e_0(s,a')\bigr).
\label{eq:linear-coreset-normal-equation}
\end{equation}
Taking the inner product of
\eqref{eq:linear-coreset-normal-equation} with
$\omega_h^{k+1}-\omega_h^{k,0}$ and applying Cauchy-Schwarz gives
\[
\|\omega_h^{k+1}-\omega_h^{k,0}\|_{G_{\rm exp}}
\leq
\left(\sum_{(s,a,a')\in\cD_{\rm exp}}\rho_{\rm exp}(s,a,a')
\bigl(e_0(s,a)-e_0(s,a')\bigr)^2\right)^{1/2}
\leq2(\eta\epsilon_{\rm bias}+\nu).
\]
Since every $\Delta\varphi(s,a,a')$ belongs to
$\operatorname{range}(G_{\rm exp})$, the pseudoinverse Cauchy-Schwarz
inequality and the leverage bound give
\begin{equation}
\begin{aligned}
\left|\left\langle\Delta\varphi(s,a,a'),
\omega_h^{k+1}-\omega_h^{k,0}\right\rangle\right|
&\leq
\|\Delta\varphi(s,a,a')\|_{G_{\rm exp}^{\dagger}}
\|\omega_h^{k+1}-\omega_h^{k,0}\|_{G_{\rm exp}}
\\
&\leq
2\kappa_G(\eta\epsilon_{\rm bias}+\nu).
\end{aligned}
\label{eq:linear-coreset-prediction-transfer}
\end{equation}
Consequently, for
\[
\Delta_{k,h}(s,a)
=\langle\varphi(s,a),\omega_h^{k+1}-\omega_h^k\rangle
-\eta Q_{z_k,h}^k(s,a),
\]
the triangle inequality gives
$\sup_{s,a,a'}|\Delta_{k,h}(s,a)-\Delta_{k,h}(s,a')|
\leq2(\kappa_G+1)(\eta\epsilon_{\rm bias}+\nu)$.
Letting $\nu\downarrow0$ yields
\begin{equation}
\sup_{s,a,a'}|\Delta_{k,h}(s,a)-\Delta_{k,h}(s,a')|
\leq2\eta(\kappa_G+1)\epsilon_{\rm bias}
=\eta\epsilon_{\rm act}.
\label{eq:actor-residual-action-difference}
\end{equation}
No bound on the action-independent part of $\Delta_{k,h}$ is needed, since
\[
\pi_h^{k+1}(a\mid s)
=
\frac{\widetilde\pi_h^{k+1}(a\mid s)e^{\Delta_{k,h}(s,a)}}
{\mathbb E_{a'\sim\widetilde\pi_h^{k+1}(\cdot\mid s)}
[e^{\Delta_{k,h}(s,a')}] }.
\]
Therefore, for every state $s$ and $p\in\Delta(\cA)$, suppressing $s$ in the
next display,
\begin{align*}
\KL(p\|\pi_h^{k+1})-\KL(p\|\widetilde\pi_h^{k+1})
&=-\mathbb E_p[\Delta_{k,h}]
+\log\mathbb E_{\widetilde\pi_h^{k+1}}[e^{\Delta_{k,h}}]
\\
&\leq\max_a\Delta_{k,h}(s,a)-\min_a\Delta_{k,h}(s,a)
\leq\eta\epsilon_{\rm act}.
\end{align*}
Taking $p=\pi_{\tau,h}^\star(\cdot\mid s)$
proves \eqref{eq:master-projection-bound} with
$\epsilon_{\rm act}=2(\kappa_G+1)\epsilon_{\rm bias}$.
\end{proof}

\begin{proof}[Proof of the recursion bound in Theorem~\ref{thm:master-recursion}]
Work on the joint event in Assumption~\ref{ass:master-ope} and fix
$k\in[K-1]$. We first derive a one-step inequality.

\emph{Primal progress.}
The entropy identity
$V_{\psi_k}^{\pi_\tau^\star}-V_{\psi^{\pi_\tau^\star}}^{\pi_\tau^\star}
=\sum_h\mathbb E_{d_h^{\pi_\tau^\star}}[\KL_{k,h}]$
separates the regularisation term from the shaped reward $z_k$.
Define the Bellman residual
\[
\mathcal E_{k,h}(s,a)
\triangleq
Q_{z_k,h}^k(s,a)
-(\Bell_{z_k,h}^{\pi^k}Q_{z_k,h+1}^k)(s,a)
\geq0.
\]
Bellman telescoping gives
\begin{align*}
V_{z_k}^{\pi_\tau^\star}-V_{z_k}^k
={}&
\sum_{h=1}^H
\mathbb E_{s\sim d_h^{\pi_\tau^\star}}
\left[
\left\langle
\pi_{\tau,h}^\star(\cdot\mid s)-\pi_h^k(\cdot\mid s),
Q_{z_k,h}^k(s,\cdot)
\right\rangle
\right]
\\
&-
\sum_{h=1}^H
\mathbb E_{(s,a)\sim d_h^{\pi_\tau^\star}}
[\mathcal E_{k,h}(s,a)].
\end{align*}
Dropping the nonnegative residual term and using
$V_{z_k}^k-V_{z_k}^{\pi^k}\leq\beta_k$ yields
\begin{align*}
L_\tau(\pi_\tau^\star,\lambda_k)-L_\tau(\pi^k,\lambda_k)
&=V_{z_k}^{\pi_\tau^\star}-V_{z_k}^{\pi^k}
-\tau\sum_{h=1}^H\mathbb E_{d_h^{\pi_\tau^\star}}[\KL_{k,h}]
\\
&\leq\sum_{h=1}^H\mathbb E_{d_h^{\pi_\tau^\star}}
\bigl[\langle\pi_{\tau,h}^\star-\pi_h^k,Q_{z_k,h}^k\rangle
-\tau\KL_{k,h}\bigr]+\beta_k.
\end{align*}

For the ideal exponential update, the log-normaliser identity gives
\[
\KL_{k,h}
-\KL(\pi_{\tau,h}^\star\|\widetilde\pi_h^{k+1})
=\eta\langle\pi_{\tau,h}^\star,Q_{z_k,h}^k\rangle
-\log\mathbb E_{\pi_h^k}[e^{\eta Q_{z_k,h}^k}].
\]
Since $z_k\geq0$ and $Q_{z_k,H+1}^k=0$, optimism implies
$Q_{z_k,h}^k\geq0$ by backward induction.
Using $e^x\leq1+x+x^2e^x/2$ for $x\geq0$, $\log(1+x)\leq x$,
and the local-norm bound \eqref{eq:master-local-norm}, we obtain
\[
\log\mathbb E_{\pi_h^k}[e^{\eta Q_{z_k,h}^k}]
\leq\eta\langle\pi_h^k,Q_{z_k,h}^k\rangle
+\frac{\eta^2}{2}C_{\eta,\tau,\Lambda,k}.
\]
Combining the preceding three displays gives
\begin{align}
L_\tau(\pi_\tau^\star,\lambda_k)-L_\tau(\pi^k,\lambda_k)
&\leq
\frac1\eta\sum_{h=1}^H
\mathbb E_{d_h^{\pi_\tau^\star}}
\left[
\KL(\pi_{\tau,h}^\star\|\pi_h^k)
-\KL(\pi_{\tau,h}^\star\|\widetilde\pi_h^{k+1})
\right]
\nonumber\\
&\quad
-\tau\sum_{h=1}^H
\mathbb E_{d_h^{\pi_\tau^\star}}
\left[\KL(\pi_{\tau,h}^\star\|\pi_h^k)\right]
+\frac{\eta H}{2}C_{\eta,\tau,\Lambda,k}+\beta_k.
\label{eq:master-primal-ideal-bound}
\end{align}
The state argument of each KL divergence is implicit. By
\eqref{eq:master-projection-bound}, the first line on the right-hand side of
\eqref{eq:master-primal-ideal-bound} is at most
\begin{equation}
\frac1\eta\sum_{h=1}^H
\mathbb E_{d_h^{\pi_\tau^\star}}
\left[
\KL(\pi_{\tau,h}^\star\|\pi_h^k)
-\KL(\pi_{\tau,h}^\star\|\pi_h^{k+1})
\right]
+H\epsilon_{\rm act}.
\label{eq:master-projected-primal-bound}
\end{equation}

\emph{Dual progress.}
Set $g_k=V_u^k-c+\tau\lambda_k$. The dual update is
$\lambda_{k+1}=\proj_\Lambda(\lambda_k-\eta g_k)$.
Because $V_u^k,c\in[0,H]^I$ and $\lambda_k\in\Lambda$,
$\|g_k\|_2\leq G_\tau$. Nonexpansiveness of projection therefore gives
\[
\langle V_u^k-c+\tau\lambda_k,\lambda_k-\lambda_\tau^\star\rangle
\leq
\frac{\|\lambda_k-\lambda_\tau^\star\|_2^2
-\|\lambda_{k+1}-\lambda_\tau^\star\|_2^2}{2\eta}
+\frac{\eta G_\tau^2}{2}.
\]
Replacing $V_u^k$ by $V_u^{\pi^k}$ costs at most
$\|\lambda_k-\lambda_\tau^\star\|_1\beta_k
\leq I\lambda_{\max}\beta_k$.
Moreover,
\begin{align*}
&L_\tau(\pi^k,\lambda_k)-L_\tau(\pi^k,\lambda_\tau^\star)
+\frac\tau2\|\lambda_k-\lambda_\tau^\star\|_2^2
\\
&\hspace{30mm}=
\left\langle
V_u^{\pi^k}-c+\tau\lambda_k,
\lambda_k-\lambda_\tau^\star
\right\rangle.
\end{align*}
Combining this identity with the projection inequality yields
\begin{align}
L_\tau(\pi^k,\lambda_k)-L_\tau(\pi^k,\lambda_\tau^\star)
+\frac\tau2\|\lambda_k-\lambda_\tau^\star\|_2^2
&\leq
\frac{\|\lambda_k-\lambda_\tau^\star\|_2^2
-\|\lambda_{k+1}-\lambda_\tau^\star\|_2^2}{2\eta}
\nonumber\\
&\quad+\frac{\eta G_\tau^2}{2}+I\lambda_{\max}\beta_k.
\label{eq:master-dual-bound}
\end{align}
\emph{Contraction.}
Substituting \eqref{eq:master-projected-primal-bound} into
\eqref{eq:master-primal-ideal-bound}, adding \eqref{eq:master-dual-bound}, using the saddle
inequality
$L_\tau(\pi_\tau^\star,\lambda_k)
\geq L_\tau(\pi_\tau^\star,\lambda_\tau^\star)
\geq L_\tau(\pi^k,\lambda_\tau^\star)$,
and multiplying by $\eta$ gives
\begin{equation}
\Phi_{k+1}
\leq
(1-\eta\tau)\Phi_k
+\frac{\eta^2}{2}(HC_{\eta,\tau,\Lambda,k}+G_\tau^2)
+\eta(1+I\lambda_{\max})\beta_k
+\eta H\epsilon_{\rm act}.
\label{eq:master-recursion-proof}
\end{equation}
Iterating \eqref{eq:master-recursion-proof} gives
\begin{align*}
\Phi_K
\leq{}&(1-\eta\tau)^{K-1}\Phi_1
\\
&+\sum_{k=1}^{K-1}(1-\eta\tau)^{K-1-k}
\left[
\frac{\eta^2}{2}(HC_{\eta,\tau,\Lambda,k}+G_\tau^2)
+\eta(1+I\lambda_{\max})\beta_k
+\eta H\epsilon_{\rm act}
\right].
\end{align*}
The inequality $(1-\eta\tau)^{K-1}\leq e^{-\eta\tau(K-1)}$ proves
\eqref{eq:master-recursion}.
\end{proof}

\subsection{Conversion from the regularised saddle point}

\begin{proof}[Proof of the conversion bounds in Theorem~\ref{thm:master-recursion}]
We first bound the bias at $\pi_\tau^\star$, then control the value change
from $\pi_\tau^\star$ to $\pi^k$ using the KL part of $\Phi_k$.

\emph{The regularised multiplier $\lambda_{\tau,i}^\star$ is strictly smaller than $\lambda_{\max}$.}
Primal optimality, Slater's condition, and nonnegativity of reward and
entropy give
\begin{equation}
L_\tau(\pi_\tau^\star,\lambda_\tau^\star)
\geq L_\tau(\bar\pi,\lambda_\tau^\star)
\geq\xi\|\lambda_\tau^\star\|_1.
\label{eq:master-slater-dual-lower}
\end{equation}
Dual optimality allows comparison with the zero multiplier:
\begin{equation}
L_\tau(\pi_\tau^\star,\lambda_\tau^\star)
\leq L_\tau(\pi_\tau^\star,0)
\leq H+\tau H\log|\cA|.
\label{eq:master-zero-dual-upper}
\end{equation}
Therefore
\begin{equation}
\|\lambda_\tau^\star\|_1
\leq\frac{H(1+\tau\log|\cA|)}{\xi}
\leq\frac{H(1+\log|\cA|)}{\xi}
<\lambda_{\max}.
\label{eq:master-regularised-dual-radius}
\end{equation}
\emph{Bias at the saddle point.}
By \eqref{eq:master-regularised-dual-radius},
$\lambda_{\tau,i}^\star<\lambda_{\max}$ (i.e.,
$\lambda_{\tau,i}^\star$ is \textit{strictly} smaller than
$\lambda_{\max}$) for every $i\in[I]$, so the upper box constraint
$\lambda_i\leq\lambda_{\max}$ does not bind.
 For fixed $\pi_\tau^\star$, the
minimization over the $i$-th multiplier coordinate is
\begin{equation*}
\lambda_{\tau,i}^\star
\in
\operatorname*{arg\,min}_{0\leq\lambda_i\leq\lambda_{\max}}
\left\{
\lambda_i\bigl(V_{u_i}^{\pi_\tau^\star}-c_i\bigr)
+\frac{\tau}{2}\lambda_i^2
\right\}.
\end{equation*}
The solution is therefore
\begin{equation}
\lambda_{\tau,i}^\star
=
\min\left\{
\lambda_{\max},
\left[
\frac{c_i-V_{u_i}^{\pi_\tau^\star}}{\tau}
\right]_+
\right\}
=
\left[
\frac{c_i-V_{u_i}^{\pi_\tau^\star}}{\tau}
\right]_+,
\label{eq:master-dual-kkt}
\end{equation}
where the final equality follows from
$\lambda_{\tau,i}^\star<\lambda_{\max}$. Equivalently, if
$\lambda_{\tau,i}^\star=0$, lower-bound optimality gives
$V_{u_i}^{\pi_\tau^\star}-c_i\geq0$; otherwise,
$0<\lambda_{\tau,i}^\star<\lambda_{\max}$, and stationarity gives
$V_{u_i}^{\pi_\tau^\star}-c_i=-\tau\lambda_{\tau,i}^\star$.
Hence,
\begin{equation}
\bigl[c_i-V_{u_i}^{\pi_\tau^\star}\bigr]_+
=
\tau\lambda_{\tau,i}^\star
\leq
\frac{\tau H(1+\tau\log|\cA|)}{\xi}.
\label{eq:master-saddle-violation}
\end{equation}

For reward, primal optimality at $\lambda_\tau^\star$ gives
\begin{align*}
V_r^{\pi^\star}
\leq{}&V_r^{\pi_\tau^\star}
+\lambda_\tau^{\star\top}(V_u^{\pi_\tau^\star}-c)
+\tau\bigl(V_{\psi^{\pi_\tau^\star}}^{\pi_\tau^\star}
-V_{\psi^{\pi^\star}}^{\pi^\star}\bigr)
\\
&-\lambda_\tau^{\star\top}(V_u^{\pi^\star}-c).
\end{align*}
The last term is nonpositive because $\pi^\star$ is feasible.
The first multiplier term equals $-\tau\|\lambda_\tau^\star\|_2^2$ by
\eqref{eq:master-dual-kkt}, and policy entropy lies in
$[0,H\log|\cA|]$. Consequently,
\begin{equation}
V_r^{\pi^\star}-V_r^{\pi_\tau^\star}
\leq\tau H\log|\cA|.
\label{eq:master-saddle-reward-bias}
\end{equation}
\emph{Transfer to the current iterate.}
For any per-stage function $f_h\in[0,1]$, the performance-difference
identity uses the comparator occupancy already present in $\Phi_k$:
\[
V_f^{\pi_\tau^\star}-V_f^{\pi^k}
=\sum_{h=1}^H\mathbb E_{d_h^{\pi_\tau^\star}}
\bigl[\langle\pi_{\tau,h}^\star-\pi_h^k,Q_{f,h}^{\pi^k}\rangle\bigr].
\]
Since $0\leq Q_{f,h}^{\pi^k}\leq H$, Pinsker's inequality bounds the
absolute value of each integrand by $H\sqrt{2\KL_{k,h}}$.
Cauchy-Schwarz over the stages and their state distributions gives
\begin{equation}
|V_f^{\pi^k}-V_f^{\pi_\tau^\star}|
\leq
H^{3/2}
\left(
2\sum_{h=1}^H
\mathbb E_{d_h^{\pi_\tau^\star}}
[\KL(\pi_{\tau,h}^\star\|\pi_h^k)]
\right)^{1/2}
\leq H^{3/2}\sqrt{2\Phi_k}.
\label{eq:master-value-from-kl}
\end{equation}
Applying \eqref{eq:master-value-from-kl} to $r$ and each $u_i$, then using
\eqref{eq:master-saddle-violation} and \eqref{eq:master-saddle-reward-bias},
gives the two inequalities for every $k$. Setting $k=K$ proves
Theorem~\ref{thm:master-recursion}. Since both right-hand sides are
nonnegative, the same bounds hold for the positive parts used below.
\end{proof}

\begin{remark}[Comparison with the radius-dependent conversion]
\label{rem:fixed-domain-conversion}
The prior analysis does not exclude the possibility that a regularised
multiplier reaches the boundary of $\Lambda$. In the extreme case,
$\lambda_\tau^\star=\lambda_{\max}e_i$, so only its $i$-th coordinate reaches
$\lambda_{\max}$. The clipped optimality condition then implies only
$\bigl[c_i-V_{u_i}^{\pi_\tau^\star}\bigr]_+
\geq \tau\lambda_{\max},
$ and therefore provides no upper bound on the constraint violation.

To cover this case,
\citet{müller2024trulynoregretlearningconstrained} compare the regularised
saddle point with the feasible multiplier $\lambda=\lambda_{\max}e_i$. After
dropping the nonnegative regularisation terms at $\pi^\star$, the regularised
saddle inequality gives
\begin{equation*}
\begin{aligned}
V_r^{\pi^\star}
+\lambda_\tau^{\star\top}(V_u^{\pi^\star}-c)
\leq
V_r^{\pi_\tau^\star}
+\lambda_{\max}
\bigl(V_{u_i}^{\pi_\tau^\star}-c_i\bigr)
+\frac{\tau}{2}\lambda_{\max}^2
+\tau V_{\psi^{\pi_\tau^\star}}^{\pi_\tau^\star}.
\end{aligned}
\end{equation*}
Let $\lambda^\star$ be an optimal multiplier for the unregularised CMDP.
Optimality of $(\pi^\star,\lambda^\star)$ gives
\begin{equation*}
V_r^{\pi_\tau^\star}-V_r^{\pi^\star}
\leq
\lambda^{\star\top}
\bigl(V_u^{\pi^\star}-V_u^{\pi_\tau^\star}\bigr).
\end{equation*}
Adding these two inequalities eliminates the reward terms. Moreover,
feasibility of $\pi^\star$ and nonnegativity of $\lambda_\tau^\star$ imply
$\lambda_\tau^{\star\top}(V_u^{\pi^\star}-c)\geq0. $ It follows that
\begin{equation*}
\lambda_{\max}\bigl(c_i-V_{u_i}^{\pi_\tau^\star}\bigr)
\leq
\frac{\tau}{2}\lambda_{\max}^2
+\lambda^{\star\top}
\bigl(V_u^{\pi^\star}-V_u^{\pi_\tau^\star}\bigr)
+\tau V_{\psi^{\pi_\tau^\star}}^{\pi_\tau^\star}.
\end{equation*}
Since
$\|V_u^{\pi^\star}-V_u^{\pi_\tau^\star}\|_\infty\leq H$ and
$\|\lambda^\star\|_1\leq H/\xi$, the inner product is at most
$H^2/\xi$. Also,
$V_{\psi^{\pi_\tau^\star}}^{\pi_\tau^\star}
\leq H\log|\mathcal A|$. The resulting right-hand side is
nonnegative; hence, taking positive parts and dividing by
$\lambda_{\max}$ gives
\begin{equation*}
\bigl[c_i-V_{u_i}^{\pi_\tau^\star}\bigr]_+
\leq
\frac{\tau}{2}\lambda_{\max}
+\frac{1}{\lambda_{\max}}
\left(\frac{H^2}{\xi}+\tau H\log|\mathcal A|\right).
\end{equation*}
Adding the policy-transfer error and relaxing
$\tau\lambda_{\max}/2\leq\tau\lambda_{\max}$ gives
\begin{equation}
\bigl[c_i-V_{u_i}^{\pi^k}\bigr]_+
\leq
H^{3/2}\sqrt{2\Phi_k}
+\tau\lambda_{\max}
+\frac{1}{\lambda_{\max}}
\left(
\frac{H^2}{\xi}
+\tau H\log|\cA|
\right).
\label{eq:radius-dependent-conversion-comparison}
\end{equation}

In contrast, our analysis rules out the saturated case by proving
$\lambda_{\tau,i}^\star<\lambda_{\max}$. The coordinatewise optimality
condition \eqref{eq:master-dual-kkt} therefore gives directly
\begin{equation*}
\bigl[c_i-V_{u_i}^{\pi_\tau^\star}\bigr]_+
=
\tau\lambda_{\tau,i}^\star
\leq
\frac{\tau H(1+\tau\log|\cA|)}{\xi}.
\end{equation*}
Thus, our conversion avoids both the $\tau\lambda_{\max}$ and
$1/\lambda_{\max}$ terms. Consequently, $\lambda_{\max}$ remains independent
of $\epsilon$, and $\tau=\Theta(\epsilon)$ suffices, whereas the
radius-dependent conversion of \citet{müller2024trulynoregretlearningconstrained} requires
$\lambda_{\max}=\Theta(\epsilon^{-1})$ and $\tau=\Theta(\epsilon^2)$.
Combined with the geometric contraction in
Theorem~\ref{thm:master-recursion}, this yields the sharper accuracy
dependence detailed in Appendix~\ref{app:accuracy-improvement}.
\end{remark}

\paragraph{Conventions for the complexity proofs.}
The remaining proofs use the global notation of the main paper. Universal
numerical constants are denoted by $C$ or $c$ only locally; their values may
change between inequalities. The notation $\widetilde O$ suppresses
logarithmic factors in the horizon, representation complexities, action and
constraint counts, inverse confidence and accuracy levels, and the
parameter radii appearing in covering arguments. Polynomial dependence on
the displayed problem parameters is retained.

For the simplified rates, choose a Slater witness with $0<\xi\leq1$.
A larger certified margin can always be replaced by $\min\{\xi,1\}$;
the resulting complexity is evaluated at that chosen margin. The
accuracy-dependent comparison is informative when
$\epsilon_{\rm act}\leq\xi/H^2$: otherwise the displayed approximation
floor $H^{5/3}\xi^{-1/3}\epsilon_{\rm act}^{1/3}$ is at least $H$, which
already bounds every reward gap and constraint violation. The sample
bounds retain the outer maximum induced by the cap $\tau\leq1$; this
maximum can be removed when the cap is inactive.


\section{Proofs for Section~\ref{section:linear-cmdp-explicit-actor}}

We first verify the three oracle conditions, then substitute their bounds into the master
recursion. Fresh batches require uniform confidence only over the critic class; off-policy data
also require uniformity over the actor class. The local-norm calculation is common to
both modes and is proved once below. Throughout this section, $C,c>0$ denote universal
constants.

\subsection{Technical lemmas for uniform confidence}
\label{app:explicit-linear-confidence-lemmas}

For any bounded $V$, define
\begin{equation}
\theta_{V,h}
\triangleq
\int_{\cS}V(s')\,\mu_h(ds').
\label{eq:explicit-continuation-coefficient}
\end{equation}
Assumption~\ref{ass:linear-cmdp} gives
\begin{equation}
P_hV(s,a)=\langle\phi(s,a),\theta_{V,h}\rangle,
\qquad
\|\theta_{V,h}\|_2\leq\sqrt{d_{\rm c}}\|V\|_\infty.
\label{eq:explicit-continuation-realizability}
\end{equation}
For $j\in\{r,u_1,\ldots,u_I,\psi\}$, set
\begin{equation}
\begin{aligned}
\theta_{j,h}(V)
&\triangleq
\begin{cases}
\upsilon_{r,h}+\theta_{V,h}, & j=r,\\
\upsilon_{u_i,h}+\theta_{V,h}, & j=u_i,\ i\in[I],\\
\theta_{V,h}, & j=\psi,
\end{cases}
\\
R_{j,h}
&\triangleq
\begin{cases}
H+1-h, & j\in\{r,u_1,\ldots,u_I\},\\
(H-h)\log|\cA|, & j=\psi.
\end{cases}
\end{aligned}
\label{eq:explicit-common-component-parameters}
\end{equation}
For the backward recursion, write
$\theta_{j,h}^k\triangleq\theta_{j,h}(V_{j,h+1}^k)$.
Given a policy $\pi$, define
$\ell_{j,h}^{\pi}(s,a)=0$ for $j\in\{r,u_1,\ldots,u_I\}$ and
$\ell_{\psi,h}^{\pi}(s,a)=-\log\pi_h(a\mid s)$. For $B>0$, let
\begin{equation}
\begin{aligned}
v_{j,h}^{\pi,w,M}(s)
&\triangleq
\mathbb E_{a\sim\pi_h(\cdot\mid s)}
\Bigl[
\ell_{j,h}^{\pi}(s,a)
\\[-2pt]
&\hspace{20mm}{}
+\mathrm{Truncate}_{[0,R_{j,h}]}
\left(
\langle\phi(s,a),w\rangle
+\alpha_j\sqrt{\phi(s,a)^\top M\phi(s,a)}
\right)
\Bigr],
\\
\mathcal V_{j,h}(\pi,B)
&\triangleq
\left\{
v_{j,h}^{\pi,w,M}\;\middle|\;
\|w\|_2\leq B,\ 0\preceq M\preceq I_{d_{\rm c}}
\right\}.
\end{aligned}
\label{eq:explicit-fixed-policy-value-class}
\end{equation}
Let $\mathcal V_h(\pi,B)\triangleq
\bigcup_{j\in\{r,u_1,\ldots,u_I,\psi\}}\mathcal V_{j,h}(\pi,B)$, and at the terminal
stage set $\mathcal V_{j,H+1}(\pi,B)=\mathcal V_{H+1}(\pi,B)=\{0\}$.

\begin{lemma}[Covering the linear OPE value classes]
\label{lem:explicit-linear-value-cover}
Let $\bar\alpha=\max\{\alpha_r,\alpha_u,\alpha_\psi\}$. For every fixed full-support policy
$\pi$, $B>0$, and $\varepsilon\in(0,1]$,
\begin{equation}
\log\mathcal N_\infty(\varepsilon,\mathcal V_h(\pi,B))
\leq
C(d_{\rm c}^2+d_{\rm c})
\log\left(\frac{C(1+B+\bar\alpha+d_{\rm c}+I)}{\varepsilon}\right).
\label{eq:explicit-fixed-policy-value-cover}
\end{equation}
If a class $\mathcal F_h$ induces policies $\pi(\omega)$ and corresponding class elements satisfy
$\|v_{j,h}^{\pi(\omega),w,M}-v_{j,h}^{\pi(\omega'),w,M}\|_\infty
\leq L_h\|f_{\omega,h}-f_{\omega',h}\|_\infty$, then an
$\varepsilon/(2L_h)$-net of $\mathcal F_h$, together with $\varepsilon/2$-nets of the
corresponding fixed-policy classes, is an $\varepsilon$-net of their union.
\end{lemma}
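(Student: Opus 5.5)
The fixed-policy bound \eqref{eq:explicit-fixed-policy-value-cover} follows from a parametric covering argument over the pair $(w,M)$. The union over $j$ only adds a factor $I+2$ to the number of nets, so the main task is to show that $(w,M)\mapsto v_{j,h}^{\pi,w,M}$ is Lipschitz in sup-norm. The entropy term $\ell_{\psi,h}^{\pi}$ does not depend on $(w,M)$. It therefore cancels in differences, and finiteness of its expectation for a full-support $\pi$ is the only place full support is used. Truncation to $[0,R_{j,h}]$ is $1$-Lipschitz, and averaging over $a\sim\pi_h(\cdot\mid s)$ is a contraction in sup-norm. Hence for fixed $j$,
\[
\|v_{j,h}^{\pi,w,M}-v_{j,h}^{\pi,w',M'}\|_\infty
\leq\sup_{s,a}\bigl|\langle\phi(s,a),w-w'\rangle\bigr|
+\alpha_j\sup_{s,a}\bigl|\sqrt{\phi^\top M\phi}-\sqrt{\phi^\top M'\phi}\bigr|.
\]
The first term is at most $\|w-w'\|_2$ because $\|\phi\|_2\leq1$. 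For the second term I use $|\sqrt x-\sqrt y|\leq\sqrt{|x-y|}$ and $|\phi^\top(M-M')\phi|\leq\|M-M'\|_F$, which gives at most $\alpha_j\sqrt{\|M-M'\|_F}$.

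I then take an $\varepsilon/2$-net of the Euclidean ball of radius $B$ in $\mathbb R^{d_{\rm c}}$, which has log-size $d_{\rm c}\log(1+4B/\varepsilon)$. I also take an $\varepsilon^2/(4\bar\alpha^2)$-net in Frobenius norm of $\{0\preceq M\preceq I\}$. This set lies in a Frobenius ball of radius $\sqrt{d_{\rm c}}$ in $d_{\rm c}^2$ dimensions, so the net has log-size $d_{\rm c}^2\log(1+8\sqrt{d_{\rm c}}\bar\alpha^2/\varepsilon^2)$. Nets need not lie inside the set, and external nets only change constants. Together with the factor $I+2$ for the union over $j$, these log-sizes give $C(d_{\rm c}^2+d_{\rm c})\log(C(1+B+\bar\alpha+d_{\rm c}+I)/\varepsilon)$. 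The squared $\varepsilon$ and the powers of $\bar\alpha$ and $d_{\rm c}$ inside the logarithm are absorbed into the constant $C$, using $\varepsilon\leq1$.

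The second claim is a triangle-inequality argument. Fix an $\varepsilon/(2L_h)$-net $\{\omega_m\}$ of $\mathcal F_h$ and, for each $m$, an $\varepsilon/2$-net of the fixed-policy class $\mathcal V_h(\pi(\omega_m),B)$. Let $v_{j,h}^{\pi(\omega),w,M}$ be in the union. I choose $\omega_m$ with $\|f_{\omega,h}-f_{\omega_m,h}\|_\infty\leq\varepsilon/(2L_h)$. The hypothesis then gives $\|v^{\pi(\omega),w,M}-v^{\pi(\omega_m),w,M}\|_\infty\leq\varepsilon/2$. Next I choose a net point within $\varepsilon/2$ of $v^{\pi(\omega_m),w,M}$, and the two bounds combine to distance at most $\varepsilon$.

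The main obstacle is the square-root bonus, whose Lipschitz dependence on $M$ fails near zero. The $\sqrt{|x-y|}$ bound handles this at the cost of covering $M$ at the finer scale $\varepsilon^2$, which only affects the logarithm.
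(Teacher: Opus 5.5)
Your proposal is correct and follows essentially the same route as the paper. Both arguments use nonexpansiveness of truncation and policy averaging, the bound $|\sqrt{\phi^\top M\phi}-\sqrt{\phi^\top M'\phi}|\le\sqrt{\|M-M'\|_{\rm F}}$, an $\varepsilon/2$-net of the $w$-ball together with an $\varepsilon^2/(4\bar\alpha^2)$ Frobenius net of the PSD contractions, a union over the $I+2$ components, and the triangle-inequality product net for the second claim. The only cosmetic difference is that you cover $M$ in $d_{\rm c}^2$ ambient dimensions rather than the paper's $d_{\rm c}(d_{\rm c}+1)/2$ symmetric dimensions, which affects only constants.
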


\begin{proof}
Truncation and policy averaging are nonexpansive. If
$\|w-w'\|_2\leq\varepsilon/2$, then
$|\langle\phi,w-w'\rangle|\leq\varepsilon/2$. Moreover,
\begin{equation*}
\left|
\sqrt{\phi^\top M\phi}-\sqrt{\phi^\top M'\phi}
\right|
\leq
\sqrt{\|M-M'\|_{\rm op}}
\leq
\sqrt{\|M-M'\|_{\rm F}}.
\end{equation*}
Hence an $\varepsilon/2$-cover of the radius-$B$ Euclidean ball and an
$\varepsilon^2/(4\bar\alpha^2)$-cover of the positive-semidefinite contractions in Frobenius norm
give an $\varepsilon$-cover of each component class. These parameter sets have dimensions
$d_{\rm c}$ and $d_{\rm c}(d_{\rm c}+1)/2$ and radii $B$ and $\sqrt{d_{\rm c}}$, respectively.
Their volumetric covering bounds, followed by a union over the $I+2$ components, give
\begin{equation*}
\log\mathcal N_\infty(\varepsilon,\mathcal V_h(\pi,B))
\leq
d_{\rm c}\log\left(1+\frac{4B}{\varepsilon}\right)
+\frac{d_{\rm c}(d_{\rm c}+1)}2
\log\left(1+\frac{8\sqrt{d_{\rm c}}\,\bar\alpha^2}{\varepsilon^2}\right)
+\log(I+2),
\end{equation*}
which implies \eqref{eq:explicit-fixed-policy-value-cover}.

For the second claim, take an $\varepsilon/(2L_h)$-net of $\mathcal F_h$. At every actor in this
net, take an $\varepsilon/2$-net of the corresponding fixed-policy value class. The assumed
transfer inequality and the triangle inequality show that the resulting product net has radius
$\varepsilon$, which proves the second assertion.
\end{proof}

For an adapted stage-$h$ sample prefix and any value function $V$, write
\begin{equation*}
\begin{aligned}
\Sigma_{h,m}
&=I_{d_{\rm c}}+\sum_{\ell=1}^m\phi_h^\ell(\phi_h^\ell)^\top,
\\
\widehat\theta_{j,h,m}(V)
&=\Sigma_{h,m}^{-1}\sum_{\ell=1}^m
\phi_h^\ell\bigl(y_{j,h}^{\ell}+V(s_{h+1}^{\ell})\bigr),
\\
\zeta_{j,h}^{\ell}(V)
&=y_{j,h}^{\ell}+V(s_{h+1}^{\ell})
-\langle\phi_h^\ell,\theta_{j,h}(V)\rangle.
\end{aligned}
\end{equation*}
Here $\mathcal G_{\ell,h}$ denotes the information available through
$(s_h^\ell,a_h^\ell)$, before observing $(y_{j,h}^{\ell},s_{h+1}^{\ell})$.

\begin{lemma}[Uniform confidence over a value function class]
\label{lem:explicit-uniform-linear-confidence}
Fix $(j,h)$ and a class $\mathcal W_{h+1}$ that is deterministic, or fixed before
a fresh batch is collected. Suppose that, uniformly over $V\in\mathcal W_{h+1}$ and $\ell$,
$\mathbb E[\zeta_{j,h}^{\ell}(V)\mid\mathcal G_{\ell,h}]=0$,
$|y_{j,h}^{\ell}+V(s_{h+1}^{\ell})|\leq Y_j$, and
$\|\theta_{j,h}(V)\|_2\leq S_j$. If
$\log\mathcal N_\infty(\varepsilon_0,\mathcal W_{h+1})\leq\Gamma$ and
$(\sqrt{m_{\max}}+\sqrt{d_{\rm c}})\varepsilon_0\leq1$, then, with probability at least
$1-\delta_0$, uniformly over $m\leq m_{\max}$, $V\in\mathcal W_{h+1}$, and $(s,a)$,
\begin{equation}
\left|
\langle\phi(s,a),\widehat\theta_{j,h,m}(V)-\theta_{j,h}(V)\rangle
\right|
\leq
\alpha_j\|\phi(s,a)\|_{\Sigma_{h,m}^{-1}},
\label{eq:explicit-generic-uniform-confidence}
\end{equation}
for
$\alpha_j=C\{%
Y_j\sqrt{d_{\rm c}\log(1+m_{\max}/d_{\rm c})+\Gamma
+\log(m_{\max}/\delta_0)}
+S_j+1\}$.
\end{lemma}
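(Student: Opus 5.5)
We prove the claim in three steps. First we decompose the ridge error, then we bound the noise term uniformly over a fixed net, and finally we transfer that bound to all of $\mathcal W_{h+1}$ by discretisation.

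\emph{Step 1: error decomposition.} For fixed $(j,h)$, $m$, and $V$, write $y_{j,h}^\ell+V(s_{h+1}^\ell)=\langle\phi_h^\ell,\theta_{j,h}(V)\rangle+\zeta_{j,h}^\ell(V)$. This gives
\[
\widehat\theta_{j,h,m}(V)-\theta_{j,h}(V)
=\Sigma_{h,m}^{-1}\sum_{\ell=1}^m\phi_h^\ell\zeta_{j,h}^\ell(V)
-\Sigma_{h,m}^{-1}\theta_{j,h}(V).
\]
Apply Cauchy--Schwarz in the $\Sigma_{h,m}^{-1}$ geometry. Then $|\langle\phi(s,a),\widehat\theta_{j,h,m}(V)-\theta_{j,h}(V)\rangle|$ is at most $\|\phi(s,a)\|_{\Sigma_{h,m}^{-1}}$ times
\[
\Bigl\|\sum_{\ell\le m}\phi_h^\ell\zeta_{j,h}^\ell(V)\Bigr\|_{\Sigma_{h,m}^{-1}}+\|\theta_{j,h}(V)\|_{\Sigma_{h,m}^{-1}}.
\]
Because $\Sigma_{h,m}\succeq I_{d_{\rm c}}$, the second term is at most $S_j$. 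It therefore remains to bound the self-normalised noise norm uniformly over $m\le m_{\max}$ and $V\in\mathcal W_{h+1}$ by $C\{Y_j\sqrt{\cdots}+1\}$.

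\emph{Step 2: fixed-$V$ concentration and union over a net.} Take an $\varepsilon_0$-net $\mathcal W^0$ of $\mathcal W_{h+1}$ in sup-norm, with $\log|\mathcal W^0|\le\Gamma$. Fix $V'\in\mathcal W^0$. The noises $\zeta_{j,h}^\ell(V')$ are then $\mathcal G_{\ell+1,h}$-measurable and conditionally mean zero given $\mathcal G_{\ell,h}$. They are bounded by $Y_j+\|\phi\|_2\|\theta_{j,h}(V')\|_2\le 2Y_j$, using $|\langle\phi,\theta_{j,h}(V')\rangle|=|\mathbb E[y+V'(s')\mid\mathcal G]|\le Y_j$. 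Hence they are $CY_j$-sub-Gaussian. Here the hypothesis matters: $\mathcal W_{h+1}$ is deterministic or fixed before the fresh batch, so $V'$ does not depend on the samples and the martingale structure is valid. The self-normalised bound of Abbasi-Yadkori et al., which is uniform over $m$ through its stopping-time form, then gives, with probability $1-\delta_0/|\mathcal W^0|$,
\[
\Bigl\|\sum_{\ell\le m}\phi_h^\ell\zeta_{j,h}^\ell(V')\Bigr\|_{\Sigma_{h,m}^{-1}}^2
\le CY_j^2\Bigl(d_{\rm c}\log(1+m_{\max}/d_{\rm c})+\log(|\mathcal W^0|/\delta_0)\Bigr).
\]
To be safe about uniformity over $m$, I would also union over $m\le m_{\max}$; this accounts for the $\log(m_{\max}/\delta_0)$ term. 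A union bound over $\mathcal W^0$ then contributes $\Gamma$.

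\emph{Step 3: discretisation transfer.} This is the main obstacle, because the constraint $(\sqrt{m_{\max}}+\sqrt{d_{\rm c}})\varepsilon_0\le1$ has to absorb the approximation error into the additive constant $1$. For $V\in\mathcal W_{h+1}$, choose $V'\in\mathcal W^0$ with $\|V-V'\|_\infty\le\varepsilon_0$. Write $\Delta_\ell=\zeta^\ell(V)-\zeta^\ell(V')=(V-V')(s_{h+1}^\ell)-\langle\phi_h^\ell,\theta_{V-V',h}\rangle$. By \eqref{eq:explicit-continuation-realizability}, $|\Delta_\ell|\le\varepsilon_0+\sqrt{d_{\rm c}}\varepsilon_0$. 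The key step is to bound $\|\sum_\ell\phi_h^\ell\Delta_\ell\|_{\Sigma_{h,m}^{-1}}$ using the projection (hat-matrix) inequality: $\Phi^\top(I+\Phi\Phi^\top)^{-1}\Phi\preceq I$ gives $\|\Phi^\top\Delta\|_{\Sigma^{-1}}\le\|\Delta\|_2\le\sqrt m\max_\ell|\Delta_\ell|$. This is better than the crude $m\varepsilon_0$ that the triangle inequality would give. It yields a contribution of at most $\sqrt{m_{\max}}(1+\sqrt{d_{\rm c}})\varepsilon_0\le 2$, and a slightly sharper split, such as bounding the two pieces separately, gives the constant $(\sqrt{m_{\max}}+\sqrt{d_{\rm c}})\varepsilon_0$ form. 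Either way the contribution is $O(1)$ and is absorbed into $C\cdot 1$.

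Combining the three steps gives \eqref{eq:explicit-generic-uniform-confidence} with the stated $\alpha_j$.
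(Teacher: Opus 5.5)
Your route is the paper's: you split the error via the normal equations into a self-normalised noise term and a ridge term bounded by $S_j$. You then apply the Abbasi-Yadkori bound with a union over an $\varepsilon_0$-net and over prefixes $m\le m_{\max}$. Finally, in the discretisation step, the projection bound $\|\sum_{\ell\le m}\phi_h^\ell x_\ell\|_{\Sigma_{h,m}^{-1}}\le\sqrt m\max_\ell|x_\ell|$ gives the same $\sqrt m\,\varepsilon_0$ term the paper uses.

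However, one step is false as written, though the repair is short. In Step 3 you bound $|\Delta_\ell|\le(1+\sqrt{d_{\rm c}})\varepsilon_0$ and then assert $\sqrt{m_{\max}}(1+\sqrt{d_{\rm c}})\varepsilon_0\le2$. The hypothesis $(\sqrt{m_{\max}}+\sqrt{d_{\rm c}})\varepsilon_0\le1$ controls $\sqrt{m_{\max}}\,\varepsilon_0$ and $\sqrt{d_{\rm c}}\,\varepsilon_0$ separately, but not the cross term $\sqrt{m_{\max}d_{\rm c}}\,\varepsilon_0$. For example, take $m_{\max}=d_{\rm c}$ and $\varepsilon_0=1/(2\sqrt{d_{\rm c}})$: the hypothesis holds with equality, yet $\sqrt{m_{\max}}(1+\sqrt{d_{\rm c}})\varepsilon_0=(1+\sqrt{d_{\rm c}})/2$. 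That cannot be absorbed into $C\cdot 1$ with a universal $C$. Your unspecified ``sharper split'' does not help if $\langle\phi_h^\ell,\theta_{V-V',h}\rangle$ is again bounded by Cauchy--Schwarz, because passing $m$ such terms through the projection bound still gives $\sqrt{m\,d_{\rm c}}\,\varepsilon_0$.

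The repair is a single observation. By the linear-CMDP identity, $\langle\phi_h^\ell,\theta_{V-V',h}\rangle=P_h(V-V')(s_h^\ell,a_h^\ell)$. Since $P_h(\cdot\mid s,a)$ is a probability distribution, this is at most $\|V-V'\|_\infty\le\varepsilon_0$ in absolute value. Equivalently, subtract the two conditional-mean identities you already invoke in Step 2. Then $|\Delta_\ell|\le 2\varepsilon_0$, the transfer costs at most $2\sqrt{m_{\max}}\,\varepsilon_0\le 2$, and it is absorbed. Because you bound the ridge term of $V$ itself directly by $S_j$, you do not even need the $\sqrt{d_{\rm c}}\,\varepsilon_0$ part of the hypothesis. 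In the paper's sketch, that part comes from $\|\theta_{V,h}-\theta_{V',h}\|_2\le\sqrt{d_{\rm c}}\,\varepsilon_0$.

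There is also a smaller slip in Step 2. The noise bound should read $Y_j+|\langle\phi_h^\ell,\theta_{j,h}(V')\rangle|\le 2Y_j$. The quantity $\|\phi\|_2\|\theta_{j,h}(V')\|_2$ is only controlled by $S_j$, and using it would put $S_j$ inside the square root, breaking the stated form of $\alpha_j$.
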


\begin{proof}
Let $\mathcal C$ be an $\varepsilon_0$-net of $\mathcal W_{h+1}$. For every
$V\in\mathcal C$, the ridge normal equations give
\begin{equation}
\widehat\theta_{j,h,m}(V)-\theta_{j,h}(V)
=
\Sigma_{h,m}^{-1}
\left(
\sum_{\ell=1}^m\phi_h^\ell\zeta_{j,h}^{\ell}(V)
-\theta_{j,h}(V)
\right).
\label{eq:explicit-generic-regression-decomposition}
\end{equation}
Self-normalized concentration and a union bound over $\mathcal C$ and $m\leq m_{\max}$ yield
\begin{equation*}
\left\|
\sum_{\ell=1}^m\phi_h^\ell\zeta_{j,h}^{\ell}(V)
\right\|_{\Sigma_{h,m}^{-1}}
\leq
C Y_j\sqrt{
d_{\rm c}\log(1+m_{\max}/d_{\rm c})
+\Gamma+\log(m_{\max}/\delta_0)}
\end{equation*}
simultaneously for all net elements and prefixes. The ridge term is at most
$\|\theta_{j,h}(V)\|_2$. For any $V'$ with
$\|V-V'\|_\infty\leq\varepsilon_0$,
\begin{equation*}
\left\|
\sum_{\ell=1}^m\phi_h^\ell(V-V')(s_{h+1}^\ell)
\right\|_{\Sigma_{h,m}^{-1}}
\leq\sqrt m\,\varepsilon_0,
\qquad
\|\theta_{V,h}-\theta_{V',h}\|_2
\leq\sqrt{d_{\rm c}}\,\varepsilon_0.
\end{equation*}
These terms are absorbed by the displayed choice of $\alpha_j$. Combining the preceding bounds
with \eqref{eq:explicit-generic-regression-decomposition} and Cauchy-Schwarz proves
\eqref{eq:explicit-generic-uniform-confidence}. If $\mathcal W_{h+1}$ is conditionally fixed,
the argument is applied after conditioning on the preceding history.
\end{proof}

\subsection{Analysis of the on-policy oracle}
\label{app:explicit-on-policy-oracle}

\begin{proof}[Proof of Lemma~\ref{lem:explicit-linear-ope-certificates}, on-policy branch]
Condition on the history $\mathcal F_{k-1}$ before iteration $k$. Then $\pi^k$ and
$\lambda_k$ are fixed, and the $N$ trajectories collected at iteration $k$ are conditionally
independent. Define
\begin{equation}
\iota_{\rm on}
\triangleq
\log\left(
\frac{C K N H(I+1)|\cA|d_{\rm c}(1+\lambda_{\max})}
{\delta\min\{1,\tau\}}
\right).
\label{eq:explicit-on-policy-log-factor}
\end{equation}
Choose, for a sufficiently large universal constant $C$,
\begin{equation}
\alpha_r=\alpha_u=C H\sqrt{(d_{\rm c}^2+d_{\rm c})\iota_{\rm on}},
\qquad
\alpha_\psi=C H\log(e|\cA|)\sqrt{(d_{\rm c}^2+d_{\rm c})\iota_{\rm on}}.
\label{eq:explicit-on-policy-confidence-radii}
\end{equation}
We prove the oracle properties on a common event of probability at least $1-\delta$.

\paragraph{Value class and uniform confidence.}
The reward and utility regression targets are bounded by $H$, while the entropy value
target is bounded by $H\log(e|\cA|)$. Since $\|\phi\|_2\leq1$ and
$0\preceq(\Sigma_h^k)^{-1}\preceq I_{d_{\rm c}}$, the normal equations give
\begin{equation}
\|\widehat\theta_{j,h}^k\|_2
\leq
\sum_{n=1}^N
\|\phi_h^{k,n}\|_2
\left|y_{j,h}^{k,n}+V_{j,h+1}^k(s_{h+1}^{k,n})\right|
\leq
NH\log(e|\cA|)
\triangleq B_N.
\label{eq:explicit-regression-vector-radius}
\end{equation}
Define
\begin{equation}
\mathcal V_{k,j,h}^{\rm on}
\triangleq
\mathcal V_{j,h}(\pi^k,B_N),
\qquad
\mathcal V_{k,h}^{\rm on}
\triangleq
\mathcal V_h(\pi^k,B_N).
\label{eq:explicit-on-policy-value-class}
\end{equation}
The backward recursion satisfies
\begin{equation}
V_{j,h}^k
=
v_{j,h}^{\pi^k,\widehat\theta_{j,h}^k,(\Sigma_h^k)^{-1}}
\in
\mathcal V_{k,j,h}^{\rm on}
\subseteq\mathcal V_{k,h}^{\rm on}.
\label{eq:explicit-on-policy-value-membership}
\end{equation}
Conditionally on $\mathcal F_{k-1}$, this class is fixed. Applying
Lemma~\ref{lem:explicit-linear-value-cover} with $B=B_N$ gives
\begin{equation}
\log\mathcal N_\infty(\varepsilon,\mathcal V_{k,h}^{\rm on})
\leq
C(d_{\rm c}^2+d_{\rm c})
\left[\iota_{\rm on}+\log(1+1/\varepsilon)\right].
\label{eq:explicit-on-policy-value-cover}
\end{equation}
Use the bounds
$Y_r=Y_u=H$, $Y_\psi=H\log(e|\cA|)$,
$S_r,S_u\leq2H\sqrt{d_{\rm c}}$, and
$S_\psi\leq H\sqrt{d_{\rm c}}\log(e|\cA|)$.
Applying Lemma~\ref{lem:explicit-uniform-linear-confidence} conditionally with
$\mathcal W_{h+1}=\mathcal V_{k,j,h+1}^{\rm on}$, $m_{\max}=N$,
$\varepsilon_0=(2KNHd_{\rm c})^{-2}$, and
$\delta_0=\delta/[2KH(I+2)]$, followed by a union bound over $(k,h,j)$, gives, with probability
at least $1-\delta/2$,
\begin{equation}
\left|
\left\langle
\phi(s,a),\widehat\theta_{j,h}^k-\theta_{j,h}^k
\right\rangle
\right|
\leq
\alpha_j\|\phi(s,a)\|_{(\Sigma_h^k)^{-1}}
=b_{j,h}^k(s,a)
\label{eq:explicit-on-policy-uniform-confidence}
\end{equation}
simultaneously for all $(k,h,j,s,a)$. The resulting radii are precisely
\eqref{eq:explicit-on-policy-confidence-radii}. No actor cover is required because $\pi^k$ is fixed
before the fresh batch is collected.

\paragraph{Optimism and consistency.}
The population targets defined by \eqref{eq:explicit-common-component-parameters} lie in their
corresponding truncation intervals. Therefore,
\eqref{eq:explicit-on-policy-uniform-confidence} and
\eqref{eq:explicit-linear-ope-components} imply
\begin{equation}
\begin{aligned}
Q_{r,h}^k&\geq r_h+P_hV_{r,h+1}^k,
&
Q_{u_i,h}^k&\geq u_{i,h}+P_hV_{u_i,h+1}^k,
&
Q_{\psi,h}^k&\geq\psi_{k,h}+P_hV_{\psi,h+1}^k.
\end{aligned}
\label{eq:explicit-on-policy-component-optimism}
\end{equation}
Since $\lambda_k\geq0$ and $\tau>0$, summing the component inequalities gives
\begin{equation}
Q_{z_k,h}^k
\geq
\Bell_{z_k,h}^{\pi^k}Q_{z_k,h+1}^k,
\qquad h\in[H].
\label{eq:explicit-on-policy-shaped-optimism}
\end{equation}

For $j=r,u_i,\psi$, let
$(Q_{j,h}^{\pi^k},V_{j,h}^{\pi^k},V_j^{\pi^k})$ denote the corresponding action value,
stagewise value, and total value; for $j=\psi$, this means
$(Q_{\psi_k,h}^{\pi^k},V_{\psi_k,h}^{\pi^k},V_{\psi_k}^{\pi^k})$. Also set
$V_j^k\triangleq V_{j,1}^k(s_1)$. Backward induction in
\eqref{eq:explicit-on-policy-component-optimism} gives
$Q_{j,h}^k\geq Q_{j,h}^{\pi^k}$. The upper side of the confidence interval gives
\begin{equation}
0\leq
Q_{j,h}^k(s,a)-Q_{j,h}^{\pi^k}(s,a)
\leq
P_h\bigl(V_{j,h+1}^k-V_{j,h+1}^{\pi^k}\bigr)(s,a)+2b_{j,h}^k(s,a).
\label{eq:explicit-on-policy-component-error}
\end{equation}
The known immediate entropy term cancels from the difference when $j=\psi$. Taking the
$\pi^k$-expectation of \eqref{eq:explicit-on-policy-component-error} and telescoping over $h$
gives
\begin{equation}
0\leq V_j^k-V_j^{\pi^k}
\leq
2\sum_{h=1}^H
\mathbb E_{(s,a)\sim d_h^{\pi^k}}
[b_{j,h}^k(s,a)].
\label{eq:explicit-on-policy-component-consistency}
\end{equation}

To bound the expected bonuses, define the population feature covariance
\begin{equation}
\Gamma_{k,h}
\triangleq
\mathbb E_{(s,a)\sim d_h^{\pi^k}}
[\phi(s,a)\phi(s,a)^\top].
\label{eq:explicit-on-policy-population-covariance}
\end{equation}
and write $\phi_h^{k,n}=\phi(s_h^{k,n},a_h^{k,n})$. We now derive the required empirical
covariance bound. Set
\begin{equation*}
a\triangleq\frac{C\iota_{\rm on}}N,
\qquad
A_{k,h}\triangleq\Gamma_{k,h}+aI_{d_{\rm c}},
\end{equation*}
and define the normalized population and empirical covariances
\begin{equation*}
G_{k,h}
\triangleq
A_{k,h}^{-1/2}\Gamma_{k,h}A_{k,h}^{-1/2},
\qquad
\widehat G_{k,h}
\triangleq
\frac1N\sum_{n=1}^N
A_{k,h}^{-1/2}\phi_h^{k,n}(\phi_h^{k,n})^\top A_{k,h}^{-1/2}.
\end{equation*}
Because $A_{k,h}\succeq aI_{d_{\rm c}}$ and $\|\phi_h^{k,n}\|_2\leq1$, matrix Bernstein gives
\begin{equation}
\mathbb P\left(
\|\widehat G_{k,h}-G_{k,h}\|_{\rm op}>\frac12
\,\middle|\,
\mathcal F_{k-1}
\right)
\leq
2d_{\rm c}e^{-cNa}
\leq
\frac{\delta}{2KH},
\label{eq:explicit-on-policy-whitened-concentration}
\end{equation}
where the final inequality follows by choosing the universal constant in $a$ sufficiently large.
On the complementary event,
$G_{k,h}\preceq\widehat G_{k,h}+\frac12I_{d_{\rm c}}$. Multiplying this inequality on both
sides by $A_{k,h}^{1/2}$ gives
\begin{equation*}
\Gamma_{k,h}
\preceq
\frac1N\sum_{n=1}^N\phi_h^{k,n}(\phi_h^{k,n})^\top
+\frac12\bigl(\Gamma_{k,h}+aI_{d_{\rm c}}\bigr).
\end{equation*}
Moving $\Gamma_{k,h}/2$ to the left and substituting $Na=C\iota_{\rm on}$ yields
\begin{equation}
N\Gamma_{k,h}
\preceq
2\sum_{n=1}^N\phi_h^{k,n}(\phi_h^{k,n})^\top
+C\iota_{\rm on}I_{d_{\rm c}}.
\label{eq:explicit-on-policy-covariance-comparison}
\end{equation}
A union bound over $(k,h)$ and the tower property show that
\eqref{eq:explicit-on-policy-covariance-comparison} holds jointly with
\eqref{eq:explicit-on-policy-uniform-confidence} with probability at least $1-\delta$.
Since
$\Sigma_h^k=I_{d_{\rm c}}+\sum_{n=1}^N\phi_h^{k,n}(\phi_h^{k,n})^\top$ and
$0\prec(\Sigma_h^k)^{-1}\preceq I_{d_{\rm c}}$, multiplying
\eqref{eq:explicit-on-policy-covariance-comparison} by $(\Sigma_h^k)^{-1}$ and taking the trace
gives
\begin{align}
\operatorname{tr}((\Sigma_h^k)^{-1}\Gamma_{k,h})
&\leq
\frac2N\operatorname{tr}\left((\Sigma_h^k)^{-1}(\Sigma_h^k-I_{d_{\rm c}})\right)
+\frac{C\iota_{\rm on}}N\operatorname{tr}((\Sigma_h^k)^{-1})
\nonumber\\
&\leq
C\frac{d_{\rm c}\iota_{\rm on}}N.
\label{eq:explicit-on-policy-trace-width}
\end{align}
Moreover,
\begin{equation*}
\mathbb E_{d_h^{\pi^k}}
[\|\phi(s,a)\|_{(\Sigma_h^k)^{-1}}^2]
=
\operatorname{tr}((\Sigma_h^k)^{-1}\Gamma_{k,h}).
\end{equation*}
Jensen's inequality and \eqref{eq:explicit-on-policy-trace-width} therefore yield
\begin{align}
\mathbb E_{d_h^{\pi^k}}
[\|\phi(s,a)\|_{(\Sigma_h^k)^{-1}}]
&\leq
\sqrt{\operatorname{tr}((\Sigma_h^k)^{-1}\Gamma_{k,h})}
\nonumber\\
&\leq
C\sqrt{\frac{d_{\rm c}\iota_{\rm on}}{N}}.
\label{eq:explicit-on-policy-population-width}
\end{align}
By \eqref{eq:explicit-on-policy-component-consistency},
\begin{align*}
0\leq V_{z_k}^k-V_{z_k}^{\pi^k}
&\leq
2\bigl(\alpha_r+I\lambda_{\max}\alpha_u+\tau\alpha_\psi\bigr)
\sum_{h=1}^H
\mathbb E_{d_h^{\pi^k}}
[\|\phi(s,a)\|_{(\Sigma_h^k)^{-1}}],
\\
\max_{i\in[I]}|V_{u_i}^k-V_{u_i}^{\pi^k}|
&\leq
2\alpha_u
\sum_{h=1}^H
\mathbb E_{d_h^{\pi^k}}
[\|\phi(s,a)\|_{(\Sigma_h^k)^{-1}}].
\end{align*}
Substituting \eqref{eq:explicit-on-policy-confidence-radii} and
\eqref{eq:explicit-on-policy-population-width}, the common choice
\begin{equation}
\beta_k
\leq
C\bigl(1+I\lambda_{\max}+\tau\log(e|\cA|)\bigr)H^2
\sqrt{\frac{d_{\rm c}^3}{N}}\,\iota_{\rm on}.
\label{eq:explicit-on-policy-beta-full}
\end{equation}
controls both quantities required by Assumption~\ref{ass:master-ope}(ii). The utility truncation
also gives $0\leq V_{u_i}^k\leq H$.

\paragraph{Local norm.}
The truncation levels give, for $p_a=\pi_h^k(a\mid s)$,
\begin{equation}
0\leq Q_{z_k,h}^k(s,a)
\leq
H(1+I\lambda_{\max}+\tau\log|\cA|)+\tau\log(1/p_a).
\label{eq:explicit-on-policy-shaped-envelope}
\end{equation}
For this calculation only, set $B=H(1+I\lambda_{\max}+\tau\log|\cA|)$ and
$\alpha=\eta\tau$. Equation~\eqref{eq:explicit-on-policy-shaped-envelope} gives
$e^{\eta Q_{z_k,h}^k(s,a)}\leq e^{\eta B}p_a^{-\alpha}$ and
$(Q_{z_k,h}^k(s,a))^2\leq2B^2+2\tau^2\log^2(1/p_a)$. Hence
\begin{align}
\sum_a p_a e^{\eta Q_{z_k,h}^k(s,a)}(Q_{z_k,h}^k(s,a))^2
&\leq
2e^{\eta B}
\left[
B^2\sum_a p_a^{1-\alpha}
+\tau^2\sum_a p_a^{1-\alpha}\log^2(1/p_a)
\right]
\nonumber\\
&\leq
C\left[B^2+\tau^2\log^2(e|\cA|)\right].
\label{eq:explicit-on-policy-escort-moment}
\end{align}
The first sum satisfies $\sum_a p_a^{1-\alpha}\leq|\cA|^\alpha$ by concavity. If
$|\cA|=1$, the second sum is zero. Otherwise, set $t=1/\log(e|\cA|)$. The inequality
$x^2\leq4e^{-2}t^{-2}e^{tx}$ for $x\geq0$ gives
\begin{equation}
\sum_a p_a^{1-\alpha}\log^2(1/p_a)
\leq
C\log^2(e|\cA|)\sum_a p_a^{1-\alpha-t}
\leq
C\log^2(e|\cA|)|\cA|^{\alpha+t}.
\label{eq:explicit-on-policy-escort-log-moment}
\end{equation}
The stepsize conditions imply $\eta B\leq1/4$ and
$\alpha\log|\cA|\leq1/4$, while $|\cA|^t\leq e$. Substituting these three bounds proves the
second line of \eqref{eq:explicit-on-policy-escort-moment}. Therefore,
\begin{equation}
C_{\eta,\tau,\Lambda,k}
\leq
C H^2\bigl(1+I\lambda_{\max}+\tau\log(e|\cA|)\bigr)^2.
\label{eq:explicit-on-policy-local-norm-full}
\end{equation}
Since $0<\tau\leq 1$,
\begin{equation*}
1+I\lambda_{\max}+\tau\log(e|\cA|)
\leq
2\bigl(1+I\lambda_{\max}+\tau\log|\cA|\bigr),
\end{equation*}
so \,\eqref{eq:explicit-on-policy-local-norm-full} has the form stated in the
lemma.
Equations \eqref{eq:explicit-on-policy-shaped-optimism},
\eqref{eq:explicit-on-policy-beta-full}, and
\eqref{eq:explicit-on-policy-local-norm-full} prove
Assumption~\ref{ass:master-ope}; specifically, they establish the on-policy
consistency bound in \eqref{eq:explicit-linear-on-policy-ope-rate} and the
common local-norm bound in Lemma~\ref{lem:explicit-linear-ope-certificates}.
\end{proof}

\subsection{Proof of the on-policy final rate}
\label{app:explicit-on-policy-final-rate}

Fix $\epsilon,\delta\in(0,1)$ and choose the parameters at the orders
\begin{equation}
\begin{aligned}
\lambda_{\max}
&=\frac{2H\log(e|\cA|)}{\xi},
\quad
\tau
=\Theta\left(
\min\left\{1,
\max\left\{
\frac{\xi\epsilon}{H\log(e|\cA|)},
\frac{H^{2/3}\xi^{2/3}\epsilon_{\rm act}^{1/3}}
{\log^{2/3}(e|\cA|)}
\right\}
\right\}
\right),
\\
\eta
&=\widetilde\Theta\left(\frac{\tau\xi^2\epsilon^2}{I^2H^8}\right),
\qquad
K=\widetilde\Theta\left(\frac{I^2H^8}{\tau^2\xi^2\epsilon^2}\right),
\qquad
N=\widetilde\Theta\left(\frac{d_{\rm c}^3I^4H^{14}}
{\tau^2\xi^4\epsilon^4}\right).
\end{aligned}
\label{eq:explicit-on-policy-final-parameters}
\end{equation}

\begin{proof}[Proof of Theorem~\ref{thm:explicit-linear-last-iterate}, on-policy branch]
We first choose the iteration and batch budgets to control the potential, and then tune $\tau$
to convert that control into a guarantee for the original CMDP. We use the normalized Slater
witness specified in the complexity-proof conventions above.
For the logarithmic factors only, write $L=\log(e|\cA|)$ and take
$L_\epsilon=\log(e+H^4L/\epsilon^2+H^5L^2/(\xi^2\epsilon^2))$.

\paragraph{Potential control.}
Uniform policy initialization and $\lambda_1=0$ give
\begin{equation}
\Phi_1
\leq
H\log|\cA|+\frac{H^2L^2}{2\xi^2}.
\label{eq:explicit-on-policy-initial-potential}
\end{equation}
Thus $L_\epsilon$ bounds $\log(e+H^3\Phi_1/\epsilon^2)$ using only known problem parameters.
The geometric sum in Theorem~\ref{thm:master-recursion} is at most $1/(\eta\tau)$.
Thus the uniform on-policy certificate gives
\begin{equation}
\begin{aligned}
\Phi_K
\leq {}&
e^{-\eta\tau(K-1)}\Phi_1
+\frac{\eta}{2\tau}
\sup_{k<K}\bigl(HC_{\eta,\tau,\Lambda,k}+G_\tau^2\bigr)
\\
&+\frac{1+I\lambda_{\max}}{\tau}\sup_{k<K}\beta_k
+\frac{H\epsilon_{\rm act}}{\tau}.
\end{aligned}
\label{eq:explicit-on-policy-unrolled-potential}
\end{equation}
Using $\lambda_{\max}=2HL/\xi$, the oracle bounds reduce to
\begin{equation}
\begin{aligned}
\sup_{k<K}\bigl(HC_{\eta,\tau,\Lambda,k}+G_\tau^2\bigr)
&\leq C\frac{I^2H^5L^2}{\xi^2},
\\
(1+I\lambda_{\max})\sup_{k<K}\beta_k
&\leq C\frac{I^2H^4L^2}{\xi^2}
\sqrt{\frac{d_{\rm c}^3}{N}}\,\iota_{\rm on},
\end{aligned}
\label{eq:explicit-on-policy-rate-inputs}
\end{equation}
where $\iota_{\rm on}$ is defined in \eqref{eq:explicit-on-policy-log-factor}.
Choose the stepsize at the following upper-bound scale, and take the smallest integer batch
and iteration budgets satisfying
\begin{equation}
\begin{aligned}
\eta
&\leq \frac{c\tau\xi^2\epsilon^2}{I^2H^8L^2},
\qquad
K=1+\left\lceil\frac{C L_\epsilon}{\eta\tau}\right\rceil,
\\
N
&\geq
\frac{C d_{\rm c}^3I^4H^{14}L^4}
{\tau^2\xi^4\epsilon^4}\,\iota_{\rm on}^2.
\end{aligned}
\label{eq:explicit-on-policy-proof-parameters}
\end{equation}
The dependence of $\iota_{\rm on}$ on $N,K$ is logarithmic, so these choices can be satisfied
without changing their displayed polynomial orders. Taking $c$ sufficiently small also
ensures the two oracle stepsize conditions. Each of the first three terms in
\eqref{eq:explicit-on-policy-unrolled-potential} is then $O(\epsilon^2/H^3)$, giving
\begin{equation}
\Phi_K
\leq
C\frac{\epsilon^2}{H^3}
+\frac{H\epsilon_{\rm act}}{\tau}.
\label{eq:explicit-on-policy-final-potential}
\end{equation}
In particular, the iteration and batch budgets are
\begin{equation}
K
=O\left(\frac{I^2H^8L^2L_\epsilon}
{\tau^2\xi^2\epsilon^2}\right),
\qquad
N
=O\left(\frac{d_{\rm c}^3I^4H^{14}L^4\iota_{\rm on}^2}
{\tau^2\xi^4\epsilon^4}\right),
\label{eq:explicit-on-policy-parameter-substitution}
\end{equation}
which verifies the parameter orders in \eqref{eq:explicit-on-policy-final-parameters}.

\paragraph{Temperature and actor approximation.}
The conversion bounds in Theorem~\ref{thm:master-recursion} and
\eqref{eq:explicit-on-policy-final-potential} imply
\begin{equation}
\max\left\{
[V_r^{\pi^\star}-V_r^{\pi^K}]_+,\,
\max_{i\in[I]}[c_i-V_{u_i}^{\pi^K}]_+
\right\}
\leq
C\left[
\epsilon+\frac{HL}{\xi}\tau
+H^2\sqrt{\frac{\epsilon_{\rm act}}{\tau}}
\right].
\label{eq:explicit-on-policy-temperature-envelope}
\end{equation}
The last two terms express the only temperature tradeoff: a larger $\tau$ increases the
regularisation error but decreases the effect of actor approximation.
They balance at $\tau=(H^2\xi^2\epsilon_{\rm act}/L^2)^{1/3}$.
The target-accuracy scale is $\xi\epsilon/(HL)$.
Taking the larger of these two scales, capped at one, gives
\eqref{eq:explicit-on-policy-final-parameters} and the error bound
\eqref{eq:explicit-linear-final-error}. Indeed, below the cap, the regularisation and actor
terms are both bounded by
$O(\epsilon+H^{5/3}L^{1/3}\epsilon_{\rm act}^{1/3}/\xi^{1/3})$.
If the cap is active, the actor-error term in this bound is already at least a constant
multiple of $H$, and the same conclusion follows from the range $[0,H]$ of reward and utility
values. A genuine $\widetilde O(\epsilon)$ guarantee requires
$\epsilon_{\rm act}=\widetilde O(\xi\epsilon^3/H^5)$.

\paragraph{Sample count.}
Multiplying the two budgets in \eqref{eq:explicit-on-policy-parameter-substitution} gives
\begin{equation}
NK
=
O\left(
\frac{d_{\rm c}^3I^6H^{22}L^6}
{\xi^6\epsilon^6\tau^4}\,
\iota_{\rm on}^2L_\epsilon
\right).
\label{eq:explicit-on-policy-samples-before-temperature}
\end{equation}
The capped temperature satisfies
\begin{equation}
\frac1{\tau^4}
=
O\left(
\max\left\{1,\,
\min\left\{
\frac{H^4L^4}{\xi^4\epsilon^4},\,
\frac{L^{8/3}}{H^{8/3}\xi^{8/3}\epsilon_{\rm act}^{4/3}}
\right\}\right\}
\right).
\label{eq:explicit-on-policy-inverse-temperature}
\end{equation}
For $\epsilon_{\rm act}=0$, the second entry of the minimum is interpreted
as $+\infty$. Substitution into
\eqref{eq:explicit-on-policy-samples-before-temperature} gives
\eqref{eq:explicit-on-policy-sample-complexity} after suppressing
polylogarithmic factors. In the accuracy-sensitive regime
$\epsilon_{\rm act}\leq\xi/H^2$, the cap is inactive and the outer maximum
can be removed. Each episode contains $H$ transitions, so the transition
count is $HNK$.
\end{proof}

\subsection{Analysis of the off-policy oracle}
\label{app:explicit-off-policy-oracle}

\begin{proof}[Proof of Lemma~\ref{lem:explicit-linear-ope-certificates}, off-policy branch]
In the off-policy mode, $\cD^k$ contains the entire data prefix, including samples that
influenced the current actor. We therefore first construct a confidence event uniform over both
actors and critics, and then bound the cumulative evaluation error through predictable elliptical
widths. Set
\begin{equation}
\iota_K
\triangleq
\log\left(
\frac{C K H(I+1)|\cA|(d_{\rm c}+d_{\rm a})(1+\lambda_{\max})
(1+\kappa_G+\epsilon_{\rm bias})}
{\delta\min\{1,\tau\}}
\right),
\label{eq:explicit-oracle-log-factor}
\end{equation}
and take
\begin{equation}
\begin{aligned}
\alpha_r=\alpha_u
&=C H\sqrt{(d_{\rm c}^2+d_{\rm c}+d_{\rm a})\iota_K},
\\
\alpha_\psi
&=C H\log(e|\cA|)
\sqrt{(d_{\rm c}^2+d_{\rm c}+d_{\rm a})\iota_K}.
\end{aligned}
\label{eq:explicit-confidence-radii}
\end{equation}

\paragraph{Uniform confidence.}
For every $k$, the off-policy dataset satisfies $|\cD_h^k|\leq K$. The normal-equation
calculation in \eqref{eq:explicit-regression-vector-radius} therefore gives
\begin{equation}
\|\widehat\theta_{j,h}^k\|_2
\leq KH\log(e|\cA|)
\triangleq B_K.
\label{eq:explicit-off-policy-regression-vector-radius}
\end{equation}
For a fixed actor, the critic still varies only through a regression coefficient
$w\in\mathbb R^{d_{\rm c}}$ and a symmetric matrix
$0\preceq M\preceq I_{d_{\rm c}}$, as in
\eqref{eq:explicit-fixed-policy-value-class}. The additional step is to cover the actors that can be
computed from $\cD^k$.

Define the actor-fitting residual
\begin{equation*}
\Delta_{k,h}(s,a)
\triangleq
\langle\varphi(s,a),\omega_h^{k+1}-\omega_h^k\rangle
-\eta Q_{z_k,h}^k(s,a).
\end{equation*}
Equation~\eqref{eq:actor-residual-action-difference} gives
$\sup_{s,a,a'}|\Delta_{k,h}(s,a)-\Delta_{k,h}(s,a')|
\leq\eta\epsilon_{\rm act}$. By the component truncation bounds,
\begin{equation*}
Q_{z_k,h}^k(s,a)
=\overline Q_{k,h}(s,a)-\tau\log\pi_h^k(a\mid s),
\qquad
0\leq\overline Q_{k,h}(s,a)\leq
H(1+I\lambda_{\max}+\tau\log|\cA|).
\end{equation*}
Subtracting the fitted-logit identities for actions $a$ and $a'$ therefore gives
\begin{equation*}
\begin{aligned}
\log\frac{\pi_h^{k+1}(a\mid s)}{\pi_h^{k+1}(a'\mid s)}
={}&
(1-\eta\tau)
\log\frac{\pi_h^k(a\mid s)}{\pi_h^k(a'\mid s)}
\\
&+\eta\bigl(\overline Q_{k,h}(s,a)-\overline Q_{k,h}(s,a')\bigr)
+\Delta_{k,h}(s,a)-\Delta_{k,h}(s,a').
\end{aligned}
\end{equation*}
Taking absolute values yields
\begin{equation}
\begin{aligned}
\left|\log\frac{\pi_h^{k+1}(a\mid s)}{\pi_h^{k+1}(a'\mid s)}\right|
\leq {}&
(1-\eta\tau)
\left|\log\frac{\pi_h^k(a\mid s)}{\pi_h^k(a'\mid s)}\right|
\\
&+\eta\left[
H(1+I\lambda_{\max}+\tau\log|\cA|)
+\epsilon_{\rm act}
\right].
\end{aligned}
\label{eq:explicit-log-odds-radius}
\end{equation}
Since $\omega_h^1=0$, iterating \eqref{eq:explicit-log-odds-radius} gives
\begin{equation}
\sup_{k,s,a,a'}
\left|
\log\frac{\pi_h^k(a\mid s)}{\pi_h^k(a'\mid s)}
\right|
\leq
\frac{H(1+I\lambda_{\max}+\tau\log|\cA|)+\epsilon_{\rm act}}{\tau}
\triangleq L_{\rm act}.
\label{eq:explicit-reachable-log-odds-radius}
\end{equation}

For a stage-$h$ actor parameter $\omega_h\in\mathbb R^{d_{\rm a}}$, define
\begin{equation}
f_{\omega,h}(s,a)
\triangleq
\langle\varphi(s,a),\omega_h\rangle
-\frac1{|\cA|}\sum_{b\in\cA}\langle\varphi(s,b),\omega_h\rangle.
\label{eq:explicit-actor-function}
\end{equation}
The subtracted term is independent of $a$, so \eqref{eq:master-actor-policy} gives
\begin{equation}
\pi_h(a\mid s,\omega)
=
\frac{\exp(f_{\omega,h}(s,a))}
{\sum_{b\in\cA}\exp(f_{\omega,h}(s,b))},
\qquad
f_{\omega,h}(s,a)
=
\frac1{|\cA|}\sum_{b\in\cA}
\log\frac{\pi_h(a\mid s,\omega)}{\pi_h(b\mid s,\omega)}.
\label{eq:explicit-actor-function-policy}
\end{equation}
Equations \eqref{eq:explicit-reachable-log-odds-radius} and
\eqref{eq:explicit-actor-function-policy} imply
$\|f_{\omega^k,h}\|_\infty\leq L_{\rm act}$. Define
\begin{equation}
\mathcal F_h^{\rm off}
\triangleq
\left\{
f_{\omega,h}:\omega_h\in\mathbb R^{d_{\rm a}},\;
\|f_{\omega,h}\|_\infty\leq L_{\rm act}
\right\}.
\label{eq:explicit-actor-function-class}
\end{equation}
Thus $f_{\omega^k,h}\in\mathcal F_h^{\rm off}$ for every $k$ and every realization of
$\cD^k$. Since $\omega_h\mapsto f_{\omega,h}$ is linear and its image has dimension at most
$d_{\rm a}$, a volumetric bound gives
\begin{equation}
\log\mathcal N_\infty(\varepsilon,\mathcal F_h^{\rm off})
\leq
d_{\rm a}\log\left(1+\frac{2L_{\rm act}}{\varepsilon}\right),
\label{eq:explicit-reachable-actor-cover}
\end{equation}
for every $\varepsilon>0$.

The fixed-policy classes in \eqref{eq:explicit-fixed-policy-value-class} already contain every
value function that can be produced by the backward recursion. Define
\begin{equation}
\mathcal V_{j,h}^{\rm off}(\omega)
\triangleq
\mathcal V_{j,h}(\pi(\omega),B_K),
\qquad
\mathcal V_{j,h}^{\rm off}
\triangleq
\bigcup_{\omega:\,f_{\omega,h}\in\mathcal F_h^{\rm off}}
\mathcal V_{j,h}^{\rm off}(\omega),
\qquad
\mathcal V_h^{\rm off}
\triangleq
\bigcup_{j\in\{r,u_1,\ldots,u_I,\psi\}}\mathcal V_{j,h}^{\rm off}.
\label{eq:explicit-off-policy-value-class}
\end{equation}
At the terminal stage, set
$\mathcal V_{j,H+1}^{\rm off}=\mathcal V_{H+1}^{\rm off}=\{0\}$.
The bounds defining \eqref{eq:explicit-actor-function-class} and
\eqref{eq:explicit-off-policy-value-class} depend only on the problem parameters and algorithmic
budgets; hence both classes are fixed independently of the realized data.
The oracle definition and \eqref{eq:explicit-off-policy-regression-vector-radius} imply
\begin{equation}
V_{j,h}^k
=
v_{j,h}^{\pi^k,\widehat\theta_{j,h}^k,(\Sigma_h^k)^{-1}}
\in
\mathcal V_{j,h}^{\rm off}
\subseteq\mathcal V_h^{\rm off}
\label{eq:explicit-off-policy-value-membership}
\end{equation}
for every $(k,h,j)$.

Suppose $\|f_{\omega,h}-f_{\omega',h}\|_\infty\leq\nu\leq1$. Then
\begin{equation}
\left|
\log\frac{\pi_h(a\mid s,\omega)}{\pi_h(a\mid s,\omega')}
\right|
\leq2\nu,
\qquad
\|\pi_h(\cdot\mid s,\omega)-\pi_h(\cdot\mid s,\omega')\|_1
\leq e^{2\nu}-1
\leq C\nu.
\label{eq:explicit-softmax-cover-transfer}
\end{equation}
Using \eqref{eq:explicit-actor-function-policy} and
$\|f_{\omega,h}\|_\infty,\|f_{\omega',h}\|_\infty\leq L_{\rm act}$ further gives
\begin{equation}
\begin{aligned}
&\Bigg|
\mathbb E_{a\sim\pi_h(\cdot\mid s,\omega)}
[-\log\pi_h(a\mid s,\omega)]
\\[-2pt]
&\hspace{27mm}{}
-\mathbb E_{a\sim\pi_h(\cdot\mid s,\omega')}
[-\log\pi_h(a\mid s,\omega')]
\Bigg|
\leq C(1+L_{\rm act})\nu,
\\
&\left|
\mathbb E_{\pi_h(\cdot\mid s,\omega)}[Q]
-\mathbb E_{\pi_h(\cdot\mid s,\omega')}[Q]
\right|
\leq C\|Q\|_\infty\nu.
\end{aligned}
\label{eq:explicit-policy-average-cover-transfer}
\end{equation}
Consequently, for the same $(j,w,M)$,
\begin{equation}
\left\|
v_{j,h}^{\pi(\omega),w,M}-v_{j,h}^{\pi(\omega'),w,M}
\right\|_\infty
\leq
C\bigl(H\log(e|\cA|)+L_{\rm act}\bigr)\nu.
\label{eq:explicit-actor-to-value-cover-transfer}
\end{equation}
Applying Lemma~\ref{lem:explicit-linear-value-cover} with
$B=B_K$, $L_h=C(H\log(e|\cA|)+L_{\rm act})$, and
$\mathcal F_h=\mathcal F_h^{\rm off}$, and then using
\eqref{eq:explicit-reachable-actor-cover}, gives
\begin{equation}
\log\mathcal N_\infty(\varepsilon,\mathcal V_h^{\rm off})
\leq
C(d_{\rm c}^2+d_{\rm c}+d_{\rm a})
\left[\iota_K+\log(1+1/\varepsilon)\right].
\label{eq:explicit-adaptive-value-cover}
\end{equation}
The $d_{\rm c}^2+d_{\rm c}$ terms come from the critic class, while $d_{\rm a}$ comes from
covering the actor class.

Since each $\mathcal V_{j,h+1}^{\rm off}$ is deterministic and contains every corresponding
$V_{j,h+1}^k$, Lemma~\ref{lem:explicit-uniform-linear-confidence}, applied with
$\mathcal W_{h+1}=\mathcal V_{j,h+1}^{\rm off}$, $m_{\max}=K$,
$\varepsilon_0=(2KHd_{\rm c})^{-2}$, $\delta_0=\delta/[2H(I+2)]$, and the same
$(Y_j,S_j)$ bounds as in the on-policy case, gives, with probability at least
$1-\delta/2$,
\begin{equation}
\left|
\langle\phi(s,a),\widehat\theta_{j,h}^k-\theta_{j,h}^k\rangle
\right|
\leq
\alpha_j\|\phi(s,a)\|_{(\Sigma_h^k)^{-1}},
\qquad j\in\{r,u_1,\ldots,u_I,\psi\},
\label{eq:explicit-uniform-self-normalized-event}
\end{equation}
simultaneously for every iterate and $(s,a,h)$. Uniformity over
$\mathcal V_{j,h+1}^{\rm off}$ and \eqref{eq:explicit-off-policy-value-membership} permits the
 $V_{j,h+1}^k$ to depend on the full dataset $\cD^k$ used in the regression; the
actor $\pi^k$ depends on the preceding prefix $\cD^{k-1}\subseteq\cD^k$.

\paragraph{Optimism and evaluation error.}
Applying the componentwise derivation
\eqref{eq:explicit-on-policy-component-optimism}-
\eqref{eq:explicit-on-policy-component-consistency} on the event
\eqref{eq:explicit-uniform-self-normalized-event} gives, for every component $j$,
\begin{equation}
0\leq V_j^k-V_j^{\pi^k}
\leq
2\sum_{h=1}^H
\mathbb E_{d_h^{\pi^k}}[b_{j,h}^k],
\label{eq:explicit-off-policy-component-consistency}
\end{equation}
where $V_j^{\pi^k}=V_{\psi_k}^{\pi^k}$ for $j=\psi$.
Since $\Sigma_h^k\succeq\Sigma_h^{k-1}$,
$\|\phi(s,a)\|_{(\Sigma_h^k)^{-1}}
\leq\|\phi(s,a)\|_{(\Sigma_h^{k-1})^{-1}}$.
Define
\begin{equation}
W_k
\triangleq
\sum_{h=1}^H
\mathbb E_{(s,a)\sim d_h^{\pi^k}}
\left[\|\phi(s,a)\|_{(\Sigma_h^{k-1})^{-1}}\right].
\label{eq:explicit-predictable-width}
\end{equation}
Therefore, \eqref{eq:explicit-off-policy-component-consistency} and
\eqref{eq:explicit-confidence-radii} imply
\begin{align*}
0\leq V_{z_k}^k-V_{z_k}^{\pi^k}
&\leq
2\bigl(\alpha_r+I\lambda_{\max}\alpha_u+\tau\alpha_\psi\bigr)W_k,
\\
\max_{i\in[I]}|V_{u_i}^k-V_{u_i}^{\pi^k}|
&\leq2\alpha_uW_k.
\end{align*}
The first right-hand side dominates the second and satisfies
\begin{equation}
\begin{aligned}
2\bigl(\alpha_r+I\lambda_{\max}\alpha_u+\tau\alpha_\psi\bigr)W_k
\leq {}&
C H\sqrt{(d_{\rm c}^2+d_{\rm c}+d_{\rm a})\iota_K}
\\[-2pt]
&\quad{}\times
\bigl(1+I\lambda_{\max}+\tau\log(e|\cA|)\bigr)W_k.
\end{aligned}
\label{eq:explicit-z-consistency-width}
\end{equation}
We take the right-hand side of \eqref{eq:explicit-z-consistency-width} as $\beta_k$. Utility
truncation gives $0\leq V_{u_i}^k\leq H$.

\paragraph{Cumulative width.}
Let $X_{k,h}=\|\phi(s_h^k,a_h^k)\|_{(\Sigma_h^{k-1})^{-1}}$.
Both $\pi^k$ and $\Sigma_h^{k-1}$ are measurable before trajectory $k$, so
$W_k=\mathbb E[\sum_h X_{k,h}\mid\mathcal F_{k-1}]$. Since
$\sum_hX_{k,h}\in[0,H]$, Azuma-Hoeffding gives, with probability at least $1-\delta/2$,
\begin{equation}
\sum_{k=1}^{K-1}W_k
\leq
\sum_{k=1}^{K-1}\sum_{h=1}^HX_{k,h}
+H\sqrt{2(K-1)\log(2/\delta)}.
\label{eq:explicit-width-martingale}
\end{equation}
For each stage, the elliptical-potential identity and
$\operatorname{tr}(\Sigma_h^{K-1})\leq d_{\rm c}+K-1$ imply
\begin{equation}
\sum_{k=1}^{K-1}X_{k,h}^2
\leq
2\log\det(\Sigma_h^{K-1})
\leq
2d_{\rm c}\log\left(1+\frac{K-1}{d_{\rm c}}\right).
\label{eq:explicit-stagewise-elliptical-potential}
\end{equation}
Cauchy-Schwarz over $(k,h)$ gives
\begin{equation}
\sum_{k=1}^{K-1}\sum_{h=1}^HX_{k,h}
\leq
H\sqrt{
2d_{\rm c}(K-1)
\log\left(1+\frac{K-1}{d_{\rm c}}\right)
}.
\label{eq:explicit-realized-cumulative-width}
\end{equation}
Combining \eqref{eq:explicit-width-martingale} and
\eqref{eq:explicit-realized-cumulative-width} yields
\begin{align}
\sum_{k=1}^{K-1}W_k
&\leq
C H\sqrt{d_{\rm c}K\log(eK/\delta)}.
\label{eq:explicit-cumulative-width}
\end{align}
Combining \eqref{eq:explicit-z-consistency-width} and
\eqref{eq:explicit-cumulative-width} proves the cumulative consistency bound in
\eqref{eq:explicit-linear-off-policy-ope-rate}, since
$d_{\rm c}^2+d_{\rm c}+d_{\rm a}=\Theta(d_{\rm c}^2+d_{\rm a})$.
The confidence and width events hold jointly with probability at least $1-\delta$.

\paragraph{Local norm.}
This calculation is independent of the sampling mode. For $p_a=\pi_h^k(a\mid s)$, clipping gives
\begin{equation}
0\leq Q_{z_k,h}^k(s,a)
\leq
H(1+I\lambda_{\max}+\tau\log|\cA|)+\tau\log(1/p_a).
\label{eq:explicit-shaped-q-envelope}
\end{equation}
Applying the entropy-moment argument
\eqref{eq:explicit-on-policy-escort-moment}-\eqref{eq:explicit-on-policy-escort-log-moment}
under the two stated stepsize conditions therefore yields
\begin{equation}
\sum_a p_a e^{\eta Q_{z_k,h}^k(s,a)}
\bigl(Q_{z_k,h}^k(s,a)\bigr)^2
\leq
C H^2\bigl(1+I\lambda_{\max}+\tau\log(e|\cA|)\bigr)^2.
\label{eq:explicit-escort-moment}
\end{equation}
This verifies the remaining oracle condition and completes the off-policy branch.
\end{proof}

\subsection{Proof of the off-policy final rate}
\label{app:explicit-off-policy-final-rate}

Fix $\epsilon,\delta\in(0,1)$ and choose the parameters as
\begin{equation}
\begin{aligned}
\lambda_{\max}
&=\frac{2H\log(e|\cA|)}{\xi},
\quad
\tau
=\Theta\left(
\min\left\{1,
\max\left\{
\frac{\xi\epsilon}{H\log(e|\cA|)},
\frac{H^{2/3}\xi^{2/3}\epsilon_{\rm act}^{1/3}}
{\log^{2/3}(e|\cA|)}
\right\}
\right\}
\right),
\\
\eta
&=\widetilde\Theta\left(
\frac{\tau\xi^4\epsilon^4}
{d_{\rm c}(d_{\rm c}^2+d_{\rm c}+d_{\rm a})I^4H^{14}}
\right),
\qquad
K=\widetilde\Theta\left(\frac1{\eta\tau}\right).
\end{aligned}
\label{eq:explicit-final-parameters}
\end{equation}

\begin{proof}[Proof of Theorem~\ref{thm:explicit-linear-last-iterate}, off-policy branch]
Off-policy sampling supplies a cumulative $O(\sqrt K)$ oracle certificate rather than a
uniform $O(N^{-1/2})$ certificate. Set $L=\log(e|\cA|)$ and
$L_\epsilon=\log(e+H^4L/\epsilon^2+H^5L^2/(\xi^2\epsilon^2))$.
Uniform policy initialization, $\lambda_1=0$, and
\eqref{eq:master-regularised-dual-radius} give
\begin{equation}
\Phi_1
\leq
H\log|\cA|+\frac{H^2L^2}{2\xi^2}.
\label{eq:explicit-initial-potential-bound}
\end{equation}

\paragraph{Potential control.}
For the local-norm and actor terms, sum the geometric weights in
Theorem~\ref{thm:master-recursion}. For the nonnegative oracle errors, bound each weight
by one. This gives
\begin{equation}
\begin{aligned}
\Phi_K
\leq {}&
e^{-\eta\tau(K-1)}\Phi_1
+\frac{\eta}{2\tau}
\sup_{k<K}\bigl(HC_{\eta,\tau,\Lambda,k}+G_\tau^2\bigr)
\\
&+\eta(1+I\lambda_{\max})\sum_{k=1}^{K-1}\beta_k
+\frac{H\epsilon_{\rm act}}{\tau}.
\end{aligned}
\label{eq:explicit-unrolled-final-potential}
\end{equation}
Substituting $\lambda_{\max}=2HL/\xi$ into the off-policy certificate gives
\begin{equation}
\begin{aligned}
\sup_{k<K}\bigl(HC_{\eta,\tau,\Lambda,k}+G_\tau^2\bigr)
&\leq C\frac{I^2H^5L^2}{\xi^2},
\\
(1+I\lambda_{\max})\sum_{k=1}^{K-1}\beta_k
&\leq C\frac{I^2H^4L^2}{\xi^2}
\sqrt{d_{\rm c}(d_{\rm c}^2+d_{\rm a})K}\,\iota_K.
\end{aligned}
\label{eq:explicit-final-D-B}
\end{equation}
Here $\iota_K$ is defined in \eqref{eq:explicit-oracle-log-factor}; it also absorbs the
logarithm in the cumulative-width bound.
Take
\[
\eta
=\frac{c\tau\xi^4\epsilon^4}
{d_{\rm c}(d_{\rm c}^2+d_{\rm a})I^4H^{14}L^4\iota_K^2L_\epsilon},
\qquad
K=1+\left\lceil\frac{C L_\epsilon}{\eta\tau}\right\rceil.
\]
These implicit logarithmic choices have the polynomial orders in
\eqref{eq:explicit-final-parameters}. The normalized parameter ranges imply that this
stepsize is no larger than a constant multiple of
$\tau\xi^2\epsilon^2/(I^2H^8L^2)$, which controls the local-norm contribution,
and is small enough for the oracle stability conditions.
Moreover, $K=O(L_\epsilon/(\eta\tau))$ gives
\[
\eta(1+I\lambda_{\max})\sum_{k=1}^{K-1}\beta_k
\leq
C\frac{I^2H^4L^2}{\xi^2}
\sqrt{\frac{d_{\rm c}(d_{\rm c}^2+d_{\rm a})\eta L_\epsilon}{\tau}}\,
\iota_K
=O(\epsilon^2/H^3).
\]
The contraction term has the same order by the choice of $K$. Consequently,
\begin{equation}
\Phi_K
\leq
C\frac{\epsilon^2}{H^3}
+\frac{H\epsilon_{\rm act}}{\tau}.
\label{eq:explicit-final-potential-bound}
\end{equation}

\paragraph{Temperature and sample count.}
Applying \eqref{eq:master-regularisation-conversion} to
\eqref{eq:explicit-final-potential-bound} and using
$\sqrt{x+y}\leq\sqrt x+\sqrt y$ yields
\begin{equation}
\max\left\{
[V_r^{\pi^\star}-V_r^{\pi^K}]_+,\,
\max_{i\in[I]}[c_i-V_{u_i}^{\pi^K}]_+
\right\}
\leq
C\left[
\epsilon+\frac{HL}{\xi}\tau
+H^2\sqrt{\frac{\epsilon_{\rm act}}{\tau}}
\right].
\label{eq:explicit-error-temperature-envelope}
\end{equation}
The two temperature-dependent terms balance at
$\tau=(H^2\xi^2\epsilon_{\rm act}/L^2)^{1/3}$, while the target-accuracy
scale is $\xi\epsilon/(HL)$. Taking the larger scale and capping it at one
gives \eqref{eq:explicit-final-parameters} and
\eqref{eq:explicit-linear-final-error}.

Each off-policy iteration adds one episode to $\cD^k$, and each episode contains $H$ transitions.
Before substituting the temperature, the total transition count is
\begin{equation}
HK
=
O\left(
\frac{d_{\rm c}(d_{\rm c}^2+d_{\rm a})I^4H^{15}L^4}
{\xi^4\epsilon^4\tau^2}\,
\iota_K^2L_\epsilon^2
\right).
\label{eq:explicit-sample-before-temperature}
\end{equation}
This count already dominates the local-norm and stability budgets, as shown by the
stepsize comparisons above. The temperature contributes
\begin{equation}
\frac1{\tau^2}
=
O\left(
\max\left\{1,\,
\min\left\{
\frac{H^2L^2}{\xi^2\epsilon^2},\,
\frac{L^{4/3}}{H^{4/3}\xi^{4/3}\epsilon_{\rm act}^{2/3}}
\right\}\right\}
\right).
\label{eq:explicit-inverse-temperature-rate}
\end{equation}
For $\epsilon_{\rm act}=0$, the second entry of the minimum is interpreted
as $+\infty$. Substituting \eqref{eq:explicit-inverse-temperature-rate} into
\eqref{eq:explicit-sample-before-temperature}, dividing by $H$, and suppressing
polylogarithmic factors gives \eqref{eq:explicit-final-sample-complexity}.
When $\epsilon_{\rm act}\leq\xi/H^2$, the cap is inactive and the outer
maximum can be removed.
\end{proof}


\section{Proofs for Section~\ref{section:general-function-approximation}}
\label{app:general-function-approximation}
We prove the general oracle certificates in three steps: control the fitted Bellman targets
uniformly over the value function used by the oracle, bound the resulting confidence widths, and
substitute these bounds into the master contraction.  Throughout, $\mathcal X$, $\mathcal J$,
$\mathcal F_{j,h}$, and the actor notation are those of the main text.

\subsection{General function classes and complexity measures}
\label{app:general-complexity}
We first specify the two complexity measures used in the main theorem.  Following
\citet[Definitions~3-4]{russo2013eluder}, a point $x\in\mathcal X$ is $\rho$-dependent on
$x_1,\ldots,x_m$ with respect to $\mathcal F$ if every $f,f'\in\mathcal F$ satisfying
$\bigl(\sum_{q=1}^m(f(x_q)-f'(x_q))^2\bigr)^{1/2}\leq\rho$ also satisfies
$|f(x)-f'(x)|\leq\rho$; otherwise it is $\rho$-independent.

\begin{definition}[Scale-dependent eluder dimension]
\label{def:general-eluder-dimension}
The eluder dimension $d_{\rm E}(\mathcal F,\rho)$ is the supremum of the integers $L\geq0$ for which
there exist $x_1,\ldots,x_L$ and a common $\rho'\geq\rho$ such that every $x_\ell$ is
$\rho'$-independent of its predecessors.  For the component critic classes, define
\begin{equation}
d_{\rm E}(\rho)
\triangleq
1\vee\max_{j\in\mathcal J,\,h\in[H]}
d_{\rm E}(\mathcal F_{j,h},\rho).
\label{eq:general-aggregate-eluder}
\end{equation}
\end{definition}

\begin{definition}[Domain metric entropy]
\label{def:general-domain-entropy}
For a critic class $\mathcal F$, let
$d_{\mathcal F}(x,x')=\sup_{f\in\mathcal F}|f(x)-f(x')|$.  The aggregate domain metric entropy is
\begin{equation}
\mathfrak h_{\mathcal X}(\rho)
\triangleq
1\vee\max_{j\in\mathcal J,\,h\in[H]}
\log\mathcal N\!\left(
\rho,\mathcal X,d_{\mathcal F_{j,h}}
\right),
\label{eq:general-domain-entropy}
\end{equation}
where $\mathcal N(\rho,\mathcal X,d_{\mathcal F})$ is the smallest cardinality of a
$\rho$-cover of $\mathcal X$ in the class-induced pseudometric $d_{\mathcal F}$.
\end{definition}

These measures have different roles.  The eluder dimension controls how often confidence widths
can remain large along a sequence of observations.  The domain entropy bounds the number of possible rounded representatives of
state-action pairs $(s,a)\in\mathcal X$, which is used below to bound the number of possible data-dependent bonuses.
This is the domain-cover term in \citet[Assumption~2]{wang2020generalvalue}.
The sup-norm critic cover $\mathcal N_\infty(\rho,\mathcal F_{j,h})$ need not be imposed
separately because \citet[Theorem~14]{hanneke2024star} bounds it by the scale-dependent eluder
dimension; see Lemma~\ref{lem:general-eluder-controls-critic-cover}.  For a finite state-action
domain, $\mathfrak h_{\mathcal X}(\rho)\leq1\vee\log(|\cS||\cA|)$.

\paragraph{Accuracy-dependent resolutions.}
Fix the target accuracy $\epsilon$ and choose $\tau$ as in
\eqref{eq:general-on-policy-parameters}. Write $L=\log(e|\cA|)$, $B=HL$, and
$L_\epsilon=\log(e+H^4L/\epsilon^2+H^5L^2/(\xi^2\epsilon^2))$.
For a sufficiently small fixed universal constant $c_0>0$, define, in this order,
\begin{equation}
\begin{aligned}
r_\epsilon
&=\min\left\{1,\frac{c_0\tau\xi^2\epsilon^2}
{L_\epsilon I^2H^6L^2}\right\},
&d_0&=d_{\rm E}(r_\epsilon),
\\
q_\epsilon&=\frac{r_\epsilon}{\sqrt{d_0}},
&\nu_\epsilon&=\frac{c_0q_\epsilon^2}{B}.
\end{aligned}
\label{eq:general-accuracy-resolutions}
\end{equation}
The quantities in Theorem~\ref{thm:general-last-iterate} are evaluated at
\begin{equation}
d_{\rm E}
=d_{\rm E}(\nu_\epsilon/24),
\qquad
\mathfrak h_{\mathcal X}
=\mathfrak h_{\mathcal X}(\nu_\epsilon).
\label{eq:general-complexity-convention}
\end{equation}
We consider classes for which these positive-resolution quantities are finite; no restriction
on their growth as the resolution decreases is imposed. All scales are fixed before choosing
the sample cap $M=N$ or $M=K$, and $d_0\leq d_{\rm E}$. The confidence level and sample cap
enter only logarithmic factors, not the arguments in
\eqref{eq:general-complexity-convention}. In particular, the stated rates retain any
accuracy dependence of $d_{\rm E}$ and $\mathfrak h_{\mathcal X}$; these quantities are not
treated as constants for an arbitrary class.

The construction below retains a squared-norm error of order $Mq_\epsilon^2$ instead of
forcing that error to vanish with $M$. Eluder counting at resolution $r_\epsilon$ converts
it into a width contribution of order $Mq_\epsilon\sqrt{d_0}=Mr_\epsilon$.
The choice of $r_\epsilon$ makes its contribution to the master potential
$O(\epsilon^2/H^3)$.

\begin{remark}[Linear specialization]
Consider the bounded linear class
$\mathcal F_{j,h}^{\rm lin}
=\{x\mapsto\langle\phi(x),\theta\rangle:
\|\theta\|_2\leq R,\ 0\leq\langle\phi(x),\theta\rangle\leq B_{j,h}
\text{ for all }x\}$, where $B_{r,h}=B_{u_i,h}=H+1-h$,
$B_{\psi,h}=(H-h)\log|\cA|$, and $\|\phi(x)\|_2\leq1$.  Then
\[
d_{\mathcal F_{j,h}^{\rm lin}}(x,x')
\leq
R\|\phi(x)-\phi(x')\|_2.
\]
A Euclidean cover of the feature image therefore gives
\[
\log\mathcal N(\rho,\mathcal X,d_{\mathcal F_{j,h}^{\rm lin}})
\leq
d_{\rm c}\log\!\left(1+\frac{2R}{\rho}\right).
\]
The linear-CMDP Bellman targets of Section~\ref{section:linear-cmdp-explicit-actor} belong to
these classes with $R\leq H\sqrt{d_{\rm c}}\log(e|\cA|)$.  Consequently,
\begin{equation}
\mathfrak h_{\mathcal X}(\rho)
\leq
1\vee\left[d_{\rm c}\log\!\left(
1+\frac{2H\sqrt{d_{\rm c}}\log(e|\cA|)}{\rho}
\right)\right]
=\widetilde O(d_{\rm c}).
\label{eq:general-linear-domain-entropy}
\end{equation}
The cover depends only on the feature image, so $\cS$ need not be finite.  Since the classes
consist of bounded linear functions, the linear eluder-dimension bound also gives
$d_{\rm E}(\rho)=\widetilde O(d_{\rm c})$ \citep{russo2013eluder}.  The oracle clips its
optimistic estimates separately; clipping is not part of the critic-class definition used here.
\end{remark}

\subsection{Stable optimistic policy evaluation}
\label{app:general-ope}
For a finite multiset $Z=(x_\ell)_{\ell=1}^n$ of points in $\mathcal X$,
write $\|f\|_Z^2=\sum_{\ell=1}^n f(x_\ell)^2$, so repeated points
contribute separately ($Z$ may contain repeated points in $\mathcal X$). If occurrence $x_\ell$ has weight $w_\ell\ge0$,
use $\|f\|_Z^2=\sum_{\ell=1}^n w_\ell f(x_\ell)^2$.  We adapt the sensitivity sampling and rounding of
\citet[Algorithm~3]{wang2020generalvalue} to the fixed resolutions in
\eqref{eq:general-accuracy-resolutions}. Unlike their budget-dependent construction, we
fix the error $Mq_\epsilon^2$ in the confidence radius.
Lemma~\ref{lem:general-stable-bonus} states the two properties needed below: domination of the
full-data confidence width and a deterministic bound on the number of possible bonuses.

\begin{algorithm}[H]
\caption{$\mathrm{StableBonus}(\mathcal F,\overline f,Z,\alpha,M,\delta_0)$}
\label{alg:general-stable-bonus}
\begin{algorithmic}[1]
\STATE\label{algline:stable-bonus-input} Write $Z=(x_\ell)_{\ell=1}^n$, where $n=|Z|\le M$ and
$\alpha\ge Mq_\epsilon^2$. Set
\[R_{\mathcal F}
=\sup_{x\in\mathcal X}\sup_{f,g\in\mathcal F}|f(x)-g(x)|.\]
\STATE\label{algline:stable-bonus-edge} If $R_{\mathcal F}=0$, return $b\equiv0$.
If $n=0$, return $b(x)=\sup_{f,g\in\mathcal F}|f(x)-g(x)|$.
\item[] \textcolor{blue}{\textit{\# Use sensitivity sampling to decide which $x_\ell$ enter $Z'$.}}
\FOR{$\ell=1,\ldots,n$}
    \STATE\label{algline:stable-bonus-sensitivity} Compute $\operatorname{sens}_\ell$ by
    \eqref{eq:general-fixed-scale-sensitivity} and $p_\ell$ by
    \eqref{eq:general-sensitivity-sampling}.
    \STATE\label{algline:stable-bonus-draw} Independently draw
    $\zeta_\ell\sim\operatorname{Bernoulli}(p_\ell)$.
\ENDFOR
\STATE\label{algline:stable-bonus-weighted} Let $Z'$ contain $x_\ell$ with weight $1/p_\ell$
whenever $\zeta_\ell=1$.
\STATE\label{algline:stable-bonus-cap} Set $J_M$ by \eqref{eq:general-compression-size}.
\IF{$\sum_{\ell=1}^n\zeta_\ell>J_M$ or
    $\sum_{\ell=1}^n\zeta_\ell/p_\ell>2n$}
    \STATE\label{algline:stable-bonus-fallback} \textbf{return} $b\equiv R_{\mathcal F}$.
\ENDIF
\item[] \textcolor{blue}{\textit{\# Replace each selected $x_\ell$ by a nearby cover point}}
\STATE Initialize $Z^\sharp$ as an empty weighted multiset.
\FOR{$\ell=1,\ldots,n$}
    \IF{$\zeta_\ell=1$}
        \STATE\label{algline:stable-bonus-round-point} Choose $x_\ell^\sharp$ from a fixed
        $\nu_\epsilon$-cover of $(\mathcal X,d_{\mathcal F})$
        such that $d_{\mathcal F}(x_\ell,x_\ell^\sharp)
        \le\nu_\epsilon$.
        \STATE\label{algline:stable-bonus-round-weight} Add $x_\ell^\sharp$ to $Z^\sharp$
        with weight $1/p_\ell$.
    \ENDIF
\ENDFOR
\STATE\label{algline:stable-bonus-round-center} Choose $f^\sharp$ from a fixed internal sup-norm
$\nu_\epsilon$-cover of $\mathcal F$ such that
$\|\overline f-f^\sharp\|_\infty\le\nu_\epsilon$.
\STATE\label{algline:stable-bonus-confidence} Set $\widehat{\mathcal F}
=\{f\in\mathcal F:\|f-f^\sharp\|_{Z^\sharp}^2\le7\alpha\}$.
\STATE\label{algline:stable-bonus-output} \textbf{return}
$b(x)=\sup_{f,g\in\widehat{\mathcal F}}|f(x)-g(x)|$
for $x\in\mathcal X$.
\end{algorithmic}
\end{algorithm}

Algorithm~\ref{alg:general-stable-bonus} takes a critic class
$\mathcal F\subseteq(\mathcal X\to[0,B])$, a fitted center
$\overline f\in\mathcal F$, data $Z=(x_\ell)_{\ell=1}^n$, a squared
empirical-norm radius $\alpha$, a sample cap $M$, and a compression
failure budget $\delta_0$.
It returns a bonus $b:\mathcal X\to[0,B]$.

Line~\ref{algline:stable-bonus-input} specifies $n\le M$,
$\alpha\ge Mq_\epsilon^2$, and the class diameter $R_{\mathcal F}$;
line~\ref{algline:stable-bonus-edge} handles a zero-range class or
empty data.
Lines~\ref{algline:stable-bonus-sensitivity}--\ref{algline:stable-bonus-weighted}
compute the sensitivity $\operatorname{sens}_\ell$, select $x_\ell$
with probability $p_\ell$, and assign selected points weight
$1/p_\ell$. Larger sensitivity increases the selection probability,
while inverse-probability weighting compensates for subsampling.
The budget $\delta_0$ enters $p_\ell$ and $J_M$ through their
definitions. Lines~\ref{algline:stable-bonus-cap}--\ref{algline:stable-bonus-fallback}
return $R_{\mathcal F}$ if the selected count exceeds $J_M$ or the
total weight exceeds $2n$.
Lines~\ref{algline:stable-bonus-round-point}--\ref{algline:stable-bonus-round-weight}
round selected points into $Z^\sharp$ without changing their weights;
line~\ref{algline:stable-bonus-round-center} rounds $\overline f$
to $f^\sharp$ which is its nearest point in the cover of $\mathcal F$.
Lines~\ref{algline:stable-bonus-confidence}--\ref{algline:stable-bonus-output}
form $\widehat{\mathcal F}$ and return its pointwise width as $b$.
The checks and fixed covers give a deterministic finite family of
possible bonuses. Regression still uses all supplied samples.

Given the stable bonus computation in Algorithm~\ref{alg:general-stable-bonus}, Algorithm~\ref{alg:general-ope} follows the backward stagewise structure
of the linear oracle in Algorithm~\ref{alg:explicit-linear-ope}: it fits
each component critic, adds an optimism bonus, truncates the estimate,
and computes its policy average. Here $\widehat f_{j,h}^k$ is fitted over
$\mathcal F_{j,h}$, while Algorithm~\ref{alg:general-stable-bonus}
supplies its bonus $b_{j,h}^k$.

Write $\cD=(\cD_1,\ldots,\cD_H)$, where $\cD_h$ contains the
stage-$h$ transition from \textit{every trajectory} in $\cD$.
Importantly, we note that it does not split $\cD$
into separate training subsets and the stage-$h$ regression uses every
transition in $\cD_h$.
Set
$y_{r,h}^\ell=r_h^\ell$, $y_{u_i,h}^\ell=u_{i,h}^\ell$, and
$y_{\psi,h}^\ell=0$. Given $V_{j,h+1}^k$, define
\begin{equation}
\widehat f_{j,h}^k
\in
\argmin_{f\in\mathcal F_{j,h}}
\sum_{\ell\in\cD_h}
\left(
f(s_h^\ell,a_h^\ell)-y_{j,h}^\ell
-V_{j,h+1}^k(s_{h+1}^\ell)
\right)^2.
\label{eq:general-td-loss}
\end{equation}
We assume measurable exact minimizers. An additive least-squares
optimisation tolerance can instead be included in the confidence radius.

\begin{algorithm}[H]
\caption{Stable General $\mathrm{OPE}_{\rm gen}(\pi^k,\lambda_k,\cD)$}
\label{alg:general-ope}
\begin{algorithmic}[1]
\STATE \textbf{Input}: $\pi^k,\lambda_k$, the stage-indexed dataset $\cD$, critic classes,
radii $\{\alpha_{j,h}\}$, sample cap $M$, and failure budgets $\{\delta_{j,h}\}$.
\STATE Set $V_{j,H+1}^k=0$ for every $j\in\mathcal J$.
\FOR{$h=H,H-1,\ldots,1$}
\FOR{$j\in\mathcal J$}
\STATE Compute $\widehat f_{j,h}^k$ by \eqref{eq:general-td-loss} using all of $\cD_h$.
\STATE $b_{j,h}^k\leftarrow
\mathrm{StableBonus}(\mathcal F_{j,h},\widehat f_{j,h}^k,
\{(s_h^\ell,a_h^\ell):\ell\in\cD_h\},\alpha_{j,h},M,\delta_{j,h})$.
\STATE Set
$\overline Q_{j,h}^k
=\mathrm{Truncate}_{[0,B_{j,h}]}(\widehat f_{j,h}^k+b_{j,h}^k)$,
where $B_{r,h}=B_{u_i,h}=H+1-h$ and
$B_{\psi,h}=(H-h)\log|\cA|$.
\ENDFOR
\STATE Set $Q_{r,h}^k=\overline Q_{r,h}^k$,
$Q_{u_i,h}^k=\overline Q_{u_i,h}^k$, and
$Q_{\psi,h}^k(s,a)=-\log\pi_h^k(a\mid s)+\overline Q_{\psi,h}^k(s,a)$.
\STATE Set
$V_{j,h}^k(s)=\mathbb E_{a\sim\pi_h^k(\cdot\mid s)}[Q_{j,h}^k(s,a)]$
for every $j\in\mathcal J$.
\ENDFOR
\STATE Return
$Q_{z_k,h}^k=Q_{r,h}^k+\lambda_k^\top Q_{u,h}^k+\tau Q_{\psi,h}^k$
and $V_u^k=(V_{u_1,1}^k(s_1),\ldots,V_{u_I,1}^k(s_1))$.
\end{algorithmic}
\end{algorithm}

\subsection{Technical lemmas for uniform confidence}
\label{app:general-confidence}
The stage-$h$ regression uses $V_{j,h+1}^k$, which is computed
from the same trajectories. We therefore need a confidence bound that
holds for every value function the oracle might produce, rather than
for one fixed value function. This subsection shows that the stable
bonus $b_{j,h}^k$ controls the empirical critic width and belongs to
a deterministic finite family. Together with covers of the critics
and actors, this gives a class containing $V_{j,h+1}^k$ over which
the regression bound holds uniformly.

For
$\mathcal F\subseteq(\mathcal X\to[0,B])$, a finite multiset $Z\subset\mathcal X$, and
$\gamma\geq0$, define the empirical confidence set and pointwise width as
\[
\mathcal C_Z(\overline f,\gamma)
=
\{f\in\mathcal F:\|f-\overline f\|_Z^2\leq\gamma\},
\qquad
w(\mathcal C,x)
=
\sup_{f,f'\in\mathcal C}|f(x)-f'(x)|.
\]
Recall the pseudometric
$d_{\mathcal F}(x,x')=\sup_{f\in\mathcal F}|f(x)-f(x')|$.

\begin{lemma}[Critic entropy from eluder dimension]
\label{lem:general-eluder-controls-critic-cover}
Let $\mathcal F\subseteq(\mathcal X\to[0,B])$ and $0<\rho\leq B$.  Then
\begin{equation}
\log\mathcal N_\infty(\rho,\mathcal F)
\leq
C\left[1+d_{\rm E}\!\left(\mathcal F,\frac{\rho}{6}\right)\right]
\log\!\left(
\frac{eB\left[1+d_{\rm E}(\mathcal F,\rho/6)\right]}{\rho}
\right).
\label{eq:general-eluder-controls-critic-cover}
\end{equation}
Here $\mathcal N_\infty$ is the internal sup-norm covering number used by the stable-bonus
construction.
\end{lemma}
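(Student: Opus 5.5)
The plan is to reduce the lemma to the cited result of Hanneke (Theorem~14 of \citet{hanneke2024star}), which bounds sup-norm covering numbers of a real-valued class by its scale-sensitive eluder dimension, and then to handle two formalities separately: the translation of definitions, and the passage to an \emph{internal} cover.

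First, I will match Definition~\ref{def:general-eluder-dimension} to the notion used in the cited theorem. Our definition uses a common $\rho'\geq\rho$ and Russo--Van Roy $\rho$-independence. This is the monotone, scale-dependent version, so $d_{\rm E}(\mathcal F,\rho)$ is nonincreasing in $\rho$. That monotonicity lets me absorb any constant-factor loss in the scale, which is why the statement is evaluated at $\rho/6$ rather than at $\rho$.

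Second, I will apply the cited bound. Hanneke's argument runs through a star-number or fat-shattering-type quantity. It gives, for a $[0,B]$-valued class, a sup-norm (external) $\rho/2$-cover whose log-cardinality is at most $C\,d\log(eB\,d/\rho)$ with $d=1+d_{\rm E}(\mathcal F,c\rho)$ for a small constant $c$. If the cited constant differs from $1/3$, the monotonicity from the first step lets me replace it by $\rho/6$, enlarging only the universal constant $C$. The additive $1$ handles the degenerate case $d_{\rm E}=0$, where the class is essentially a singleton up to scale $\rho$. The restriction $\rho\le B$ keeps the logarithm at least $1$.

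Third, I will convert the external cover to an internal one by the standard doubling argument. For each external center $g$ whose $\rho/2$-ball meets $\mathcal F$, I pick one $f_g\in\mathcal F$ inside that ball. By the triangle inequality, the points $f_g$ form an internal $\rho$-cover of no larger cardinality. This yields \eqref{eq:general-eluder-controls-critic-cover}.

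The main obstacle is the first step. I need to verify that Hanneke's notion of dependence, with its norm, its threshold convention ($\le$ versus $>$), and its common-scale requirement, coincides up to constants with Definition~\ref{def:general-eluder-dimension}. If it does not, I would fall back on a direct comparison: I would bound the fat-shattering dimension at scale $\rho/6$ by the eluder dimension at a comparable scale, by constructing an eluder sequence from a shattered set. I would then invoke the Rudelson--Vershynin/Mendelson sup-norm entropy bound, and combine it with the domain-cover finiteness used elsewhere in the paper. That route gives the same form but possibly with extra logarithmic factors, which are absorbed into $C$ and the $\log$ term.
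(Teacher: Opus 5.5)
Your main route matches the paper's proof: rescale to $\mathcal F/B$, apply \citet[Theorem~14]{hanneke2024star} at scale $\rho/(6B)$ to get $\lfloor 2m/\log(144B^2m/\rho^2)\rfloor\le d_{\rm E}(\mathcal F,\rho/6)$ for the log external $\rho/2$-covering number $m$, invert this (the case $m\le1$ is trivial), and keep one member of $\mathcal F$ per nonempty cell to obtain an internal $\rho$-cover. The definitional check you flag as the main obstacle needs only a one-sided comparison: Hanneke's dimension fixes one function of each witnessing pair at a center, and dropping that restriction can only lengthen admissible sequences, so his dimension is at most that of Definition~\ref{def:general-eluder-dimension} (your fat-shattering fallback is therefore unnecessary, and would in any case bring in domain-entropy dependence rather than a bound in $d_{\rm E}$ alone).
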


\begin{proof}
Rescale to $\mathcal F/B\subseteq[0,1]$ and let
$m=\log\widetilde{\mathcal N}_\infty(\rho/2,\mathcal F)$, where the tilde denotes an external
cover whose centers need not lie in $\mathcal F$. If $m\leq1$, the right-hand side of
\eqref{eq:general-eluder-controls-critic-cover} is at least a universal constant, so the result holds.
Otherwise, \citet[Theorem~14]{hanneke2024star}, at scale $\rho/(6B)$, gives
\[
\left\lfloor
\frac{2m}{\log(144B^2m/\rho^2)}
\right\rfloor
\leq d_{\rm E}(\mathcal F,\rho/6).
\]
The dimension used in that theorem restricts one function in each witnessing pair to a fixed
center.  Removing this restriction can only increase the longest admissible sequence, so it is
bounded by Definition~\ref{def:general-eluder-dimension}.  Inverting the preceding inequality gives
\[
m
\leq
C\left[1+d_{\rm E}(\mathcal F,\rho/6)\right]
\log\!\left(
\frac{eB[1+d_{\rm E}(\mathcal F,\rho/6)]}{\rho}
\right).
\]
Choose one function of $\mathcal F$ from each nonempty external cover cell.  These functions
form an internal $\rho$-cover of the same cardinality, proving the result.
\end{proof}

\begin{lemma}[Stable-width compression]
\label{lem:general-stable-bonus}
For a component class $\mathcal F$, $|Z|\leq M$, $\overline f\in\mathcal F$, and
$\delta_0\in(0,1)$, set $\alpha_{\mathcal F}(M,\delta_0)=Mq_\epsilon^2$.
For every fixed $\alpha\geq\alpha_{\mathcal F}(M,\delta_0)$,
Algorithm~\ref{alg:general-stable-bonus} satisfies, with probability at least
$1-\delta_0/(16M)$,
\begin{equation}
w\bigl(\mathcal C_Z(\overline f,\alpha),x\bigr)
\leq b(x)
\leq
w\bigl(\mathcal C_Z(\overline f,34\alpha),x\bigr),
\qquad
\forall x\in\mathcal X.
\label{eq:general-stable-width-sandwich}
\end{equation}
For each such radius, every bonus $b$ returned by
Algorithm~\ref{alg:general-stable-bonus} belongs to a deterministic
finite family $\mathcal W(\mathcal F,M,\delta_0;\alpha)$ satisfying
\begin{equation}
\sup_{\alpha\geq\alpha_{\mathcal F}(M,\delta_0)}
\log|\mathcal W(\mathcal F,M,\delta_0;\alpha)|
=\widetilde O(d_{\rm E}^2\mathfrak h_{\mathcal X}).
\label{eq:general-stable-compression-orders}
\end{equation}
The complexity scales are those of \eqref{eq:general-complexity-convention};
only logarithmic factors depend on $M$ and $\delta_0$.
\end{lemma}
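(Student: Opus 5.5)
We follow the sensitivity-sampling argument of \citet[Algorithm~3]{wang2020generalvalue}, adapted to the fixed resolutions of \eqref{eq:general-accuracy-resolutions}. The sensitivity of a point is taken at scale $\alpha$, namely $\operatorname{sens}_\ell=\sup\{(f-g)(x_\ell)^2/\min\{\|f-g\|_Z^2,\alpha\}:f,g\in\mathcal F\}$. The sampling probability $p_\ell$ is proportional to $\operatorname{sens}_\ell\log(\mathcal N/\delta_0)$, capped at one, where $\mathcal N$ is the size of a sup-norm $q_\epsilon$-cover of $\mathcal F$. The proof has two parts: a probabilistic sandwich \eqref{eq:general-stable-width-sandwich}, and a deterministic count of the possible outputs \eqref{eq:general-stable-compression-orders}.

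\emph{Sandwich.} First fix a pair $(f,g)$ in a $q_\epsilon$-cover of $\mathcal F$. The weighted sum $\|f-g\|_{Z'}^2=\sum_\ell \zeta_\ell p_\ell^{-1}(f-g)(x_\ell)^2$ is an unbiased estimate of $\|f-g\|_Z^2$. Each summand, relative to $\min\{\|f-g\|_Z^2,\alpha\}$, is bounded by $\operatorname{sens}_\ell/p_\ell$, which is at most $1/\log(\mathcal N/\delta_0)$. A multiplicative Bernstein/Chernoff bound and a union bound over pairs in the cover therefore give
\[
\tfrac12\|f-g\|_Z^2-\alpha\leq\|f-g\|_{Z'}^2\leq 2\|f-g\|_Z^2+\alpha .
\]
This holds on the truncated scale $\min\{\cdot,\alpha\}$, which is all that is needed.

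Next pass from cover pairs to all of $\mathcal F$. Cover approximation costs at most $2nq_\epsilon\cdot B$ in squared norm, and after the total-weight check $\sum\zeta_\ell/p_\ell\leq 2n$ it costs $O(Mq_\epsilon^2)$ in the $Z'$-norm. The same control also shows that the sampling counts stay within $J_M$ and $2n$ with high probability, so the fallback line is rarely triggered. Rounding points to $Z^\sharp$ perturbs each squared difference by at most $4B\nu_\epsilon=4c_0q_\epsilon^2$ times the total weight $2n$, which is $O(\alpha)$. Rounding $\overline f$ to $f^\sharp$ contributes a further $O(n\nu_\epsilon^2)$ term.

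Combining these perturbations gives the two inclusions
\[
\mathcal C_Z(\overline f,\alpha)\subseteq\widehat{\mathcal F}\subseteq\mathcal C_Z(\overline f,34\alpha),
\]
since the constants $7$ and $34$ absorb the factor-$2$ distortions together with the additive $O(\alpha)$ terms. Widths are monotone under inclusion, which yields \eqref{eq:general-stable-width-sandwich}. If the fallback is triggered, $b\equiv R_{\mathcal F}$ still dominates the lower width. We only need the lower side on the good event, and we charge the fallback event to the failure probability.

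\emph{Counting.} Whenever a nontrivial bonus is returned, it is determined by three objects: the rounded support (at most $J_M$ points from the fixed $\nu_\epsilon$-cover of $\mathcal X$), their weights, and $f^\sharp$ (one element of the fixed internal cover). We discretize the weights by rounding $1/p_\ell$ to a geometric grid over a polylog range, adjusting the constants in the analysis accordingly. The number of possible bonuses is then at most
\[
\bigl(e^{\mathfrak h_{\mathcal X}}\cdot\mathrm{poly}(M)\bigr)^{J_M}\cdot\mathcal N_\infty(\nu_\epsilon,\mathcal F)+2 .
\]
Lemma~\ref{lem:general-eluder-controls-critic-cover} gives $\log\mathcal N_\infty(\nu_\epsilon,\mathcal F)=\widetilde O(d_{\rm E})$.

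The main obstacle is bounding $J_M$ by $\widetilde O(d_{\rm E}^2)$. This requires showing that the total sensitivity satisfies $\sum_\ell\operatorname{sens}_\ell=\widetilde O(d_{\rm E})$ at the fixed scale. Our first attempt is the eluder counting lemma of \citet{russo2013eluder}: the number of indices with $\operatorname{sens}_\ell>t$ is $O(d_0/t)$ at resolution $r_\epsilon$, and the floor $\alpha\geq Mq_\epsilon^2=Mr_\epsilon^2/d_0$ keeps sensitivities below scale $r_\epsilon$ from contributing. Summing over dyadic levels of $t$ gives total sensitivity $\widetilde O(d_{\rm E})$. Multiplying by $\log\mathcal N=\widetilde O(d_{\rm E})$ gives $J_M=\widetilde O(d_{\rm E}^2)$, and hence $\log|\mathcal W|=\widetilde O(d_{\rm E}^2\mathfrak h_{\mathcal X})$.
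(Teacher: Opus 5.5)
Your architecture is the paper's: Wang et al.-style sensitivity sampling, Bernstein plus a union bound over a sup-norm net, rounding of the retained points and of $\overline f$, the $7\alpha$/$34\alpha$ inclusions, and a count with $J_M=\widetilde O(d_{\rm E})\cdot\widetilde O(d_{\rm E})$. However, two of your specifications break it.

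First, $\min\{\|f-g\|_Z^2,\alpha\}$ caps the denominator instead of flooring it, so your $\operatorname{sens}_\ell$ is at least the floor-free ratio $\sup_{f,g}(f-g)(x_\ell)^2/\|f-g\|_Z^2$. Take $\mathcal F=\{f_0+p:\|p\|_\infty\le\eta\}$ with $2\eta<q_\epsilon$. This class has eluder dimension zero at scale $q_\epsilon$, yet under your definition every distinct data point has sensitivity one. Then $\sum_\ell\operatorname{sens}_\ell=n$, and $J_M$ is not $\widetilde O(d_{\rm E}^2)$. Your counting paragraph (``the floor $\alpha\ge Mq_\epsilon^2$'') presupposes the denominator $\max\{\|f-g\|_Z^2,\alpha\}$, and your concentration display still holds with it. With that correction, your dyadic count at cutoff $r_\epsilon$ is a valid variant of the paper's count (floor $nq_\epsilon^2$, resolution $q_\epsilon$). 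The reason is that witnesses with $|(f-g)(x_\ell)|\le r_\epsilon$ contribute at most $2r_\epsilon^2/\alpha\le 2d_0/M$ each.

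Second, and more substantively, $p_\ell\propto\operatorname{sens}_\ell\log(\mathcal N/\delta_0)$ has no lower floor. The weights $1/p_\ell$ are therefore unbounded, and ``the same control'' does not keep $\sum_\ell\zeta_\ell/p_\ell\le 2n$ with high probability, since Bernstein needs bounded increments. Suppose every data point has diameter $D$ over $\mathcal F$ with $D^2\asymp\alpha/(n\log(\mathcal N/\delta_0))$. Then every $p_\ell\asymp 1/n$, the total weight behaves like $n\cdot\mathrm{Bin}(n,1/n)$, and it exceeds $2n$ with constant probability. The algorithm then returns $b\equiv R_{\mathcal F}$. Here $\mathcal C_Z(\overline f,34\alpha)=\mathcal F$ has width $D$ at the data points, so the upper half of the sandwich fails whenever $\mathcal F$ is wider elsewhere. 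This happens on an event you cannot charge to $\delta_0/(16M)$.

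The missing ingredient is the paper's additive term: $a_\ell=\min\{1,C_{\rm s}\ell_M(\operatorname{sens}_\ell+1/n)\}$ with $p_\ell=1/\lfloor 1/a_\ell\rfloor$. This caps every weight at $n/(C_{\rm s}\ell_M)$ and gives $\sum_\ell\operatorname{Var}(\zeta_\ell/p_\ell)\le n^2/(C_{\rm s}\ell_M)$, hence failure probability at most $\delta_0/(64M)$. It adds only $O(\ell_M)$ expected retained points. It also makes the weights integers in $[n]$, so your post-hoc geometric rounding of weights is unnecessary.

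Finally, extend from the net via the seminorm triangle inequality, $\|v-\widetilde v\|_Z\le 2\nu\sqrt n$ for a $\nu$-net. Your squared-difference bound $2nBq_\epsilon$ is not absorbed by $\alpha\ge Mq_\epsilon^2$. Take $\nu=\nu_\epsilon\le c_0q_\epsilon$, as the paper does; with a $q_\epsilon$-net the losses are too large for the hard-coded constants $7$ and $34$.
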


\begin{proof}
We follow the sensitivity-sampling strategy of
\citet[Appendix~A.1]{wang2020generalvalue}, but use the fixed denominator floor
$|Z|q_\epsilon^2$ and prove the resulting additive-error bounds explicitly.
The empty dataset and zero-range class have exact widths, so suppose
$Z=(x_1,\ldots,x_n)$ with $1\leq n\leq M$. The component range is at most
$B=H\log(e|\cA|)$.

\paragraph{Sensitivity and retained support.}
For each index $\ell\in[n]$, define
\begin{equation}
\operatorname{sens}_{\ell}
=\sup_{f,g\in\mathcal F}
\frac{(f(x_\ell)-g(x_\ell))^2}
{\max\{\|f-g\|_Z^2,nq_\epsilon^2\}}.
\label{eq:general-fixed-scale-sensitivity}
\end{equation}
Each sensitivity is at most one. For every positive sensitivity, choose a witness
$v_\ell=f_\ell-g_\ell$ whose quotient is at least
$\operatorname{sens}_{\ell}/2$. If $|v_\ell(x_\ell)|\leq q_\epsilon$, then
$\operatorname{sens}_{\ell}\leq2/n$. Partition the remaining indices by
$a<|v_\ell(x_\ell)|\leq2a$, where $a=2^jq_\epsilon$.
Process the points in this group in their original order. Append each
$x_\ell$ to the first existing subsequence for which it is
$a$-independent of the points already assigned to that subsequence.
If there is no such subsequence, start a new one with $x_\ell$.
Every subsequence has length at most $d_{\rm E}(\mathcal F,a)$.
If $x_\ell$ enters subsequence $m\geq2$, it is $a$-dependent on each of the
$m-1$ preceding subsequences. Since $|v_\ell(x_\ell)|>a$, their disjointness gives
\[
\|v_\ell\|_Z^2>(m-1)a^2,
\qquad
\operatorname{sens}_{\ell}
\leq\min\{1,8/(m-1)\}.
\]
Summing over the at most $d_{\rm E}(\mathcal F,a)$ entries of each subsequence and
then the dyadic groups yields
\begin{equation}
\sum_{\ell=1}^n\operatorname{sens}_{\ell}
\leq C\left[
1+d_{\rm E}(\mathcal F,q_\epsilon)
\log(eB/q_\epsilon)\log(en)
\right].
\label{eq:general-sensitivity-sum}
\end{equation}
Let $\ell_M=\log(64M\mathcal N_\infty(\nu_\epsilon,\mathcal F)^2/\delta_0)$.
For a sufficiently large universal constant $C_{\rm s}$, take
\begin{equation}
a_\ell=\min\{1,C_{\rm s}\ell_M(\operatorname{sens}_{\ell}+1/n)\},
\qquad
p_\ell=\frac1{\lfloor1/a_\ell\rfloor},
\qquad
\zeta_\ell\sim\operatorname{Bernoulli}(p_\ell)
\quad\text{independently}.
\label{eq:general-sensitivity-sampling}
\end{equation}
Then $a_\ell\leq p_\ell\leq2a_\ell$, and each retained weight $1/p_\ell$ is an
integer in $[n]$. By \eqref{eq:general-sensitivity-sum} and Bernstein's inequality,
the number of retained indices is at most
\begin{equation}
J_M
=\left\lceil
C\ell_M\left[
1+d_{\rm E}(\mathcal F,q_\epsilon)
\log(eB/q_\epsilon)\log(eM)
\right]\right\rceil
\label{eq:general-compression-size}
\end{equation}
except on an event of probability at most $\delta_0/(64M)$.
Moreover, for indices with $p_\ell<1$,
$1/p_\ell\leq n/(C_{\rm s}\ell_M)$. Thus
\[
\sum_{\ell=1}^n\operatorname{Var}(\zeta_\ell/p_\ell)
\leq \frac{n^2}{C_{\rm s}\ell_M},
\qquad
\Pr\left\{\sum_{\ell=1}^n\zeta_\ell/p_\ell>2n\right\}
\leq\frac{\delta_0}{64M}.
\]

\paragraph{Uniform preservation of empirical distances.}
Let $Z'$ contain each retained point with weight $1/p_\ell$.
For a fixed difference $v=f-g$, put
$T_v=\max\{\|v\|_Z^2,nq_\epsilon^2\}$ and
$Y_\ell=(\zeta_\ell/p_\ell-1)v(x_\ell)^2$.
The indices with $p_\ell=1$ have $Y_\ell=0$, and the others satisfy
\[
|Y_\ell|\leq\frac{T_v}{C_{\rm s}\ell_M},
\qquad
\sum_{\ell=1}^n\mathbb E[Y_\ell^2]
\leq\frac{T_v\|v\|_Z^2}{C_{\rm s}\ell_M}
\leq\frac{T_v^2}{C_{\rm s}\ell_M}.
\]
Bernstein's inequality and a union bound over pairs from a fixed internal
$\nu_\epsilon$-net give
$|\|v\|_{Z'}^2-\|v\|_Z^2|\leq T_v/32$ for all net pairs, with failure probability
at most $\delta_0/(64M)$. For any pair $f,g$, choose net approximants
$\widetilde f,\widetilde g$ and write $\widetilde v=\widetilde f-\widetilde g$.
On the total-weight event,
\[
\|v-\widetilde v\|_Z\leq2\nu_\epsilon\sqrt n,
\qquad
\|v-\widetilde v\|_{Z'}\leq2\nu_\epsilon\sqrt{2n}.
\]
Applying the triangle inequality in these two seminorms and
$2ab\leq t a^2+b^2/t$, with a fixed small $t>0$, extends the net bound to
\begin{equation}
\frac12\|f-g\|_Z^2-nq_\epsilon^2
\leq\|f-g\|_{Z'}^2
\leq\frac32\|f-g\|_Z^2+nq_\epsilon^2,
\qquad f,g\in\mathcal F.
\label{eq:general-compressed-norm}
\end{equation}
Here $\nu_\epsilon\leq c_0q_\epsilon$, so the net-extension errors are absorbed
by $nq_\epsilon^2$ when $c_0$ is sufficiently small.

\paragraph{Rounding and confidence sets.}
Round each retained point in $d_{\mathcal F}$ and the center $\overline f$ in
sup norm, both at resolution $\nu_\epsilon$. The total retained weight is at most $2n$,
so, for $U=\|f-\overline f\|_Z$ and $R=\|f-f^\sharp\|_{Z^\sharp}$,
\[
\left|R-\|f-\overline f\|_{Z'}\right|
\leq3\nu_\epsilon\sqrt{2n}\leq q_\epsilon\sqrt n.
\]
Equation~\eqref{eq:general-compressed-norm} therefore implies
\[
R^2\leq3U^2+4nq_\epsilon^2,
\qquad
U^2\leq4R^2+6nq_\epsilon^2.
\]
Since $\alpha\geq Mq_\epsilon^2\geq nq_\epsilon^2$, these inequalities give
\begin{equation}
\mathcal C_Z(\overline f,\alpha)
\subseteq\mathcal C_{Z^\sharp}(f^\sharp,7\alpha)
\subseteq\mathcal C_Z(\overline f,34\alpha).
\label{eq:general-compressed-confidence-inclusions}
\end{equation}
Taking pairwise widths proves \eqref{eq:general-stable-width-sandwich}.
The preceding failure probabilities sum to at most $\delta_0/(16M)$.

\paragraph{Number of possible bonuses.}
For fixed $\alpha$, a bonus constructed from $Z^\sharp$ is determined
by the rounded center $f^\sharp$, at most $J_M$ rounded state-action
pairs $x_\ell^\sharp$, and their integer weights $1/p_\ell\le M$.
Including the constant failure output and the empty-data width gives
\begin{equation}
|\mathcal W(\mathcal F,M,\delta_0;\alpha)|
\leq
2+\mathcal N_\infty(\nu_\epsilon,\mathcal F)
\sum_{m=0}^{J_M}
\left[M\mathcal N(\nu_\epsilon,\mathcal X,d_{\mathcal F})\right]^m.
\label{eq:general-stable-family-entropy}
\end{equation}
Lemma~\ref{lem:general-eluder-controls-critic-cover} gives
$\log\mathcal N_\infty(\nu_\epsilon,\mathcal F)=\widetilde O(d_{\rm E})$.
Also $d_{\rm E}(\mathcal F,q_\epsilon)\leq d_{\rm E}$, so
$J_M=\widetilde O(d_{\rm E}^2)$. Taking logarithms in \eqref{eq:general-stable-family-entropy} proves
\eqref{eq:general-stable-compression-orders}. Its right-hand side is independent of
the numerical value of the fixed radius. The deterministic support and total-weight checks
ensure that the same family contains every output, including on the failure event.
\end{proof}

Using $B\vee1$ accommodates component ranges smaller than one without changing the stated
rates.  A zero-range component, in particular the terminal entropy-value function, is known exactly
and uses a zero bonus.

For a value function $V$, define the component Bellman target
\begin{equation}
f_{j,h}[V](s,a)
\triangleq
\begin{cases}
r_h(s,a)+P_hV(s,a), & j=r,\\
u_{i,h}(s,a)+P_hV(s,a), & j=u_i,\ i\in[I],\\
P_hV(s,a), & j=\psi.
\end{cases}
\label{eq:general-component-bellman-target}
\end{equation}
For $j=r,u_i,\psi$, respectively, let $V_j^{\pi^k}$ denote
$V_r^{\pi^k}$, $V_{u_i}^{\pi^k}$, and $V_{\psi_k}^{\pi^k}$.
Thus, for
$f_{j,k,h}^\star\triangleq f_{j,h}[V_{j,h+1}^k]$,
Assumption~\ref{ass:general-bellman-closeness} gives
$f_{j,k,h}^\star\in\mathcal F_{j,h}$, and the corresponding regression target lies in
$[0,B_{j,h}]$.

We now establish uniform least-squares confidence.  Uniformity over the value function is what
allows the backward recursion to reuse the same trajectories at every stage.

\begin{lemma}[Uniform adaptive-value function confidence]
\label{lem:general-square-loss-confidence}
Let $\mathcal V$ be a deterministic class of functions $\mathcal S\to[0,B]$, and suppose that for
every $V\in\mathcal V$ the Bellman target
$f_V(x)=y(x)+P V(x)$ belongs to
$\mathcal F\subseteq(\mathcal X\to[0,B])$.  Let
$(X_t,Y_t,S'_t)_{t=1}^M$ be a sequential dataset. Conditionally on the history
$\mathscr F_{t-1}$ and the possibly adaptive $X_t$, assume
$\mathbb E[Y_t\mid X_t,\mathscr F_{t-1}]=y(X_t)$ and
$S'_t\sim P(\cdot\mid X_t)$.  Assume also that
$Y_t+V(S'_t)\in[0,B]$ for every $V\in\mathcal V$.
For each $m\leq M$ and $V$, let
\[
\widehat f_{V,m}
\in
\argmin_{f\in\mathcal F}
\sum_{t=1}^m
\bigl(f(X_t)-Y_t-V(S'_t)\bigr)^2.
\]
Then, for every $\nu\in(0,1)$, with probability at least $1-\delta_0$,
simultaneously for all $m\leq M$ and $V\in\mathcal V$,
\begin{equation}
\|\widehat f_{V,m}-f_V\|_{Z_m}^2
\leq
C B^2
\log\!\left(
\frac{
M
\mathcal N_\infty(\nu,\mathcal F)
\mathcal N_\infty(\nu,\mathcal V)
}{\delta_0}
\right)
+CMB\nu,
\qquad
Z_m=\{X_1,\ldots,X_m\}.
\label{eq:general-uniform-regression-confidence}
\end{equation}
The same conclusion holds conditionally when the class $\mathcal V$ is fixed by a sigma-field
preceding the dataset.
\end{lemma}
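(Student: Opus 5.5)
The plan is the standard least-squares argument: fix a pair $(f,V)$ from finite nets, run a martingale concentration for the excess square loss, then take a union bound over the nets, over the prefix length $m\le M$, and pass to arbitrary $(\widehat f_{V,m},V)$ by discretisation.

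\textbf{Step 1 (fixed pair).} Let $\mathcal C_{\mathcal V}$ be a sup-norm $\nu$-net of $\mathcal V$ and $\mathcal C_{\mathcal F}$ an internal $\nu$-net of $\mathcal F$. For fixed $V$ and $f\in\mathcal F$, set $\varepsilon_t(V)=Y_t+V(S'_t)-f_V(X_t)$ and
\[
D_t(f)=\bigl(f(X_t)-Y_t-V(S'_t)\bigr)^2-\bigl(f_V(X_t)-Y_t-V(S'_t)\bigr)^2 .
\]
Expanding the square gives $D_t(f)=(f-f_V)(X_t)^2-2(f-f_V)(X_t)\varepsilon_t(V)$. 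The hypotheses $\mathbb E[Y_t\mid X_t,\mathscr F_{t-1}]=y(X_t)$ and $S'_t\sim P(\cdot\mid X_t)$ imply $\mathbb E[\varepsilon_t(V)\mid X_t,\mathscr F_{t-1}]=0$. Hence $\mathbb E[D_t(f)\mid X_t,\mathscr F_{t-1}]=(f-f_V)(X_t)^2$. Since all quantities lie in $[0,B]$, we also get $|D_t|\le 2B^2$ and
\[
\operatorname{Var}(D_t\mid X_t,\mathscr F_{t-1})\le 4B^2(f-f_V)(X_t)^2 .
\]
Freedman's inequality then shows that, with probability $1-\delta'$,
\[
\sum_{t\le m}(f-f_V)(X_t)^2-\sum_{t\le m}D_t(f)\le \tfrac12\|f-f_V\|_{Z_m}^2+CB^2\log(1/\delta') .
\]
This is the usual self-bounding form, obtained with a peeling or fixed-$\lambda$ Bernstein argument.

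\textbf{Step 2 (union bound).} Take a union bound over $m\le M$ and over $(f,V)\in\mathcal C_{\mathcal F}\times\mathcal C_{\mathcal V}$, with
\[
\delta'=\frac{\delta_0}{M\,|\mathcal C_{\mathcal F}|\,|\mathcal C_{\mathcal V}|}.
\]
This produces the logarithmic term in \eqref{eq:general-uniform-regression-confidence}. The fact that $\mathcal V$ is deterministic, or fixed by a preceding sigma-field in which case we condition on it, is exactly what makes this union bound legitimate.

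\textbf{Step 3 (discretisation).} For an arbitrary $V\in\mathcal V$, pick $\widetilde V\in\mathcal C_{\mathcal V}$ with $\|V-\widetilde V\|_\infty\le\nu$. Then $\|f_V-f_{\widetilde V}\|_\infty\le\nu$, because $P$ is an averaging operator, and the regression targets $Y_t+V(S'_t)$ move by at most $\nu$. Consequently every empirical square loss changes by at most $O(mB\nu)$. Similarly, pick $\widetilde f\in\mathcal C_{\mathcal F}$ within $\nu$ of $\widehat f_{V,m}$, which changes the losses by $O(mB\nu)$.

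\textbf{Step 4 (optimality).} Since $f_V\in\mathcal F$ and $\widehat f_{V,m}$ is an empirical minimiser, $\sum_{t\le m}D_t(\widehat f_{V,m})\le0$. Transfer this to the net pair $(\widetilde f,\widetilde V)$, paying $CMB\nu$, and combine it with the Step~1 bound on the good event. This gives
\[
\tfrac12\|\widetilde f-f_{\widetilde V}\|_{Z_m}^2\le CB^2\log(\cdots)+CMB\nu .
\]
Finally return to $\|\widehat f_{V,m}-f_V\|_{Z_m}^2$. The triangle inequality in $\|\cdot\|_{Z_m}$ costs $O(\sqrt m\,\nu)$ in norm, hence at most $O(m\nu^2+\sqrt m\,\nu\cdot\text{norm})$ after squaring; with $\nu<1\le B$ this is absorbed into $CMB\nu$ plus a constant multiple of the left side.

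\textbf{Main obstacle.} The delicate point is the interaction between adaptivity and the data-dependent $V$. The martingale structure in Step~1 needs the noise $\varepsilon_t(V)$ to be centred for a $V$ that is fixed before the data. The discretisation in Step~3 must therefore be carried out entirely on the deterministic nets, and only then transferred to the realised $V$. I would be careful to apply Freedman with a predictable variance proxy and to keep all transfer errors linear in $\nu$ and $M$, so that they land in the $CMB\nu$ term rather than in the log term.
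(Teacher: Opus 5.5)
Your proposal is correct and is essentially the paper's proof. Since $D_t(f)=(f-f_V)(X_t)^2-2(f-f_V)(X_t)\varepsilon_t(V)$, your self-bounding Freedman (or fixed-$\lambda$) bound on the excess loss is exactly the paper's exponential-supermartingale bound on $2\sum_t\varepsilon_t(V)(f-f_V)(X_t)$. The union bound over nets and $m\le M$, the $O(MB\nu)$ net transfer, and the use of $f_V\in\mathcal F$ as an ERM competitor all match the paper. One small point: the final squaring step does not need $B\ge1$, which the lemma does not assume, because all functions lie in $[0,B]$ and so $|a^2-b^2|\le|a-b|\,(|a|+|b|)\le 4B\nu$ pointwise.
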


\begin{proof}
For fixed $f\in\mathcal F$ and $V\in\mathcal V$, write
$\varepsilon_t(V)=Y_t+V(S'_t)-f_V(X_t)$.  Conditionally on
$\mathscr F_{t-1}$ and $X_t$, this is a mean-zero random variable bounded in magnitude by $B$.
The exponential-supermartingale bound for
$\varepsilon_t(V)(f(X_t)-f_V(X_t))$, followed by a union bound over sup-norm $\nu$-nets of
$\mathcal F$ and $\mathcal V$ and over $m\leq M$, gives
\[
2\sum_{t=1}^m
\varepsilon_t(V)(f(X_t)-f_V(X_t))
\leq
\frac12\|f-f_V\|_{Z_m}^2
+
CB^2\log\!\left(
\frac{M\mathcal N_\infty(\nu,\mathcal F)\mathcal N_\infty(\nu,\mathcal V)}{\delta_0}
\right).
\]
This event holds simultaneously for all net pairs.  Since
$\|f_V-f_{\widetilde V}\|_\infty\leq\|V-\widetilde V\|_\infty$, replacing either net center
by a function within distance $\nu$ changes the displayed loss comparison by at most
$CMB\nu$.  The resulting inequality therefore holds for every $(f,V)$.

By realizability, $f_V$ is an admissible competitor for the least-squares fit, so
\[
\|\widehat f_{V,m}-f_V\|_{Z_m}^2
\leq
2\sum_{t=1}^m\varepsilon_t(V)
\bigl(\widehat f_{V,m}(X_t)-f_V(X_t)\bigr).
\]
Substitute $f=\widehat f_{V,m}$ into the uniform inequality and move the half squared-norm term
to the left to obtain \eqref{eq:general-uniform-regression-confidence}.  The event is uniform
in $V$, so it also covers a value function chosen after observing the data. If $\mathcal V$ is
measurable with respect to a sigma-field preceding the dataset, condition on that sigma-field,
apply the uniform bound, and then integrate the conditional probability.
\end{proof}

\subsubsection{Value function and actor covers}
For a policy $\pi_h$, critic $f\in\mathcal F_{j,h}$, and stable bonus
$b\in\mathcal W(\mathcal F_{j,h},M,\delta_0;\alpha)$ for a fixed radius $\alpha$, the oracle forms
\begin{align*}
V_{j,h}^{\pi,f,b}(s)
&=
\sum_{a\in\cA}\pi_h(a\mid s)
\mathrm{Truncate}_{[0,B_{j,h}]}\bigl(f(s,a)+b(s,a)\bigr),
\qquad j\neq\psi,
\\
V_{\psi,h}^{\pi,f,b}(s)
&=
\operatorname{Ent}\bigl(\pi_h(\cdot\mid s)\bigr)
+
\sum_{a\in\cA}\pi_h(a\mid s)
\mathrm{Truncate}_{[0,B_{\psi,h}]}\bigl(f(s,a)+b(s,a)\bigr).
\end{align*}
Here $\operatorname{Ent}(p)=-\sum_a p(a)\log p(a)$.  Only the value function part of the entropy
critic is fitted; its immediate entropy contribution is evaluated exactly.
For later use, write
\begin{equation}
\mathcal W_{j,h}^{(M)}(\alpha)
\triangleq
\mathcal W(\mathcal F_{j,h},M,\delta_0;\alpha),
\qquad
\mathcal V_{j,h}(\pi_h;M,\delta_0,\alpha)
\triangleq
\left\{
V_{j,h}^{\pi,f,b}:f\in\mathcal F_{j,h},\ b\in\mathcal W_{j,h}^{(M)}(\alpha)
\right\}.
\label{eq:general-fixed-policy-continuation-class}
\end{equation}
At the terminal stage, set $\mathcal V_{j,H+1}(\cdot;M,\delta_0,\alpha)=\{0\}$.

\begin{lemma}[Stable value function cover]
\label{lem:general-continuation-cover}
Fix $\alpha\geq\alpha_{\mathcal F_{j,h}}(M,\delta_0)$. For a fixed policy $\pi_h$, the class
$\mathcal V_{j,h}(\pi_h;M,\delta_0,\alpha)$ has a $\nu$-cover whose
logarithmic size is at most
\begin{equation}
\log\mathcal N_\infty(\nu,\mathcal F_{j,h})
+
\log|\mathcal W(\mathcal F_{j,h},M,\delta_0;\alpha)|.
\label{eq:general-fixed-policy-continuation-cover}
\end{equation}
For a varying log-linear policy, combine a $\nu/2$-cover of the critic with a cover of its
logits at sup-norm resolution
$\nu'=\nu/[4(B_{j,h}+\log(e|\cA|))]$, allowing state-dependent additive shifts.
The logarithmic size of this logit cover is added to
\eqref{eq:general-fixed-policy-continuation-cover}, with the critic resolution replaced by
$\nu/2$.  This construction covers both ordinary and entropy value function.
\end{lemma}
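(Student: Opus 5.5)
For a fixed policy $\pi_h$, I would build a product cover from the two ingredients that determine an element of $\mathcal V_{j,h}(\pi_h;M,\delta_0,\alpha)$: the critic $f\in\mathcal F_{j,h}$ and the bonus $b$ taken from the finite family $\mathcal W(\mathcal F_{j,h},M,\delta_0;\alpha)$ of Lemma~\ref{lem:general-stable-bonus}. Take an internal sup-norm $\nu$-cover $\{f_m\}$ of $\mathcal F_{j,h}$ (finite by Lemma~\ref{lem:general-eluder-controls-critic-cover}). Pair each $f_m$ with every $b\in\mathcal W$, and use the candidate set $\{V_{j,h}^{\pi,f_m,b}\}$. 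If $\|f-f_m\|_\infty\le\nu$, then truncation to $[0,B_{j,h}]$ is $1$-Lipschitz, so the two clipped action values differ by at most $\nu$ pointwise. Averaging over $\pi_h(\cdot\mid s)$ is nonexpansive. The exact entropy term $\operatorname{Ent}(\pi_h(\cdot\mid s))$ is common to both functions and cancels when $j=\psi$. Hence the bonus is matched exactly and the cardinality is the product, which gives \eqref{eq:general-fixed-policy-continuation-cover}.

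For a varying log-linear policy, I would cover the policy through its logits. As in \eqref{eq:explicit-actor-function}, the logits are only determined up to a state-dependent shift, which is why the cover is allowed such shifts. Suppose two logit vectors $g,g'$ satisfy $\sup_{s,a}|g(s,a)-g'(s,a)-c(s)|\le\nu'$ for some shift $c$. Then, as in \eqref{eq:explicit-softmax-cover-transfer}, the log-ratio of the induced policies is at most $2\nu'$, and $\|\pi_h(\cdot\mid s)-\pi'_h(\cdot\mid s)\|_1\le e^{2\nu'}-1\le C\nu'$.

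The main step is then the transfer bound. For fixed $(f,b)$ the clipped action value lies in $[0,B_{j,h}]$, so changing the averaging policy changes the average by at most $B_{j,h}\|\pi-\pi'\|_1$. For the entropy term I would write
\[
\operatorname{Ent}(\pi)-\operatorname{Ent}(\pi')=\sum_a(\pi'_a-\pi_a)\log\pi'_a+\sum_a\pi_a\log(\pi'_a/\pi_a).
\]
The second sum is at most $2\nu'$ in absolute value. The first sum is the delicate one, because $\log\pi'_a$ is unbounded. I would first try the estimate $|\pi_a-\pi'_a|\le(e^{2\nu'}-1)\pi'_a$, which follows from the log-ratio bound. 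This gives $|\sum_a(\pi'_a-\pi_a)\log\pi'_a|\le C\nu'\operatorname{Ent}(\pi')\le C\nu'\log|\cA|$. Combining the pieces, the total value difference is at most $C\nu'(B_{j,h}+\log(e|\cA|))$. The prescribed choice $\nu'=\nu/[4(B_{j,h}+\log(e|\cA|))]$ makes this at most $\nu/2$, modulo tracking the universal constant (I expect $e^{2\nu'}-1\le 4\nu'$ for small $\nu'$ to suffice).

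Finally, I would combine the pieces. Take a $\nu/2$-cover of $\mathcal F_{j,h}$, the full finite bonus family, and the logit cover at resolution $\nu'$. By the triangle inequality, first change the critic at a fixed policy (cost at most $\nu/2$), then change the policy at a fixed critic and bonus (cost at most $\nu/2$), which yields a $\nu$-cover. The log-cardinalities add, giving \eqref{eq:general-fixed-policy-continuation-cover} with critic resolution $\nu/2$ plus the log-size of the logit cover. This argument handles the ordinary components ($j\ne\psi$, with no entropy term) and the entropy component alike. The only genuine obstacle is the unbounded $\log\pi'$ in the entropy difference, which the multiplicative log-ratio control handles.
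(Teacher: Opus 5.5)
Your proposal is correct in structure but takes a different route from the paper for the varying-policy part. The fixed-policy argument is the paper's: pair a critic cover with the finite bonus family, use that truncation and averaging are nonexpansive, and note that the exact entropy term cancels. For the policy change, the paper interpolates along the logits, $\pi_t\propto\exp(g'+c+t\Delta g)$ with $\|\Delta g\|_\infty\le\nu'$. It then uses $\frac{d}{dt}\pi_t(a)=\pi_t(a)(\Delta g_a-\mathbb E_{\pi_t}\Delta g)$ and $\frac{d}{dt}\operatorname{Ent}(\pi_t)=-\operatorname{Cov}_{\pi_t}(\Delta g,\log\pi_t)$, whose magnitude is at most $2\nu'\operatorname{Ent}(\pi_t)\le2\nu'\log|\cA|$. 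This gives exactly $\|\pi-\pi'\|_{1,\infty}\le2\nu'$ and an entropy change of at most $2\nu'\log|\cA|$. You instead split $\operatorname{Ent}(\pi)-\operatorname{Ent}(\pi')$ statically into a cross term weighted by $\log\pi'$ plus $-\KL(\pi\|\pi')$. You then control the unbounded weight with the pointwise ratio bound $|\pi_a-\pi'_a|\le(e^{2\nu'}-1)\pi'_a$. The mechanism is the same in both: the unbounded $\log\pi$ is integrated against a probability, so it is controlled by the entropy, and no floor on action probabilities is needed. Your version is calculus-free; the path integral gives the sharp first-order constants.

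That sharpness is exactly what the stated resolution needs, and your final constant check does not close as written. Your policy-change bound for the entropy component is $(e^{2\nu'}-1)(B_{j,h}+\log|\cA|)+2\nu'$, while the budget is $\nu/2=2\nu'(B_{j,h}+\log|\cA|+1)$. Since $e^{2\nu'}-1>2\nu'$, you exceed the budget whenever $B_{j,h}+\log|\cA|>0$. With your suggested $e^{2\nu'}-1\le4\nu'$, the policy term is about $\nu$, so the total is near $3\nu/2$. The repair is easy, and any of the following works.
\begin{itemize}
\item The KL term is second order. With $\delta_a=g(s,a)-g'(s,a)-c(s)$, one has $0\le\KL(\pi\|\pi')=\mathbb E_\pi\delta-\log\mathbb E_{\pi'}e^{\delta}\le\sum_a(\pi_a-\pi'_a)\delta_a\le\nu'\|\pi-\pi'\|_1$. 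The remaining $O(\nu'^2(B_{j,h}+\log|\cA|))$ excess is then absorbed by the spare $2\nu'$ when $\nu\le1$, which is the regime used downstream.
\item Use the paper's interpolation bounds directly.
\item Shrink $\nu'$ by a constant factor. This only changes constants inside the logarithm of the logit cover and is harmless for Lemma~\ref{lem:general-continuation-complexity}.
\end{itemize}
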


\begin{proof}
Clipping and averaging under a fixed policy are nonexpansive in the sup norm.  Thus a critic
cover, paired with every member of the finite bonus family, proves the fixed-policy statement.
For varying policies, define
$\|\pi-\pi'\|_{1,\infty}=\sup_s\sum_a|\pi(a\mid s)-\pi'(a\mid s)|$.
For $Q,Q'\in[0,B_{j,h}]$,
\[
\left|
\sum_a\pi(a\mid s)Q(s,a)
-
\sum_a\pi'(a\mid s)Q'(s,a)
\right|
\leq
\|Q-Q'\|_\infty+B_{j,h}\|\pi-\pi'\|_{1,\infty}.
\]
If two logits differ by at most $\nu'$ after a state-dependent shift, interpolation between
them gives $\|\pi-\pi'\|_{1,\infty}\leq2\nu'$.  Along this interpolation, with logit
increment $\Delta g$ and induced policy $\pi_t$,
\[
\frac{d}{dt}\operatorname{Ent}(\pi_t)
=
-\operatorname{Cov}_{\pi_t}(\Delta g,\log\pi_t).
\]
The derivative is bounded in magnitude by
$2\|\Delta g\|_\infty\operatorname{Ent}(\pi_t)
\leq2\log|\cA|\,\|\Delta g\|_\infty$.
Integrating and adding the critic-cover error gives a total error of at most
$\nu/2+2(B_{j,h}+\log|\cA|)\nu'\leq\nu$.  Thus a logit cover controls the entropy term
without requiring a lower bound on the action probabilities.
\end{proof}

For off-policy data, the actor depends on the preceding data prefix. Recall the centered actor
function $f_{\omega,h}$, radius $L_{\rm act}$, and deterministic class
$\mathcal F_h^{\rm off}$ from
\eqref{eq:explicit-actor-function}-\eqref{eq:explicit-actor-function-class}, and define
\begin{equation}
\Pi_h^{\rm off}
\triangleq
\left\{
\pi_h(\cdot\mid\cdot,\omega):f_{\omega,h}\in\mathcal F_h^{\rm off}
\right\}.
\label{eq:general-off-policy-actor-class}
\end{equation}

\begin{lemma}[Off-policy log-linear actor cover]
\label{lem:general-reachable-actor-cover}
Run Algorithm~\ref{alg:master-rpgpd} with Algorithm~\ref{alg:general-ope},
Assumption~\ref{ass:master-actor-oracle}, and $0<\eta\tau\leq1$.  For every stage $h$, the
iterates initialized at $\omega_h^1=0$ satisfy $\pi_h^k\in\Pi_h^{\rm off}$ and
\begin{equation}
\sup_{s}\max_{a,a'}
\left|
\left\langle
\varphi(s,a)-\varphi(s,a'),\omega_h^k
\right\rangle
\right|
\leq
L_{\rm act}
=\frac{H(1+I\lambda_{\max}+\tau\log|\cA|)+\epsilon_{\rm act}}{\tau}.
\label{eq:general-reachable-logit-radius}
\end{equation}
Moreover, for every $\nu>0$,
\begin{equation}
\log\mathcal N\bigl(\nu,\Pi_h^{\rm off},\|\cdot\|_{1,\infty}\bigr)
\leq
d_{\rm a}
\log\!\left(
1+\frac{C L_{\rm act}}{\nu}
\right).
\label{eq:general-reachable-actor-cover}
\end{equation}
The cover is obtained from logits modulo state-dependent shifts, so the same covering bound
also supplies the logit cover required by Lemma~\ref{lem:general-continuation-cover}.
\end{lemma}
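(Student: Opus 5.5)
The argument mirrors the linear off-policy analysis in Appendix~\ref{app:explicit-off-policy-oracle}, where the same statement was proved for the linear oracle. Nothing there used linearity of the critic beyond the truncation envelope. It has three steps.

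\emph{Step 1: Decomposition of the shaped critic.} Fix a stage $h$. Algorithm~\ref{alg:general-ope} truncates each component critic, so
\[
Q_{z_k,h}^k(s,a)=\overline Q_{k,h}(s,a)-\tau\log\pi_h^k(a\mid s),
\qquad
0\leq\overline Q_{k,h}\leq H(1+I\lambda_{\max}+\tau\log|\cA|).
\]
Here $\overline Q_{k,h}=\overline Q_{r,h}^k+\lambda_k^\top\overline Q_{u,h}^k+\tau\overline Q_{\psi,h}^k$. This bound uses $\lambda_k\in\Lambda$ and the truncation ranges $B_{j,h}$.

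\emph{Step 2: Log-odds recursion and its unrolled bound.} Define the residual $\Delta_{k,h}$ as in the proof of Proposition~\ref{prop:explicit-actor-certificate}. That proof, which relies only on Assumption~\ref{ass:master-actor-oracle} and the fitting rule \eqref{eq:explicit-actor-loss}, gives \eqref{eq:actor-residual-action-difference}: the action differences of $\Delta_{k,h}$ are at most $\eta\epsilon_{\rm act}$. I will subtract the logit identities for actions $a$ and $a'$. Writing $x_k=\langle\varphi(s,a)-\varphi(s,a'),\omega_h^k\rangle$, which equals the log-odds $\log(\pi_h^k(a|s)/\pi_h^k(a'|s))$, this yields
\[
x_{k+1}=(1-\eta\tau)x_k+\eta\bigl(\overline Q_{k,h}(s,a)-\overline Q_{k,h}(s,a')\bigr)+\Delta_{k,h}(s,a)-\Delta_{k,h}(s,a').
\]
Because $0<\eta\tau\leq1$, the coefficient $1-\eta\tau$ is nonnegative. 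Taking absolute values gives the contraction \eqref{eq:explicit-log-odds-radius}. Starting from $x_1=0$ (since $\omega_h^1=0$), induction gives $|x_k|\leq\eta D\sum_{t\geq0}(1-\eta\tau)^t\leq D/\tau=L_{\rm act}$, where $D=H(1+I\lambda_{\max}+\tau\log|\cA|)+\epsilon_{\rm act}$. This is \eqref{eq:general-reachable-logit-radius}. The centered actor function $f_{\omega^k,h}$ from \eqref{eq:explicit-actor-function} is an average of such log-odds, so $\|f_{\omega^k,h}\|_\infty\leq L_{\rm act}$. Therefore $\pi_h^k\in\Pi_h^{\rm off}$.

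\emph{Step 3: Covering bound.} The map $\omega_h\mapsto f_{\omega,h}$ is linear, with range a subspace of dimension at most $d_{\rm a}$. The sup norm restricted to this subspace is a norm, and the set $\{\|f\|_\infty\leq L_{\rm act}\}$ is its ball. A volumetric argument then gives a sup-norm $\nu'$-cover with log-size at most $d_{\rm a}\log(1+2L_{\rm act}/\nu')$, as in \eqref{eq:explicit-reachable-actor-cover}. I will transfer this cover to $\|\cdot\|_{1,\infty}$ using \eqref{eq:explicit-softmax-cover-transfer}: logits within $\nu'\leq1$ give policies within $C\nu'$. Setting $\nu'=\nu/C$ yields \eqref{eq:general-reachable-actor-cover}. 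For $\nu>C$, the bound is trivial because every policy is within distance $2$ of any other. Since $f_{\omega,h}$ determines the logits up to a state-dependent shift, the same cover also serves as the logit cover required in Lemma~\ref{lem:general-continuation-cover}.

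The main obstacle is Step 2. It must hold on every realization, not only on the oracle's high-probability event, because $\Pi_h^{\rm off}$ has to be deterministic. This works because the envelope in Step 1 comes from deterministic truncation, and the actor error bound comes from the deterministic Proposition~\ref{prop:explicit-actor-certificate}. Neither depends on optimism, so no probabilistic argument is needed.
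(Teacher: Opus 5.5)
Your proposal is correct and follows the paper's proof, which simply reuses the linear off-policy analysis: the truncation envelope for $\overline Q_{k,h}$, the action-difference residual bound \eqref{eq:actor-residual-action-difference}, the contracting log-odds recursion \eqref{eq:explicit-log-odds-radius} unrolled from $\omega_h^1=0$, and the volumetric cover of the centered actor functions transferred to $\|\cdot\|_{1,\infty}$ via \eqref{eq:explicit-softmax-cover-transfer}. You also note explicitly that these steps hold on every realization, which is what makes $\Pi_h^{\rm off}$ deterministic. The paper leaves that point implicit.
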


\begin{proof}
The actor and fitting rule are unchanged from Section~\ref{section:linear-cmdp-explicit-actor}.
Hence the residual bound in
\eqref{eq:actor-residual-action-difference} and the log-odds recursion
\eqref{eq:explicit-log-odds-radius} give
\eqref{eq:general-reachable-logit-radius}. Equations
\eqref{eq:explicit-actor-function-policy}-\eqref{eq:explicit-reachable-actor-cover} then give
$\pi_h^k\in\Pi_h^{\rm off}$ and the stated cover. The centered representation removes exactly
the state-dependent shifts that leave softmax policies unchanged.
\end{proof}

\begin{lemma}[Value-function-class complexity]
\label{lem:general-continuation-complexity}
At the fixed resolution $\nu_\epsilon/2$, the logarithmic covering numbers of the
fixed-policy and off-policy value function classes are bounded, respectively, by
\begin{equation}
\Gamma_{\rm on}
=\widetilde O(d_{\rm E}^2\mathfrak h_{\mathcal X}),
\qquad
\Gamma_{\rm off}
=\widetilde O(d_{\rm E}^2\mathfrak h_{\mathcal X}+d_{\rm a}).
\label{eq:general-continuation-complexity-orders}
\end{equation}
These bounds hold uniformly over admissible deterministic radii. Choosing the squared-loss
radius $C(B_{j,h}\vee1)^2\Gamma_{\rm on}+CNq_\epsilon^2$ or
$C(B_{j,h}\vee1)^2\Gamma_{\rm off}+CKq_\epsilon^2$, respectively, satisfies both the
stable-bonus threshold and the uniform regression bound in
\eqref{eq:general-uniform-regression-confidence}.
\end{lemma}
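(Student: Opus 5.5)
The plan is to assemble the covering bounds already established and then verify the two radius requirements. For the fixed-policy (on-policy) class, Lemma~\ref{lem:general-continuation-cover} bounds the log covering number at resolution $\nu_\epsilon/2$ by $\log\mathcal N_\infty(\nu_\epsilon/2,\mathcal F_{j,h})+\log|\mathcal W(\mathcal F_{j,h},M,\delta_0;\alpha)|$. The first term is handled by Lemma~\ref{lem:general-eluder-controls-critic-cover} at $\rho=\nu_\epsilon/2$. This gives $\widetilde O(1+d_{\rm E}(\mathcal F_{j,h},\nu_\epsilon/12))$, which is at most $\widetilde O(d_{\rm E})$ because the eluder dimension is monotone in the scale and $d_{\rm E}$ is evaluated at the finer scale $\nu_\epsilon/24$ in \eqref{eq:general-complexity-convention}. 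The second term is $\widetilde O(d_{\rm E}^2\mathfrak h_{\mathcal X})$ by \eqref{eq:general-stable-compression-orders}, uniformly over $\alpha\geq\alpha_{\mathcal F}(M,\delta_0)$. I would then take a union over the $I+2$ components, which adds only $\log(I+2)$. Together these give $\Gamma_{\rm on}$. Since $\pi^k$ is fixed conditionally on the history before the fresh batch, no actor cover is needed.

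For the off-policy class, I would use the varying-policy part of Lemma~\ref{lem:general-continuation-cover} with logit resolution $\nu'=\nu_\epsilon/[8(B_{j,h}+\log(e|\cA|))]$. The required logit cover modulo state-dependent shifts comes from Lemma~\ref{lem:general-reachable-actor-cover}, equation \eqref{eq:general-reachable-actor-cover}, with log size $d_{\rm a}\log(1+CL_{\rm act}/\nu')$. Here $L_{\rm act}$ and $\nu'$ depend only on problem parameters and $\epsilon$, so this term is $\widetilde O(d_{\rm a})$. Adding it to the fixed-policy bound at resolution $\nu_\epsilon/4$ gives $\Gamma_{\rm off}$. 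All classes are defined through deterministic budgets, so they are fixed before the data are collected.

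For the radii, I would apply Lemma~\ref{lem:general-square-loss-confidence} with $\nu=\nu_\epsilon/2$, using the critic class $\mathcal F_{j,h}$ and the value class $\mathcal V_{j,h+1}$ just covered, with range $B_{j,h}\vee1$. The logarithmic term becomes $C(B_{j,h}\vee1)^2(\Gamma+\log(M/\delta_0))$, which is absorbed into $C(B_{j,h}\vee1)^2\Gamma$ up to logarithms. The additive term $CMB\nu_\epsilon$ equals $CMc_0q_\epsilon^2$ by the definition $\nu_\epsilon=c_0q_\epsilon^2/B$, and so it is absorbed into $CMq_\epsilon^2$. The stable-bonus threshold $\alpha\geq Mq_\epsilon^2$ then holds by the second summand once $C\geq1$.

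The main subtlety I expect is circularity. The bonus family $\mathcal W$ depends on the radius $\alpha$, and $\alpha$ depends on $\Gamma$. This is resolved by the uniformity of \eqref{eq:general-stable-compression-orders} over all $\alpha\geq\alpha_{\mathcal F}$, which lets $\Gamma$ be fixed first and $\alpha$ be chosen afterwards.
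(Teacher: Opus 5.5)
Your proposal is correct and follows the paper's proof essentially step for step: you use the continuation cover together with the eluder-to-entropy lemma and the $\alpha$-uniform compression bound for $\Gamma_{\rm on}$, the shift-invariant logit cover at critic resolution $\nu_\epsilon/4$ for $\Gamma_{\rm off}$, and the regression lemma at $\nu=\nu_\epsilon/2$ with $B\nu_\epsilon=c_0q_\epsilon^2$ for the radius. The only difference is bookkeeping: the paper defines $\Gamma$ explicitly as $1+\log(M/\delta_0)+2\max_{j,h}\log\mathcal N_\infty(\nu_\epsilon/4,\mathcal F_{j,h})+\sup_\alpha\log|\mathcal W|$ (plus the actor term off-policy), so that the regression lemma's logarithmic factor is bounded by $\Gamma$ exactly rather than ``up to logarithms''; that factor also contains the stage-$h$ critic entropy, which you did not list, and exactness matters because the radius carries only a universal constant $C$.
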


\begin{proof}
For $M=N,K$, let
$\overline w^{(M)}=\max_{j,h}\sup_{\alpha\geq\alpha_{\mathcal F_{j,h}}(M,\delta_0)}
\log|\mathcal W(\mathcal F_{j,h},M,\delta_0;\alpha)|$.
A zero-range component contributes zero to this maximum. Define
\begin{equation}
\begin{aligned}
\Gamma_{\rm on}
&=1+\log\frac{N}{\delta_0}
+2\max_{j,h}\log\mathcal N_\infty(\nu_\epsilon/4,\mathcal F_{j,h})
+\overline w^{(N)},
\\
\Gamma_{\rm off}
&=1+\log\frac{K}{\delta_0}
+2\max_{j,h}\log\mathcal N_\infty(\nu_\epsilon/4,\mathcal F_{j,h})
+\overline w^{(K)}
\\
&\qquad
+d_{\rm a}\log\!\left(
1+\frac{C L_{\rm act}(B+\log(e|\cA|))}{\nu_\epsilon}
\right).
\end{aligned}
\label{eq:general-proof-continuation-complexities}
\end{equation}
For a fixed admissible radius and policy, Lemma~\ref{lem:general-continuation-cover} gives
\[
\log\mathcal N_\infty\!\left(
\nu_\epsilon/2,\mathcal V_{j,h}(\pi_h;N,\delta_0,\alpha)
\right)
\leq
\log\mathcal N_\infty(\nu_\epsilon/2,\mathcal F_{j,h})+\overline w^{(N)}
\leq\Gamma_{\rm on}.
\]
For the union over $\pi_h\in\Pi_h^{\rm off}$, use critic resolution
$\nu_\epsilon/4$ and logit resolution
$\nu_\epsilon/[8(B_{j,h}+\log(e|\cA|))]$ in that lemma. Equation
\eqref{eq:general-reachable-actor-cover} bounds the added logit-cover term by the last term
of $\Gamma_{\rm off}$. The separate maxima over $(j,h)$ in
\eqref{eq:general-proof-continuation-complexities} also cover a stage-$h$ critic paired with
a stage-$(h+1)$ value function.

Apply Lemma~\ref{lem:general-square-loss-confidence} at resolution
$\nu=\nu_\epsilon/2$. Its logarithmic term is bounded by the corresponding $\Gamma$, and
the definition of $\nu_\epsilon$ gives
\[
CM B_{j,h}\nu_\epsilon\leq CM B\nu_\epsilon\leq CMq_\epsilon^2.
\]
The additive radius $CMq_\epsilon^2$ therefore covers both this approximation term and
$\alpha_{\mathcal F_{j,h}}(M,\delta_0)=Mq_\epsilon^2$. For a zero-range component, the
target, fit, and bonus are identically zero and no confidence radius is needed.
Finally, Lemma~\ref{lem:general-eluder-controls-critic-cover}, at resolution
$\nu_\epsilon/4$, bounds the critic entropy by $\widetilde O(d_{\rm E})$;
Lemma~\ref{lem:general-stable-bonus} gives
$\overline w^{(M)}=\widetilde O(d_{\rm E}^2\mathfrak h_{\mathcal X})$.
The actor term contributes $\widetilde O(d_{\rm a})$. This proves
\eqref{eq:general-continuation-complexity-orders}; the budget enters only through logarithms,
not through the scales at which the complexity measures are evaluated.
\end{proof}

\begin{lemma}[Common boundedness certificate]
\label{lem:general-common-boundedness}
Let $S\triangleq1+I\lambda_{\max}+\tau\log(e|\cA|)$. If
$0<\eta\tau\leq1/4$ and $\eta HS\leq1/4$, every output of
Algorithm~\ref{alg:general-ope} satisfies
\begin{equation}
0\leq V_{u_i}^k\leq H,
\qquad
\sum_{a\in\cA}\pi_h^k(a\mid s)e^{\eta Q_{z_k,h}^k(s,a)}
\bigl(Q_{z_k,h}^k(s,a)\bigr)^2
\leq C H^2S^2.
\label{eq:general-common-local-moment}
\end{equation}
\end{lemma}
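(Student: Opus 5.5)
The plan is to reduce Lemma~\ref{lem:general-common-boundedness} to the local-norm calculation already carried out for the linear oracle, \eqref{eq:explicit-on-policy-escort-moment}--\eqref{eq:explicit-on-policy-escort-log-moment}. That calculation used only two inputs: a pointwise envelope on the shaped critic, and the two stepsize conditions. Neither input depends on the regression being linear.

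\textbf{Utility bound.} Algorithm~\ref{alg:general-ope} sets $Q_{u_i,h}^k=\mathrm{Truncate}_{[0,H+1-h]}(\widehat f_{u_i,h}^k+b_{u_i,h}^k)$. At $h=1$ this lies in $[0,H]$. Averaging over $\pi_1^k(\cdot\mid s_1)$ preserves the interval, so $0\leq V_{u_i}^k\leq H$. No statistical event is needed here.

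\textbf{Envelope.} Write $p_a=\pi_h^k(a\mid s)$. By construction,
\[
Q_{z_k,h}^k(s,a)=\overline Q_{r,h}^k+\lambda_k^\top\overline Q_{u,h}^k+\tau\overline Q_{\psi,h}^k+\tau\log(1/p_a).
\]
Each truncated component is nonnegative, and their ranges are $[0,H]$, $[0,H]$ and $[0,H\log|\cA|]$ respectively. Since $\lambda_k\in[0,\lambda_{\max}]^I$, this gives
\[
0\leq Q_{z_k,h}^k(s,a)\leq B+\tau\log(1/p_a),\qquad B\triangleq H(1+I\lambda_{\max}+\tau\log|\cA|)\leq HS.
\]
This is exactly the envelope \eqref{eq:explicit-on-policy-shaped-envelope}.

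\textbf{Moment bound.} Set $\alpha=\eta\tau$. The envelope yields two pointwise bounds:
\[
e^{\eta Q}\leq e^{\eta B}p_a^{-\alpha},\qquad Q^2\leq 2B^2+2\tau^2\log^2(1/p_a).
\]
Substituting them splits the weighted sum into two escort sums.
\begin{itemize}
\item The first sum satisfies $\sum_a p_a^{1-\alpha}\leq|\cA|^{\alpha}$ by concavity.
\item For the second, take $t=1/\log(e|\cA|)$ and use $x^2\leq 4e^{-2}t^{-2}e^{tx}$. This gives $\sum_a p_a^{1-\alpha}\log^2(1/p_a)\leq C\log^2(e|\cA|)\,|\cA|^{\alpha+t}$.
\end{itemize}
These factors are then bounded by constants.
\begin{itemize}
\item The hypothesis $\eta HS\leq1/4$ implies $\eta B\leq1/4$. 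It also implies $\alpha\log|\cA|\leq\eta H\tau\log(e|\cA|)\leq1/4$, since $H\geq1$, so $|\cA|^{\alpha}\leq e^{1/4}$.
\item We also have $|\cA|^t\leq e$.
\end{itemize}
Collecting terms gives a bound of $C[B^2+\tau^2\log^2(e|\cA|)]\leq CH^2S^2$, because $\tau\log(e|\cA|)\leq S$.

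\textbf{Main obstacle.} The only delicate point is the unbounded $-\log p_a$ term. Because of it, no uniform lower bound on action probabilities is available, and the escort-moment trick is the step that handles it. I would also check separately the degenerate case $|\cA|=1$, where the log term vanishes and the second sum is zero.
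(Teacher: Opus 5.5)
Your proposal is correct and follows essentially the same route as the paper: the same envelope $0\le Q_{z_k,h}^k(s,a)\le HS+\tau\log(1/p_a)$, the same split into the escort sums $\sum_a p_a^{1-\alpha}$ and $\sum_a p_a^{1-\alpha}\log^2(1/p_a)$, and the same concavity step and exponential domination of $x^2$. The only difference is the choice of $t$. The paper's general-oracle proof takes $t=1/(2\log(e|\cA|))$, so the exponent satisfies $1-\alpha-t\geq 1/4$ automatically and no separate case $|\cA|=1$ is needed. With your choice $t=1/\log(e|\cA|)$, you should state explicitly that $\alpha+t\leq 1/4+1/(1+\log 2)<1$ when $|\cA|\geq 2$; this is what makes the concavity bound $\sum_a p_a^{1-\alpha-t}\leq|\cA|^{\alpha+t}$ valid.
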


\begin{proof}
Utility boundedness follows from truncation. For $p_a=\pi_h^k(a\mid s)$, the component
ranges give
\begin{equation*}
Q_{z_k,h}^k(s,a)
=q_{k,h}(s,a)+\tau\log(1/p_a),
\qquad
0\leq q_{k,h}(s,a)\leq HS.
\end{equation*}
Set $\alpha=\eta\tau$ and $L=\log(e|\cA|)$. Then
\begin{align*}
&\sum_a p_a e^{\eta Q_{z_k,h}^k(s,a)}
\bigl(Q_{z_k,h}^k(s,a)\bigr)^2
\\
&\qquad\leq
2e^{\eta HS}
\left[
H^2S^2\sum_a p_a^{1-\alpha}
+\tau^2\sum_a p_a^{1-\alpha}\log^2(1/p_a)
\right].
\end{align*}
Concavity gives $\sum_a p_a^{1-\alpha}\leq|\cA|^\alpha$. Moreover,
$x^2\leq C L^2e^{x/(2L)}$ for $x\geq0$ implies
\begin{equation*}
\sum_a p_a^{1-\alpha}\log^2(1/p_a)
\leq
C L^2|\cA|^{\alpha+1/(2L)}.
\end{equation*}
The two stepsize conditions give
$\eta HS\leq1/4$ and $\alpha\log|\cA|\leq1/4$, while
$|\cA|^{1/(2L)}\leq e^{1/2}$ and $\tau L\leq S$. Substitution proves
\eqref{eq:general-common-local-moment}.
\end{proof}

\subsection{Eluder-width bounds}
\label{app:general-eluder-widths}
We now bound the size of the bonus under the current policy, which
determines the cost of optimism in the oracle guarantee. For a fresh
on-policy batch, we control
$\mathbb E_{x\sim d_h^{\pi^k}}[b_{j,h}^k(x)]$
using an eluder-width bound. With off-policy data, the bonus depends
on the newly observed trajectory; we upper-bound it by a width
determined from the preceding data $Z_{k-1}$, then control the sum of
its conditional expectations.

\begin{lemma}[Population eluder width]
\label{lem:general-population-eluder-width}
Let $\mathcal F\subseteq(\mathcal X\to[0,B])$, let $X_1,\ldots,X_N$ be i.i.d. from $\nu$, fix
$\gamma\geq0$, and set
\[
w_N(x)
=
\sup\left\{
|f(x)-f'(x)|:
f,f'\in\mathcal F,\ 
\sum_{n=1}^N(f(X_n)-f'(X_n))^2\leq\gamma
\right\}.
\]
For every fixed $\rho>0$, with probability at least $1-\delta_0$,
\begin{equation}
\mathbb E_{X\sim\nu}[w_N(X)]
\leq
\rho+\frac{B d}N
+4\sqrt{\frac{\gamma d}N}
+B\sqrt{\frac{2\log(1/\delta_0)}N},
\qquad
d=d_{\rm E}(\mathcal F,\rho).
\label{eq:general-population-eluder-width}
\end{equation}
\end{lemma}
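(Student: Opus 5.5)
The plan is to reduce the population width under the full sample to an average of \emph{prefix} widths, turn that average into a realized sum along the sample path by a martingale argument, and bound the realized sum with the Russo--Van Roy eluder counting lemma. For $0\leq n\leq N$, let $w_n(x)$ be the width defined like $w_N$ but with the constraint $\sum_{q=1}^n(f(X_q)-f'(X_q))^2\leq\gamma$ over the first $n$ points only; in particular $w_0(x)=\sup_{f,f'\in\mathcal F}|f(x)-f'(x)|\leq B$.

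\emph{Step 1 (monotonicity).} Adding data points only shrinks the feasible set of pairs $(f,f')$. Hence $w_N\leq w_n\leq B$ pointwise for every $n\leq N$. Averaging over $n=0,\ldots,N-1$ gives
\[
\mathbb E_{X\sim\nu}[w_N(X)]\leq\frac1N\sum_{n=0}^{N-1}\mathbb E_{X\sim\nu}[w_n(X)].
\]

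\emph{Step 2 (martingale).} Let $\mathscr F_n=\sigma(X_1,\ldots,X_n)$. Because $X_{n+1}\sim\nu$ is independent of $\mathscr F_n$, we have $\mathbb E_{X\sim\nu}[w_n(X)]=\mathbb E[w_n(X_{n+1})\mid\mathscr F_n]$. The differences $\mathbb E[w_n(X_{n+1})\mid\mathscr F_n]-w_n(X_{n+1})$ are martingale differences with values in $[-B,B]$; more precisely, each is a centred variable lying in an interval of length $B$. Azuma--Hoeffding then gives, with probability at least $1-\delta_0$,
\[
\sum_{n=0}^{N-1}\mathbb E_{\nu}[w_n]\leq\sum_{n=0}^{N-1}w_n(X_{n+1})+B\sqrt{2N\log(1/\delta_0)}.
\]
Dividing by $N$ produces the last term of \eqref{eq:general-population-eluder-width}.

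\emph{Step 3 (eluder counting).} It remains to show deterministically that
\[
\sum_{n=0}^{N-1}w_n(X_{n+1})\leq\rho N+Bd+4\sqrt{\gamma dN},\qquad d=d_{\rm E}(\mathcal F,\rho).
\]
This is the standard sequential width lemma of \citet{russo2013eluder}, and I expect it to be the main technical step. The sub-steps are as follows.
\begin{itemize}
\item[(a)] Fix $\varepsilon>\rho$ and suppose $w_n(X_{n+1})>\varepsilon$, witnessed by a feasible pair $(f,f')$. If $X_{n+1}$ were $\varepsilon$-dependent on $K$ disjoint subsequences of the prefix, feasibility forces $K\varepsilon^2<\gamma$.
\item[(b)] Running the greedy subsequence-building argument (as in the proof of Lemma~\ref{lem:general-stable-bonus}), the number of indices with $w_n(X_{n+1})>\varepsilon$ is at most $(\gamma/\varepsilon^2+1)\,d_{\rm E}(\mathcal F,\varepsilon)\leq(\gamma/\varepsilon^2+1)d$. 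The last inequality uses that $d_{\rm E}$ is nonincreasing in the scale, which follows from the ``$\rho'\geq\rho$'' clause of Definition~\ref{def:general-eluder-dimension}.
\item[(c)] Sort the widths decreasingly as $w_{(1)}\geq w_{(2)}\geq\cdots$. Applying (b) at $\varepsilon=w_{(t)}$ shows that whenever $w_{(t)}>\rho$ and $t>d$, we have $w_{(t)}\leq\sqrt{\gamma d/(t-d)}$.
\item[(d)] Bound the sum in three parts. The first $d$ terms contribute at most $B$ each, giving $Bd$. The terms with $w_{(t)}\leq\rho$ contribute at most $\rho N$. The remaining terms contribute at most $\sum_{t\leq N}\sqrt{\gamma d/t}\leq2\sqrt{\gamma dN}$.
\end{itemize}
The factor $4$ in the statement leaves slack for the strict-versus-weak inequalities in the dependence definition.

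Combining Steps 1--3 and dividing by $N$ yields \eqref{eq:general-population-eluder-width}. The only care needed elsewhere is that $\rho$ and $\gamma$ are fixed before sampling, so that $w_n$ is $\mathscr F_n$-measurable, as Step 2 requires.
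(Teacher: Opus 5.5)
Your proposal is correct and follows the paper's proof essentially step for step: pointwise monotonicity $w_N\le w_n$, independence plus Azuma--Hoeffding to pass to the realized prefix widths $w_n(X_{n+1})$, and the greedy eluder count $\#\{n: w_n(X_{n+1})>t\}\le d(1+\gamma/t^2)$ for $t\ge\rho$. The only difference is how the count becomes a sum: you sort the widths (Russo--Van Roy style), while the paper integrates the count over $t\in[\rho,B]$; both give $N\rho+Bd+2\sqrt{\gamma dN}$. One small fix in your sorting step: apply (b) at $\varepsilon$ slightly below $w_{(t)}$ and let $\varepsilon\uparrow w_{(t)}$, because applying it exactly at $\varepsilon=w_{(t)}$ counts only the strictly larger widths and gives no information.
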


\begin{proof}
Set $Z_{n-1}=\{X_1,\ldots,X_{n-1}\}$ and define $w_{n-1}$ by the same pairwise constraint
on $Z_{n-1}$. We adapt the counting argument of
\citet[Proposition~3 and Lemma~2]{russo2013eluder} to the fixed cutoff $\rho$.
For $t\geq\rho$, greedily partition the points with $w_{n-1}(X_n)>t$ into
$t$-independent subsequences. At each such point there is a pair $(f,f')$ with
$|f(X_n)-f'(X_n)|>t$ and $\|f-f'\|_{Z_{n-1}}^2\leq\gamma$.
If this point were $t$-dependent on $m$ disjoint preceding subsequences, the same pair
would satisfy
\[
\gamma\geq\|f-f'\|_{Z_{n-1}}^2
>mt^2.
\]
Thus $\lfloor\gamma/t^2\rfloor+1$ subsequences suffice, and each contains at most
$d_{\rm E}(\mathcal F,t)\leq d$ points. Consequently,
\begin{equation}
\sum_{n=1}^N\mathbf 1\{w_{n-1}(X_n)>t\}
\leq d\left(1+\frac{\gamma}{t^2}\right),
\qquad t\geq\rho.
\label{eq:general-fixed-cutoff-width-count}
\end{equation}
If $\rho\geq B$, the desired width bound is immediate. Otherwise, integrating the count gives
\begin{align*}
\sum_{n=1}^N w_{n-1}(X_n)
&\leq N\rho+\int_\rho^B
\min\left\{N,d+\frac{\gamma d}{t^2}\right\}\,dt
\\
&\leq N\rho+Bd+
\int_0^\infty\min\left\{N,\frac{\gamma d}{t^2}\right\}\,dt
\leq N\rho+Bd+2\sqrt{\gamma dN}.
\end{align*}
Adding observations only shrinks the feasible pair set, so
$w_N\leq w_{n-1}$ pointwise.  Independence therefore gives, for every
realized batch,
\[
N\mathbb E_{X\sim\nu}[w_N(X)]
\leq
\sum_{n=1}^N
\mathbb E[w_{n-1}(X_n)\mid X_1,\ldots,X_{n-1}].
\]
Each conditional expectation differs from $w_{n-1}(X_n)$ by a martingale
difference bounded in magnitude by $B$.  Azuma-Hoeffding and the preceding
width sum prove \eqref{eq:general-population-eluder-width}.
\end{proof}

\begin{lemma}[Sequential predictable width]
\label{lem:general-sequential-width}
Let $\mathcal F\subseteq(\mathcal X\to[0,B])$, let $X_k$ be generated
sequentially, and let $\mathscr F_{k-1}$ contain the history before $X_k$
is drawn.  Suppose $b_k$ is $\mathscr F_{k-1}$-measurable,
$0\leq b_k\leq B$, and
\[
b_k(x)
\leq
\sup\left\{
|f(x)-f'(x)|:
f,f'\in\mathcal F,\quad
\sum_{\ell<k}(f(X_\ell)-f'(X_\ell))^2\leq\gamma
\right\}.
\]
For every fixed $\rho>0$, with probability at least $1-\delta_0$,
\begin{equation}
\sum_{k=1}^{K-1}
\mathbb E[b_k(X_k)\mid\mathscr F_{k-1}]
\leq
C\left[
K\rho+B d
+\sqrt{\gamma dK}
+B\sqrt{K\log(1/\delta_0)}
\right],
\quad
d=d_{\rm E}(\mathcal F,\rho).
\label{eq:general-sequential-width}
\end{equation}
\end{lemma}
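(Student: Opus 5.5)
The bound has two parts. First we control the realized sum $\sum_k b_k(X_k)$ by eluder counting at the fixed cutoff $\rho$, exactly as in Lemma~\ref{lem:general-population-eluder-width}. Then we pass from realized values to conditional expectations by a martingale concentration step. The only change from the i.i.d.\ population lemma is that no monotonicity or independence trick is needed. Since $b_k$ is $\mathscr F_{k-1}$-measurable and $X_k$ is drawn after $\mathscr F_{k-1}$, the difference $D_k\triangleq\mathbb E[b_k(X_k)\mid\mathscr F_{k-1}]-b_k(X_k)$ is directly a martingale difference with $|D_k|\leq B$.

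\emph{Realized width sum.} Write $w_{k-1}(x)$ for the width on the right-hand side of the hypothesis, defined through the prefix $Z_{k-1}=\{X_1,\ldots,X_{k-1}\}$. Then $b_k(X_k)\leq\min\{B,w_{k-1}(X_k)\}$.

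Fix $t\geq\rho$ and greedily partition the indices with $w_{k-1}(X_k)>t$ into subsequences, each $t$-independent of its own predecessors. Suppose such a point is $t$-dependent on $m$ disjoint earlier subsequences. The witnessing pair $(f,f')$ has $|f(X_k)-f'(X_k)|>t$, so by dependence it must differ by more than $t$ in empirical norm on each of those subsequences. Hence $\gamma\geq\|f-f'\|_{Z_{k-1}}^2>mt^2$. Consequently at most $\lfloor\gamma/t^2\rfloor+1$ subsequences are ever opened. Each has length at most $d_{\rm E}(\mathcal F,t)\leq d_{\rm E}(\mathcal F,\rho)=d$, using monotonicity of the scale-dependent eluder dimension from Definition~\ref{def:general-eluder-dimension}, where a common $\rho'\geq\rho$ is allowed. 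This reproduces the count \eqref{eq:general-fixed-cutoff-width-count}.

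Integrating over $t\in[\rho,B]$ and using $b_k\leq B$ gives
\[
\sum_{k=1}^{K-1}b_k(X_k)\leq K\rho+Bd+2\sqrt{\gamma dK}.
\]
If $\rho\geq B$, the bound is trivial because $b_k\leq B\leq\rho$.

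\emph{Concentration.} Apply Azuma--Hoeffding to $\sum_{k<K}D_k$, which gives $\sum_k D_k\leq B\sqrt{2K\log(1/\delta_0)}$ with probability at least $1-\delta_0$. Adding this to the realized-sum bound yields \eqref{eq:general-sequential-width}, with all universal constants absorbed into $C$.

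The main point requiring care is the counting step. The constraint set for $w_{k-1}$ uses the full prefix, while the dependence argument uses only the subsequence points. This works because the subsequences are disjoint subsets of the prefix, so their squared norms add up to at most $\|f-f'\|_{Z_{k-1}}^2\leq\gamma$. A second check is that $b_k$ may depend on a random radius or on later-stage data only through $\mathscr F_{k-1}$, which the measurability hypothesis guarantees. In applications, $b_k$ will be the stable bonus built from the preceding data, dominated through the left inclusion of Lemma~\ref{lem:general-stable-bonus}.
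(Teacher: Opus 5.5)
Your proof is correct and follows the paper's argument essentially verbatim. Both bound the realized sum $\sum_k b_k(X_k)$ by integrating the deterministic fixed-cutoff eluder count \eqref{eq:general-fixed-cutoff-width-count} over $t\in[\rho,B]$, and then apply Azuma--Hoeffding to the predictable differences $\mathbb E[b_k(X_k)\mid\mathscr F_{k-1}]-b_k(X_k)$.

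Your closing remark about applications is off, although it does not affect the lemma itself. The stable bonus is dominated by a width through the \emph{upper} inequality $b\leq w(\mathcal C_Z(\overline f,34\alpha),\cdot)$ of Lemma~\ref{lem:general-stable-bonus}, not the left one. Also, the off-policy bonus uses the current sample, so it is not $\mathscr F_{k-1}$-measurable. For that reason the paper applies this lemma to the predictable envelope of Lemma~\ref{lem:general-post-data-width-envelope} (radius $136\alpha$ on $Z_h^{k-1}$), not to the bonus itself.
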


\begin{proof}
The deterministic count \eqref{eq:general-fixed-cutoff-width-count} applies to the
pairwise widths dominating $b_k$, regardless of how the points $X_k$ are selected.
Integrating it as in the preceding proof gives
\[
\sum_{k=1}^{K-1}b_k(X_k)
\leq
K\rho+Bd+2\sqrt{\gamma dK}.
\]
By predictability,
$\mathbb E[b_k(X_k)\mid\mathscr F_{k-1}]-b_k(X_k)$ is a martingale difference bounded by $B$.
Azuma-Hoeffding proves \eqref{eq:general-sequential-width}.  This
conditional-expectation bound is the form needed by the off-policy oracle.
\end{proof}

\begin{lemma}[Predictable envelope for a post-data width]
\label{lem:general-post-data-width-envelope}
Let $Z_k=Z_{k-1}\cup\{X_k\}$ be multisets in $\mathcal X$, let
$\overline f_k\in\mathcal F$, and let $\gamma\geq0$.  Then, for every $x\in\mathcal X$,
\begin{equation}
w\bigl(\mathcal C_{Z_k}(\overline f_k,\gamma),x\bigr)
\leq
\sup_{\substack{f,f'\in\mathcal F\\
\|f-f'\|_{Z_{k-1}}^2\leq4\gamma}}
|f(x)-f'(x)|.
\label{eq:general-post-data-width-envelope}
\end{equation}
In particular, if $Z_{k-1}$ is known before $X_k$ is drawn and $\gamma$ is deterministic, the
right-hand side is predictable even when $\overline f_k$ depends on $X_k$.
\end{lemma}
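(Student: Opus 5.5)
The plan is to show that any pair in the post-data confidence set already satisfies the enlarged constraint on the preceding data. Fix $x\in\mathcal X$ and any pair $f,f'\in\mathcal C_{Z_k}(\overline f_k,\gamma)$, so that
\[
\|f-\overline f_k\|_{Z_k}^2\leq\gamma
\qquad\text{and}\qquad
\|f'-\overline f_k\|_{Z_k}^2\leq\gamma .
\]
If we prove $\|f-f'\|_{Z_{k-1}}^2\leq4\gamma$, then $(f,f')$ is admissible in the supremum on the right of \eqref{eq:general-post-data-width-envelope}. Hence $|f(x)-f'(x)|$ is bounded by that supremum, and taking the supremum over such pairs gives the claim. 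No properties of $\mathcal F$ beyond being a set of real functions are needed.

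The key step has two parts. First, since $Z_k=Z_{k-1}\cup\{X_k\}$ as multisets, the empirical seminorm is monotone under adding points: $\|g\|_{Z_{k-1}}^2\leq\|g\|_{Z_k}^2$ for every $g$, because the extra term $g(X_k)^2$ is nonnegative. This holds equally if weights are nonnegative. Second, $\|\cdot\|_{Z}$ is a seminorm (an $\ell_2$ norm of the evaluation vector), so the triangle inequality gives
\[
\|f-f'\|_{Z_{k-1}}\leq\|f-\overline f_k\|_{Z_{k-1}}+\|\overline f_k-f'\|_{Z_{k-1}}\leq2\sqrt\gamma .
\]
Squaring yields $\|f-f'\|_{Z_{k-1}}^2\leq4\gamma$. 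If the confidence set is empty, the width is $0$ or $-\infty$ by convention, and the bound is trivial because the right-hand side is nonnegative: the pair $f=f'$ is always admissible whenever $\mathcal F$ is nonempty.

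For the predictability remark, the right-hand side depends only on $\mathcal F$, $Z_{k-1}$, $\gamma$ and $x$, and not on $\overline f_k$ or $X_k$. So it is $\mathscr F_{k-1}$-measurable whenever $Z_{k-1}$ is and $\gamma$ is deterministic. This is exactly the form needed to feed Lemma~\ref{lem:general-sequential-width} with radius $4\gamma$.

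There is essentially no obstacle. The only point needing care is that the center $\overline f_k$ is eliminated through the triangle inequality rather than bounded directly, which is why the radius inflates by the factor $4$.
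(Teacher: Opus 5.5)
Your proof is correct and is essentially the paper's argument. The paper applies the triangle inequality in $\|\cdot\|_{Z_k}$ to get $\|f-f'\|_{Z_k}\le 2\sqrt{\gamma}$ and then drops the term at $X_k$; you drop that term first and apply the triangle inequality in $\|\cdot\|_{Z_{k-1}}$, which is the same two steps in the opposite order. Your empty-set caveat is unnecessary, since $\overline f_k\in\mathcal F$ always lies in $\mathcal C_{Z_k}(\overline f_k,\gamma)$.
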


\begin{proof}
For any $f,f'\in\mathcal C_{Z_k}(\overline f_k,\gamma)$, the triangle inequality for the empirical
seminorm gives
\[
\|f-f'\|_{Z_k}
\leq
\|f-\overline f_k\|_{Z_k}
+\|f'-\overline f_k\|_{Z_k}
\leq2\sqrt{\gamma}.
\]
Restricting the empirical norm from $Z_k$ to $Z_{k-1}$ can only decrease it, so the pair
$(f,f')$ is feasible on the right-hand side of
\eqref{eq:general-post-data-width-envelope}.  Taking the supremum proves the claim.
\end{proof}

\subsection{On-policy oracle certificate and tuning}
\label{app:general-on-policy}
Throughout these certificates, take $\lambda_{\max}\geq1$, as satisfied by
the prescribed choice $2H\log(e|\cA|)/\xi$. Write
$S=1+I\lambda_{\max}+\tau\log(e|\cA|)$ and use the fixed scales
$r_\epsilon,q_\epsilon,\nu_\epsilon$ of
\eqref{eq:general-accuracy-resolutions}.

\begin{proposition}[Non-split general on-policy oracle]
\label{prop:general-on-policy-ope}
Under Assumption~\ref{ass:general-bellman-closeness}, choose the radii and stable-bonus budgets as
below. If $N\geq d_0d_{\rm E}^2\mathfrak h_{\mathcal X}$,
$0<\tau\leq1$, $0<\eta\tau\leq1/4$, and
$\eta H(1+I\lambda_{\max}+\tau\log(e|\cA|))\leq1/4$, then, with probability at least
$1-\delta$, Algorithm~\ref{alg:general-ope} satisfies Assumption~\ref{ass:master-ope} and
\begin{equation}
\begin{aligned}
\sup_{k<K}\beta_k
&\leq
\widetilde O\left(
SH^2
\sqrt{\frac{d_{\rm E}^3\mathfrak h_{\mathcal X}}{N}}
\right)+CSHr_\epsilon,
\\
\sup_{k<K}C_{\eta,\tau,\Lambda,k}
&=
\widetilde O\left(
H^2\bigl(1+I\lambda_{\max}+\tau\log|\cA|\bigr)^2
\right).
\end{aligned}
\label{eq:general-on-policy-certificate}
\end{equation}
\end{proposition}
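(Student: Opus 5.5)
We mirror the on-policy branch of Lemma~\ref{lem:explicit-linear-ope-certificates}, replacing the elliptical machinery with the stable-bonus and eluder lemmas. First condition on $\mathcal F_{k-1}$, so $\pi^k,\lambda_k$ are fixed and the $N$ fresh trajectories are conditionally independent. For each $(j,h)$ take the value class $\mathcal V_{j,h+1}(\pi_{h+1}^k;N,\delta_0,\alpha_{j,h+1})$ of \eqref{eq:general-fixed-policy-continuation-class}. It is fixed given $\mathcal F_{k-1}$ and contains $V_{j,h+1}^k$, because the oracle's continuation is exactly $V^{\pi^k,\widehat f,b}_{j,h+1}$ with $b$ in the deterministic family of Lemma~\ref{lem:general-stable-bonus}. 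By Assumption~\ref{ass:general-bellman-closeness} the targets $f^\star_{j,k,h}$ lie in $\mathcal F_{j,h}$, so Lemma~\ref{lem:general-square-loss-confidence} (conditional version, resolution $\nu_\epsilon/2$) together with Lemma~\ref{lem:general-continuation-complexity} yields, simultaneously for all $(k,h,j)$ after a union bound,
\[
\|\widehat f_{j,h}^k-f^\star_{j,k,h}\|^2_{Z_h^k}\le\alpha_{j,h}\triangleq C(B_{j,h}\vee1)^2\Gamma_{\rm on}+CNq_\epsilon^2 .
\]
Hence $f^\star_{j,k,h}\in\mathcal C_{Z}(\widehat f^k_{j,h},\alpha_{j,h})$. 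On the compression event of Lemma~\ref{lem:general-stable-bonus} (failure $\delta_0/(16N)$, union-bounded over $(k,h,j)$), the sandwich \eqref{eq:general-stable-width-sandwich} gives $|\widehat f^k_{j,h}-f^\star_{j,k,h}|\le b^k_{j,h}$ pointwise.

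Optimism and the telescoping bound then follow exactly as in \eqref{eq:explicit-on-policy-component-optimism}--\eqref{eq:explicit-on-policy-component-consistency}. Truncation is harmless because $f^\star\in[0,B_{j,h}]$, the entropy immediate cost cancels, and $\lambda_k\ge0,\tau>0$ allow summing the components. This gives $0\le V_j^k-V_j^{\pi^k}\le 2\sum_h\mathbb E_{d_h^{\pi^k}}[b^k_{j,h}]$. Boundedness of $V_{u_i}^k$ and the local-moment bound are exactly Lemma~\ref{lem:general-common-boundedness}, which gives the $C_{\eta,\tau,\Lambda,k}$ claim.

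The remaining step, and the main obstacle, is bounding the expected bonus under $d_h^{\pi^k}$. By the upper side of the sandwich and Lemma~\ref{lem:general-post-data-width-envelope}-type triangle reasoning, $b^k_{j,h}\le w_N$ with $\gamma=4\cdot34\,\alpha_{j,h}$. The stage-$h$ points are i.i.d.\ from $d_h^{\pi^k}$ given $\mathcal F_{k-1}$. The delicate point is that $b$ depends on the same data, but $w_N$ is defined by a data-only constraint not involving $\widehat f$, so Lemma~\ref{lem:general-population-eluder-width} applies directly with $\rho=r_\epsilon$ and $d=d_0$. This gives
\[
\mathbb E[b^k_{j,h}]\lesssim r_\epsilon+\frac{B d_0}{N}+\sqrt{\frac{\alpha_{j,h}d_0}{N}}+B\sqrt{\frac{\log(1/\delta_0)}{N}} .
\]
Plugging in $\alpha_{j,h}$, the term $\sqrt{\Gamma_{\rm on}d_0/N}\cdot B$ gives $\widetilde O(H\sqrt{d_{\rm E}^3\mathfrak h_{\mathcal X}/N})$ using $\Gamma_{\rm on}=\widetilde O(d_{\rm E}^2\mathfrak h_{\mathcal X})$ and $d_0\le d_{\rm E}$. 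The term $\sqrt{Nq_\epsilon^2d_0/N}=q_\epsilon\sqrt{d_0}=r_\epsilon$. The term $Bd_0/N$ is dominated thanks to $N\ge d_0d_{\rm E}^2\mathfrak h_{\mathcal X}$. Finally, weight the components by $1,\lambda_{k,i},\tau$, whose total is at most $S$ with the $\log(e|\cA|)$ range of the entropy class absorbed. Summing over $h$ then yields $\beta_k\le\widetilde O(SH^2\sqrt{d_{\rm E}^3\mathfrak h_{\mathcal X}/N})+CSHr_\epsilon$, and the utility deviation is bounded by the same quantity. Allocating $\delta/2$ to confidence and $\delta/2$ to compression and width events completes the proof.
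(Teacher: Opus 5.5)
Your proposal is correct and follows the paper's proof essentially step for step. It uses the conditionally fixed value classes with Lemma~\ref{lem:general-square-loss-confidence} and Lemma~\ref{lem:general-continuation-complexity} for the confidence ball, the lower stable-width sandwich plus truncation for optimism, the upper sandwich with the $136\alpha_{j,h}$ pairwise envelope and Lemma~\ref{lem:general-population-eluder-width} at cutoff $r_\epsilon$ to bound the bonus under $d_h^{\pi^k}$, and Lemma~\ref{lem:general-common-boundedness} for boundedness.
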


Here and below, $\widetilde O$ hides universal powers of the confidence, contraction, and cover-scale
logarithms.  In particular, all polynomial dependence on $d_{\rm E}$,
$\mathfrak h_{\mathcal X}$, and $d_{\rm a}$ is displayed.  Fix $\epsilon,\delta\in(0,1)$ and choose
\begin{equation}
\begin{aligned}
\lambda_{\max}
&=\frac{2H\log(e|\cA|)}{\xi},
\\
\tau
&=\Theta\left(
\min\left\{1,
\max\left\{
\frac{\xi\epsilon}{H\log(e|\cA|)},
\frac{H^{2/3}\xi^{2/3}\epsilon_{\rm act}^{1/3}}
{\log^{2/3}(e|\cA|)}
\right\}
\right\}
\right),
\\
\eta
&=\widetilde\Theta\left(\frac{\tau\xi^2\epsilon^2}{I^2H^8}\right),
\qquad
K=\widetilde\Theta\left(\frac{I^2H^8}{\tau^2\xi^2\epsilon^2}\right),
\\
N
&=\widetilde\Theta\left(
\frac{d_{\rm E}^3\mathfrak h_{\mathcal X}I^4H^{14}}
{\tau^2\xi^4\epsilon^4}
\right).
\end{aligned}
\label{eq:general-on-policy-parameters}
\end{equation}

These choices yield the on-policy part of Theorem~\ref{thm:general-last-iterate}.

\begin{proof}[Proof of Proposition~\ref{prop:general-on-policy-ope}]
We verify optimism, consistency, and boundedness in that order.  Allocate
$\delta_0=\delta/[C K H(I+2)]$ to each component, stage, and iteration,
set $\delta_{j,h}=\delta_0$, take $M=N$, and choose
\begin{equation}
\alpha_{j,h}
=C\left[(B_{j,h}\vee1)^2\Gamma_{\rm on}+Nq_\epsilon^2\right]
\qquad B_{j,h}>0,
\label{eq:general-on-policy-radii-proof}
\end{equation}
where $\Gamma_{\rm on}$ is the value-function-cover complexity in
Lemma~\ref{lem:general-continuation-complexity}. A sufficiently large universal $C$ makes these
radii satisfy both the stable-bonus and regression-confidence requirements.
For $B_{j,h}=0$, set $\alpha_{j,h}=b_{j,h}^k=0$.

For $h\in[H]$, define the conditional value function class
\begin{equation*}
\mathcal V_{k,j,h}^{\rm on}
\triangleq
\mathcal V_{j,h}(\pi_h^k;N,\delta_0,\alpha_{j,h}),
\qquad
\mathcal V_{k,j,H+1}^{\rm on}\triangleq\{0\}.
\end{equation*}
By construction of Algorithm~\ref{alg:general-ope},
$V_{j,h}^k\in\mathcal V_{k,j,h}^{\rm on}$ for every $h\in[H+1]$.

Fix $k$ and condition on the history before its batch is collected.  The
policy $\pi^k$ is fixed and the $N$ trajectories are independent, but the
fitted value function $V_{j,h+1}^k$ still depends on this batch. Let
$Z_h^k\triangleq\{(s_h^{k,n},a_h^{k,n}):n\in[N]\}$.
Applying
Lemma~\ref{lem:general-square-loss-confidence} conditionally to
$\mathcal V_{k,j,h+1}^{\rm on}$ at resolution $\nu_\epsilon/2$
and using Lemma~\ref{lem:general-continuation-complexity}, backward induction gives,
simultaneously for every component and stage,
\begin{equation}
\left\|
\widehat f_{j,h}^k-f_{j,k,h}^\star
\right\|_{Z_h^k}^2
\leq
\alpha_{j,h}.
\label{eq:general-target-in-confidence-ball}
\end{equation}
The uniform value function cover is what permits every stage to use the same
batch without sample splitting.

On the joint confidence and stable-bonus event, both
$\widehat f_{j,h}^k$ and $f_{j,k,h}^\star$ belong to the inner full-data
confidence ball.  The lower width sandwich gives
$|\widehat f_{j,h}^k-f_{j,k,h}^\star|\leq b_{j,h}^k$ pointwise.
Truncation preserves this optimism because the target lies in the
truncation interval:
\begin{equation}
0
\leq
\overline Q_{j,h}^k(s,a)-f_{j,k,h}^\star(s,a)
\leq
2b_{j,h}^k(s,a).
\label{eq:general-component-optimism}
\end{equation}
For the entropy component, the known immediate cost $\psi_{k,h}$ is
included exactly.  Scalarizing with the nonnegative weights
$(1,\lambda_k,\tau)$ therefore proves
Assumption~\ref{ass:master-ope}(i).

For consistency, the upper width sandwich and the empirical-norm triangle inequality give
\begin{equation}
b_{j,h}^k(x)
\leq
\sup_{\substack{f,f'\in\mathcal F_{j,h}\\
\|f-f'\|_{Z_h^k}^2
\leq136\alpha_{j,h}}}
|f(x)-f'(x)|.
\label{eq:general-on-policy-pairwise-width}
\end{equation}
Applying Lemma~\ref{lem:general-population-eluder-width} at cutoff
$r_\epsilon$ conditionally under the occupancy $d_h^{\pi^k}$ yields, for reward and utility components,
\begin{equation*}
\mathbb E_{d_h^{\pi^k}}[b_{j,h}^k]
\leq
C H\left[
\sqrt{\frac{d_0\Gamma_{\rm on}}N}
+\frac{d_0}N
+\sqrt{\frac{\log(1/\delta_0)}N}
\right]+C r_\epsilon.
\end{equation*}
The additional radius contributes $Cq_\epsilon\sqrt{d_0}=Cr_\epsilon$;
also $d_0\leq d_{\rm E}$.
The entropy-value function bound can be enlarged by $\log(e|\cA|)$;
at $h=H$ this value function and its bonus are zero.
Telescoping the Bellman residuals in
\eqref{eq:general-component-optimism} now gives
\[
0\leq V_{j,1}^k(s_1)-V_j^{\pi^k}
\leq
2\sum_{h=1}^H
\mathbb E_{d_h^{\pi^k}}[b_{j,h}^k],
\]
where $V_j^{\pi^k}$ denotes $V_{\psi_k}^{\pi^k}$ for $j=\psi$.
Consequently, a common tolerance for both consistency requirements is
\[
\beta_k
=
2\sum_{h=1}^H\left[
\mathbb E_{d_h^{\pi^k}}b_{r,h}^k
+\max_i\mathbb E_{d_h^{\pi^k}}b_{u_i,h}^k
+\sum_{i=1}^I\lambda_{k,i}\mathbb E_{d_h^{\pi^k}}b_{u_i,h}^k
+\tau\mathbb E_{d_h^{\pi^k}}b_{\psi,h}^k
\right].
\]
Since \eqref{eq:general-continuation-complexity-orders} gives
$\Gamma_{\rm on}=\widetilde O(d_{\rm E}^2\mathfrak h_{\mathcal X})$ and
$N\geq d_0d_{\rm E}^2\mathfrak h_{\mathcal X}$,
\[
\beta_k
\leq
\widetilde O\left(
SH^2\sqrt{\frac{d_0d_{\rm E}^2\mathfrak h_{\mathcal X}}{N}}
\right)+CSHr_\epsilon.
\]
Finally, Lemma~\ref{lem:general-common-boundedness} gives the utility and local-moment bounds.
Since $d_0\leq d_{\rm E}$ and $0<\tau\leq1$,
$1+I\lambda_{\max}+\tau\log(e|\cA|)
\leq2(1+I\lambda_{\max}+\tau\log|\cA|)$, so this bound has the form stated in
\eqref{eq:general-on-policy-certificate}. A union bound over the allocated events completes the
oracle certificate.
\end{proof}

\subsection{Off-policy oracle certificate and tuning}
\label{app:general-replay}

\begin{proposition}[Non-split general off-policy oracle]
\label{prop:general-replay-ope}
Under Assumptions~\ref{ass:master-actor-oracle} and
\ref{ass:general-bellman-closeness}, choose the radii and stable-bonus budgets as below.
If $0<\tau\leq1$, $0<\eta\tau\leq1/4$, and
$\eta H(1+I\lambda_{\max}+\tau\log(e|\cA|))\leq1/4$, then, with probability at least
$1-\delta$, Algorithm~\ref{alg:general-ope} on the cumulative off-policy dataset $\cD^k$ satisfies
Assumption~\ref{ass:master-ope} and
\begin{equation}
\begin{aligned}
\sum_{k=1}^{K-1}\beta_k
&\leq
\widetilde O\left(
SH^2
\left[
\sqrt{d_{\rm E}(d_{\rm E}^2\mathfrak h_{\mathcal X}+d_{\rm a})K}
+d_{\rm E}(d_{\rm E}^2\mathfrak h_{\mathcal X}+d_{\rm a})
\right]
\right)+CSHKr_\epsilon,
\\
\sup_{k<K}C_{\eta,\tau,\Lambda,k}
&=
\widetilde O\left(
H^2\bigl(1+I\lambda_{\max}+\tau\log|\cA|\bigr)^2
\right).
\end{aligned}
\label{eq:general-replay-certificate}
\end{equation}
\end{proposition}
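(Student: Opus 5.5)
The plan mirrors the on-policy proof of Proposition~\ref{prop:general-on-policy-ope} and the off-policy linear branch of Lemma~\ref{lem:explicit-linear-ope-certificates}. The differences are that the confidence event must be uniform over the reachable actor class, and that bonuses are controlled through predictable widths rather than population widths.

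\textbf{Allocation and radii.} Allocate $\delta_0=\delta/[CH(I+2)]$ per component and stage, take $M=K$, and set
\[
\alpha_{j,h}=C\bigl[(B_{j,h}\vee1)^2\Gamma_{\rm off}+Kq_\epsilon^2\bigr],
\]
with $\alpha_{j,h}=b_{j,h}^k=0$ for zero-range components. The bonus family $\mathcal W_{j,h}^{(K)}(\alpha_{j,h})$ is deterministic by Lemma~\ref{lem:general-stable-bonus}. By Lemma~\ref{lem:general-reachable-actor-cover}, every $\pi_h^k$ lies in the deterministic class $\Pi_h^{\rm off}$.

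\textbf{Optimism.} Define the deterministic value class $\mathcal V_{j,h}^{\rm off}=\bigcup_{\pi_h\in\Pi_h^{\rm off}}\mathcal V_{j,h}(\pi_h;K,\delta_0,\alpha_{j,h})$. It contains every $V_{j,h}^k$ regardless of data, and Lemma~\ref{lem:general-continuation-complexity} bounds its log-cover by $\Gamma_{\rm off}=\widetilde O(d_{\rm E}^2\mathfrak h_{\mathcal X}+d_{\rm a})$. Apply Lemma~\ref{lem:general-square-loss-confidence} once per $(j,h)$, uniformly over prefixes $m\le K$ and $V\in\mathcal V_{j,h+1}^{\rm off}$. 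The stage-$h$ samples form an adapted sequence with $s_{h+1}\sim P_h(\cdot\mid s_h,a_h)$, so this yields $\|\widehat f_{j,h}^k-f^\star_{j,k,h}\|_{Z_h^k}^2\le\alpha_{j,h}$ simultaneously for all $k$. The stable-bonus event must hold for each of the $K$ calls; Lemma~\ref{lem:general-stable-bonus} gives failure probability $\delta_0/(16K)$ per call, so a union over $k$ is affordable. The lower sandwich then gives $|\widehat f-f^\star|\le b$. As in \eqref{eq:general-component-optimism}, this implies componentwise optimism and, after scalarizing with the nonnegative weights $(1,\lambda_k,\tau)$, Assumption~\ref{ass:master-ope}(i). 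The same residual bound, telescoped, gives $0\le V_j^k-V_j^{\pi^k}\le 2\sum_h\mathbb E_{d_h^{\pi^k}}[b_{j,h}^k]$. Boundedness and the bound on $C_{\eta,\tau,\Lambda,k}$ follow directly from Lemma~\ref{lem:general-common-boundedness}.

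\textbf{Main obstacle: cumulative widths.} The bonus $b_{j,h}^k$ is computed from $Z_h^k$, which includes the point $X_k=(s_h^k,a_h^k)$ drawn by $\pi^k$, so it is not predictable. I would first use the upper sandwich, giving $b\le w(\mathcal C_{Z_h^k}(\widehat f,34\alpha),\cdot)$. Then Lemma~\ref{lem:general-post-data-width-envelope} bounds this by the pairwise width over $\{\|f-f'\|_{Z_h^{k-1}}^2\le136\alpha\}$, which is $\mathscr F_{k-1}$-measurable; call it $\bar b_{j,h}^k$. Because $\pi^k$ depends only on $\cD^{k-1}$, we have $\mathbb E_{d_h^{\pi^k}}[\bar b_{j,h}^k]=\mathbb E[\bar b_{j,h}^k(X_k)\mid\mathscr F_{k-1}]$. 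Lemma~\ref{lem:general-sequential-width} at cutoff $\rho=r_\epsilon$, with $\gamma=136\alpha_{j,h}$, then gives
\[
\sum_k\mathbb E_{d_h^{\pi^k}}\bar b_{j,h}^k\le C\Bigl[Kr_\epsilon+Bd_0+\sqrt{\alpha_{j,h}d_0K}+B\sqrt{K\log(1/\delta_0)}\Bigr].
\]
The term $\sqrt{Kq_\epsilon^2d_0K}=Kr_\epsilon$ is absorbed into $CKr_\epsilon$. The term $\sqrt{\Gamma_{\rm off}d_0K}$, with $d_0\le d_{\rm E}$, gives $\sqrt{d_{\rm E}(d_{\rm E}^2\mathfrak h_{\mathcal X}+d_{\rm a})K}$. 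The additive $Bd_0$ term is dominated by the displayed $d_{\rm E}(d_{\rm E}^2\mathfrak h_{\mathcal X}+d_{\rm a})$.

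\textbf{Assembly.} Define $\beta_k$ by the same weighted sum of bonus expectations used in the on-policy proof, with weights $1+\max_i+\sum_i\lambda_{k,i}+\tau$ bounded by $S$ and ranges up to $HL$. Summing over $h$ contributes a factor of $H$, which together with the range $H$ yields the factor $SH^2$. Taking a union bound over the regression, bonus, and Azuma events completes the certificate with probability at least $1-\delta$.
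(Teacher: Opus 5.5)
Your proposal is correct and follows the paper's proof essentially step for step: the deterministic value class $\mathcal V_{j,h}^{\rm off}$ over $\Pi_h^{\rm off}$, uniform square-loss confidence over all prefixes with radius $C[(B_{j,h}\vee1)^2\Gamma_{\rm off}+Kq_\epsilon^2]$, the lower width sandwich for optimism, and the upper sandwich plus the post-data envelope at radius $136\alpha_{j,h}$, which feeds the sequential width lemma at cutoff $r_\epsilon$. The one deviation is harmless: you allocate $\delta_0=\delta/[CH(I+2)]$ and use the stable-bonus lemma's built-in $1/(16K)$ factor for the union over calls, instead of the paper's $\delta/[CKH(I+2)]$, which is valid and changes only logarithmic factors.
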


Choose $\lambda_{\max}$ and $\tau$ as in
\eqref{eq:general-on-policy-parameters}, and set
\begin{equation}
\eta
=
\widetilde\Theta\left(
\frac{\tau\xi^4\epsilon^4}
{d_{\rm E}(d_{\rm E}^2\mathfrak h_{\mathcal X}+d_{\rm a})I^4H^{14}}
\right),
\qquad
K
=
\widetilde\Theta\left(\frac1{\eta\tau}\right).
\label{eq:general-replay-parameters}
\end{equation}

These choices yield the off-policy part of Theorem~\ref{thm:general-last-iterate}.
The proof below dominates the additive complexity term in
\eqref{eq:general-replay-certificate} and bounds the contribution of $Kr_\epsilon$ separately.

\begin{proof}[Proof of Proposition~\ref{prop:general-replay-ope}]
The confidence argument is uniform over both value function and the off-policy actor class; the
width argument must additionally account for the current
sample in the fitted bonus.  Take $M=K$, allocate
$\delta_0=\delta/[C K H(I+2)]$, set $\delta_{j,h}=\delta_0$, and choose
\[
\alpha_{j,h}
=C\left[(B_{j,h}\vee1)^2\Gamma_{\rm off}+Kq_\epsilon^2\right]
\qquad B_{j,h}>0,
\]
where $\Gamma_{\rm off}$ is defined in
Lemma~\ref{lem:general-continuation-complexity}; set $\alpha_{j,h}=b_{j,h}^k=0$ when
$B_{j,h}=0$. For $h\in[H]$, define the deterministic class
\begin{equation*}
\mathcal V_{j,h}^{\rm off}
\triangleq
\bigcup_{\pi_h\in\Pi_h^{\rm off}}
\mathcal V_{j,h}(\pi_h;K,\delta_0,\alpha_{j,h}),
\qquad
\mathcal V_{j,H+1}^{\rm off}\triangleq\{0\}.
\end{equation*}
Algorithm~\ref{alg:general-ope} and
Lemma~\ref{lem:general-reachable-actor-cover} give
$V_{j,h}^k\in\mathcal V_{j,h}^{\rm off}$ for every $h\in[H+1]$.
For $Z_h^k\triangleq\{(s_h^\ell,a_h^\ell):\ell\in[k]\}$, applying
Lemma~\ref{lem:general-square-loss-confidence} at resolution $\nu_\epsilon/2$ over this deterministic
class and all dataset prefixes gives, simultaneously for every $(k,h,j)$,
\begin{equation}
\|\widehat f_{j,h}^k-f_{j,k,h}^\star\|_{Z_h^k}^2
\leq\alpha_{j,h}.
\label{eq:general-off-policy-target-in-confidence-ball}
\end{equation}
Uniformity permits substituting the value function fitted on $\cD^k$, including the current
trajectory; the actor $\pi^k$ depends only on $\cD^{k-1}$. On the joint confidence
and stable-bonus event, the lower width sandwich and truncation give
\[
0\leq
\overline Q_{j,h}^k-f_{j,k,h}^\star
\leq2b_{j,h}^k.
\]
After adding the immediate entropy term exactly, scalarization proves
optimism. Bellman telescoping also gives
\[
0\leq V_{j,1}^k(s_1)-V_j^{\pi^k}
\leq2\sum_{h=1}^H\mathbb E_{d_h^{\pi^k}}[b_{j,h}^k].
\]
Accordingly, use the consistency certificate
\begin{equation}
\beta_k
=2\sum_{h=1}^H\left[
\mathbb E_{d_h^{\pi^k}}b_{r,h}^k
+\max_i\mathbb E_{d_h^{\pi^k}}b_{u_i,h}^k
+\sum_{i=1}^I\lambda_{k,i}\mathbb E_{d_h^{\pi^k}}b_{u_i,h}^k
+\tau\mathbb E_{d_h^{\pi^k}}b_{\psi,h}^k
\right].
\label{eq:general-replay-beta-definition}
\end{equation}

Let $\mathscr F_{k-1}$ contain the history before trajectory $k$ is
collected.  The stable-width sandwich gives
\[
b_{j,h}^k(x)
\leq
w\bigl(
\mathcal C_{Z_h^k}
(\widehat f_{j,h}^k,34\alpha_{j,h}),x
\bigr).
\]
Removing the current sample from the empirical norm and applying
Lemma~\ref{lem:general-post-data-width-envelope} yields the predictable
envelope
\begin{equation}
b_{j,h}^k(x)
\leq
\widetilde b_{j,h}^k(x)
\triangleq
\sup_{\substack{f,f'\in\mathcal F_{j,h}\\
\|f-f'\|_{Z_h^{k-1}}^2\leq136\alpha_{j,h}}}
|f(x)-f'(x)|.
\label{eq:general-replay-predictable-envelope}
\end{equation}
Both $\widetilde b_{j,h}^k$ and $\pi^k$ are
$\mathscr F_{k-1}$-measurable.  Consequently,
\[
\mathbb E_{d_h^{\pi^k}}[b_{j,h}^k]
\leq
\mathbb E_{d_h^{\pi^k}}[\widetilde b_{j,h}^k]
=
\mathbb E\!\left[
\widetilde b_{j,h}^k(s_h^k,a_h^k)\mid\mathscr F_{k-1}
\right].
\]
The first inequality holds for each realized fitted bonus.  Only the
predictable envelope is evaluated on the actual current trajectory in the
last equality, which avoids conditioning incorrectly on a bonus fitted
using that trajectory.

The envelope radius is $O(H^2\Gamma_{\rm off}+Kq_\epsilon^2)$ for reward and utility
components. Lemma~\ref{lem:general-sequential-width} at cutoff $r_\epsilon$, summed over stages,
therefore gives
\[
\sum_{k=1}^{K-1}\sum_{h=1}^H
\mathbb E_{d_h^{\pi^k}}[b_{j,h}^k]
\leq\widetilde O\left(
H^2\left[
\sqrt{d_0\Gamma_{\rm off}K}
+d_0
\right]
\right)+CHKr_\epsilon
\]
for reward and utility components, with one additional factor
$\log(e|\cA|)$ for entropy. Here the additional radius contributes
$CHKq_\epsilon\sqrt{d_0}=CHKr_\epsilon$.
The additive complexity term retains the initial
unresolved widths; it will be dominated only after choosing $K$.
Substituting these bounds into
\eqref{eq:general-replay-beta-definition}, and using
$\Gamma_{\rm off}
=\widetilde O(d_{\rm E}^2\mathfrak h_{\mathcal X}+d_{\rm a})$ from
\eqref{eq:general-continuation-complexity-orders} proves the first line of
\eqref{eq:general-replay-certificate}. Lemma~\ref{lem:general-common-boundedness}, together with
$0<\tau\leq1$, establishes boundedness in the stated form. A union bound over the confidence,
stable-bonus, and
sequential-width events completes the proof.
\end{proof}

\subsection{Proofs of the full last-iterate guarantees}
\label{app:general-last-iterate-proofs}

We make the logarithmic tuning explicit to avoid a sample-budget fixed point.
After choosing $\tau$ and the scales in
\eqref{eq:general-accuracy-resolutions}, set
\[
\iota_\epsilon
=\log\!\left(
e+\frac{|\cA|H(I+2)(1+d_{\rm E}+\mathfrak h_{\mathcal X}+d_{\rm a})(1+L_{\rm act})}
{\delta\xi\tau\epsilon\nu_\epsilon}
\right).
\]
All quantities in this logarithm are independent of $N$ and $K$.
For the choices below, $\log(NK)=O(\iota_\epsilon)$ and
$\log(1/\delta_0)=O(\iota_\epsilon)$, with constants depending only on the universal integer $p$.
The explicit covering and confidence bounds above involve only fixed powers of these
logarithms. Choose a universal integer $p$ larger than their total degree plus six.
Then powers of $\iota_\epsilon$ suffice for all suppressed logarithmic factors,
without evaluating a complexity measure at a budget-dependent scale.

\begin{proof}[Proof of the on-policy part of Theorem~\ref{thm:general-last-iterate}]
We first reduce the potential to its actor-approximation floor, then
choose the temperature and count trajectories.  Uniform initialization,
$\lambda_1=0$, and \eqref{eq:master-regularised-dual-radius} give
\[
\Phi_1
\leq
H\log|\cA|
+\frac{H^2(1+\log|\cA|)^2}{2\xi^2}.
\]
The geometric weights in Theorem~\ref{thm:master-recursion} sum to at
most $1/(\eta\tau)$.  Substituting
Proposition~\ref{prop:general-on-policy-ope} and
$\lambda_{\max}=2H\log(e|\cA|)/\xi$ therefore yields
\begin{equation}
\begin{aligned}
\Phi_K
\leq{}&
e^{-\eta\tau(K-1)}\Phi_1
+\frac{H\epsilon_{\rm act}}{\tau}
\\
&+\widetilde O\left(
\frac{\eta I^2H^5}{\tau\xi^2}
+\frac{I^2H^4}{\tau\xi^2}
\sqrt{\frac{d_{\rm E}^3\mathfrak h_{\mathcal X}}{N}}
\right)
\\
&+C\frac{I^2H^3L^2r_\epsilon}{\tau\xi^2}.
\end{aligned}
\label{eq:general-on-policy-unrolled-proof}
\end{equation}
The two terms in $\widetilde O$ come from the local second moment
(including $G_\tau^2$) and the statistical critic error. An explicit realization of
\eqref{eq:general-on-policy-parameters} is
\[
\begin{aligned}
\eta&=\frac{c\tau\xi^2\epsilon^2}{I^2H^8\iota_\epsilon^p},
&
K&=1+\left\lceil\frac{C L_\epsilon}{\eta\tau}\right\rceil,
\\
N&=\left\lceil
\frac{C d_{\rm E}^3\mathfrak h_{\mathcal X}I^4H^{14}\iota_\epsilon^p}
{\tau^2\xi^4\epsilon^4}
\right\rceil.
\end{aligned}
\]
Here $c$ is sufficiently small and $C$ sufficiently large. These choices satisfy the
stepsize conditions and make both statistical terms $O(\epsilon^2/H^3)$.
Since $L_\epsilon\geq\log(1+H^3\Phi_1/\epsilon^2)$, they also control the initial term.
The retained approximation error satisfies
\[
C\frac{I^2H^3L^2r_\epsilon}{\tau\xi^2}
\leq Cc_0\frac{\epsilon^2}{H^3L_\epsilon}
\leq Cc_0\frac{\epsilon^2}{H^3}
\]
by the choice of $r_\epsilon$. Consequently,
\begin{equation}
\Phi_K
\leq
\frac{C\epsilon^2}{H^3}
+\frac{H\epsilon_{\rm act}}{\tau}.
\label{eq:general-on-policy-potential-proof}
\end{equation}
Since the right-hand sides of the conversion bounds in
Theorem~\ref{thm:master-recursion} are nonnegative, applying those bounds and
$\sqrt{x+y}\leq\sqrt{x}+\sqrt{y}$ gives, in the normalized Slater
regime specified above,
\[
\max\left\{
[V_r^{\pi^\star}-V_r^{\pi^K}]_+,\,
\max_{i\in[I]}[c_i-V_{u_i}^{\pi^K}]_+
\right\}
\leq
C\left[
\epsilon+\frac{H\log(e|\cA|)}{\xi}\tau
+H^2\epsilon_{\rm act}^{1/2}\tau^{-1/2}
\right].
\]
The regularisation term is $O(\epsilon)$ at temperature
$\xi\epsilon/[H\log(e|\cA|)]$.  Balancing it with the actor term
instead gives temperature
$H^{2/3}\xi^{2/3}\epsilon_{\rm act}^{1/3}/\log^{2/3}(e|\cA|)$.
Taking the larger of these two temperatures, capped at one, gives
\eqref{eq:general-on-policy-parameters}.  When the cap is inactive,
the resulting bound is
\eqref{eq:general-final-error}.  In particular, the cap is inactive
throughout the nontrivial regime
$\epsilon_{\rm act}\leq\xi/H^2$ specified above.  Outside this
regime the stated approximation floor is already at least $H$, the
largest possible reward or constraint gap.

Each oracle call uses $N$ trajectories shared by all stages.  Multiplying
the batch size by the number of calls gives
\[
NK
=\widetilde O\left(
\frac{d_{\rm E}^3\mathfrak h_{\mathcal X}I^6H^{22}}
{\xi^6\epsilon^6\tau^4}
\right).
\]
To keep the temperature cap explicit, its reciprocal satisfies
\[
\frac1{\tau^4}
=
\widetilde O\left(
\max\left\{1,\min\left\{
\frac{H^4}{\xi^4\epsilon^4},
\frac1{H^{8/3}\xi^{8/3}
\epsilon_{\rm act}^{4/3}}
\right\}\right\}
\right),
\]
where the second entry of the minimum is interpreted as $+\infty$ when
$\epsilon_{\rm act}=0$. Substitution into the preceding display yields
\eqref{eq:general-on-policy-sample-complexity}. When the cap is inactive,
the outer maximum can be removed. Each trajectory contains $H$ transitions,
so the transition count is $HNK$.
\end{proof}

\begin{proof}[Proof of the off-policy part of Theorem~\ref{thm:general-last-iterate}]
Use $\lambda_{\max}=2H\log(e|\cA|)/\xi$. Uniform policy initialization,
$\lambda_1=0$, and \eqref{eq:master-regularised-dual-radius} give
\[
\Phi_1
\leq
H\log|\cA|+\frac{H^2(1+\log|\cA|)^2}{2\xi^2}.
\]
For the critic term, bound the geometric weights by one and apply the
cumulative certificate of Proposition~\ref{prop:general-replay-ope}.
For the second-moment and actor terms, retain the geometric sum
$1/(\eta\tau)$.  The resulting master bound is
\begin{equation}
\begin{aligned}
\Phi_K
\leq{}&
e^{-\eta\tau(K-1)}\Phi_1
+\frac{H\epsilon_{\rm act}}{\tau}
 +\widetilde O\left(\frac{\eta I^2H^5}{\tau\xi^2}\right)
\\
&+\widetilde O\left(
\frac{\eta I^2H^4}{\xi^2}
\sqrt{d_{\rm E}(d_{\rm E}^2\mathfrak h_{\mathcal X}+d_{\rm a})K}
\right)
\\
&+\widetilde O\left(
\frac{\eta I^2H^4}{\xi^2}
d_{\rm E}(d_{\rm E}^2\mathfrak h_{\mathcal X}+d_{\rm a})
\right)
\\
&+C\frac{\eta K I^2H^3L^2}{\xi^2}r_\epsilon.
\end{aligned}
\label{eq:general-replay-unrolled-proof}
\end{equation}
An explicit realization of \eqref{eq:general-replay-parameters} is
\[
\eta
=\frac{c\tau\xi^4\epsilon^4}
{d_{\rm E}(d_{\rm E}^2\mathfrak h_{\mathcal X}+d_{\rm a})I^4H^{14}\iota_\epsilon^p},
\qquad
K=1+\left\lceil\frac{C L_\epsilon}{\eta\tau}\right\rceil.
\]
These choices ensure
$K\geq d_{\rm E}(d_{\rm E}^2\mathfrak h_{\mathcal X}+d_{\rm a})$,
so the additive width term is dominated by the square-root term.
Substituting $K=\widetilde O(1/(\eta\tau))$, the latter contributes
\[
\widetilde O\left(
\frac{I^2H^4}{\xi^2}
\sqrt{\frac{\eta d_{\rm E}
(d_{\rm E}^2\mathfrak h_{\mathcal X}+d_{\rm a})}{\tau}}
\right)
=O(\epsilon^2/H^3),
\]
with the logarithmic factors included in the stepsize choice.
The second-moment term is also $O(\epsilon^2/H^3)$ under this
stepsize. The last term in \eqref{eq:general-replay-unrolled-proof} satisfies
\[
C\frac{\eta K I^2H^3L^2}{\xi^2}r_\epsilon
\leq C'\frac{L_\epsilon I^2H^3L^2}{\tau\xi^2}r_\epsilon
\leq C'c_0\frac{\epsilon^2}{H^3}.
\]
Thus retaining the compression and discretization error does not change the
required potential accuracy, and
\[
\Phi_K
\leq
\frac{C\epsilon^2}{H^3}
+\frac{H\epsilon_{\rm act}}{\tau}.
\]
Applying \eqref{eq:master-regularisation-conversion} and
$\sqrt{x+y}\leq\sqrt x+\sqrt y$ gives
\[
\max\left\{
[V_r^{\pi^\star}-V_r^{\pi^K}]_+,
\max_{i\in[I]}[c_i-V_{u_i}^{\pi^K}]_+
\right\}
\leq
C\left[
\epsilon+\frac{H\log(e|\cA|)}{\xi}\tau
+H^2\sqrt{\frac{\epsilon_{\rm act}}{\tau}}
\right].
\]
Balancing the last two terms and imposing the target-accuracy scale gives
the temperature in \eqref{eq:general-on-policy-parameters}, hence
\eqref{eq:general-final-error}.

The off-policy branch collects one trajectory per iteration and, before substituting $\tau$,
\[
K
=
\widetilde O\left(
\frac{d_{\rm E}(d_{\rm E}^2\mathfrak h_{\mathcal X}+d_{\rm a})I^4H^{14}}
{\xi^4\epsilon^4\tau^2}
\right).
\]
Keeping the temperature cap gives
\[
\frac1{\tau^2}
=
\widetilde O\left(
\max\left\{1,\min\left\{
\frac{H^2}{\xi^2\epsilon^2},
\frac1{H^{4/3}\xi^{4/3}
\epsilon_{\rm act}^{2/3}}
\right\}\right\}
\right)
\]
where the second entry of the minimum is interpreted as $+\infty$ when
$\epsilon_{\rm act}=0$. Substitution proves
\eqref{eq:general-replay-sample-complexity}; when the cap is inactive, the
outer maximum can be removed. The transition count is $HK$.
\end{proof}

\begin{remark}[Relation to the linear oracle]
Both oracle constructions fit the components backward, add optimistic
bonuses, include $\psi_{k,h}$ exactly, and scalarize after fitting.
They also share the same sampling modes: a fresh batch or the cumulative off-policy
dataset.  The distinction lies in the bonus geometry.  The linear oracle
uses a covariance matrix and an elliptical-potential argument
\citep{jin2020provably}; the general oracle uses sensitivity sampling
and rounding to obtain a controlled family of widths, followed by
eluder counting \citep{wang2020generalvalue}.

This general construction sacrifices some dimension dependence.
For bounded linear critics,
$d_{\rm E},\mathfrak h_{\mathcal X}=\widetilde O(d_{\rm c})$, so
the on-policy complexity factor $d_{\rm E}^3\mathfrak h_{\mathcal X}$
becomes $\widetilde O(d_{\rm c}^4)$, compared with
$\widetilde O(d_{\rm c}^3)$ for the tailored linear analysis.
The analogous off-policy factors are
$\widetilde O(d_{\rm c}^4+d_{\rm c}d_{\rm a})$ and
$\widetilde O(d_{\rm c}^3+d_{\rm c}d_{\rm a})$, respectively.
The same loss under linear specialization is discussed by
\citet[Theorem~1 and Remarks~2-3]{wang2020generalvalue}.
Only the bonus representation is compressed; every regression continues
to use all supplied data.
\end{remark}

\section{Additional Experimental Details}
\label{app:experiment-details}
The synthetic experiment evaluates the online linear CMDP
algorithm under both fresh on-policy and cumulative off-policy optimistic
evaluation. 
\paragraph{Linear CMDP.}
We use a horizon-$4$ CMDP with one utility constraint, six actions, and
$d_{\rm c}=d_{\rm a}=6$. There are ten nonterminal states: one initial
state and three observable clones of each of three outcome groups
$\{\mathrm{bad},\mathrm{neutral},\mathrm{good}\}$; an additional terminal
state is used only for bookkeeping. The known critic and actor features
coincide, $\phi=\varphi$, but the transition, reward, and utility parameters
are hidden from the learner.

For completeness, let $\mathbf e_j$ be the $j$th coordinate vector,
let $g(s)\in\{0,1,2\}$ index the group of a noninitial state, and let
$\oplus$ denote cyclic addition over the six feature coordinates. With
$(\rho,\gamma)=(6.5,1.5)\times10^{-1}$, the dense simplex features are
\[
\phi(s_1,a)=\frac{1-\rho}{6}\mathbf 1+\rho\mathbf e_a,
\qquad
\phi(s,a)=\frac{1-\rho-\gamma}{6}\mathbf 1
 +\rho\mathbf e_{a\oplus2g(s)}+\gamma\mathbf e_{2g(s)+1}.
\]
The six transition anchors distribute the next state over the three
groups according to
\[
M=10^{-1}
\begin{pmatrix}
9.0&0.8&0.2\\
0.2&0.8&9.0\\
1.2&8.0&0.8\\
0.8&2.5&6.7\\
6.7&2.5&0.8\\
1.8&2.0&6.2
\end{pmatrix}.
\]
We use $M$ and its two cyclic column shifts at the first three stages and
sample uniformly among the three clones of the selected group; the fourth
stage transitions to the terminal state. Reward and utility means are
linear in $\phi$ with parameters
\[
\upsilon_r
 =10^{-1}(9.5,0.5,8.0,2.5,6.5,1.0)^\top,
\qquad
\upsilon_u
 =10^{-1}(0.5,9.5,2.5,8.5,4.5,9.0)^\top.
\]
Conditional rewards and utilities are independent Bernoulli observations
with these means. Thus, learning is genuinely online and stochastic even
though the finite hidden model permits exact post-hoc evaluation.

\paragraph{Benchmark and plotted quantities.}
The lower-bound constraint is
$c=2.14$. Solving the occupancy-measure linear programme gives
$V_r^{\pi^\star}\approx2.2$ and
$V_u^{\pi^\star}\approx2.1$, with the utility constraint active.
For every current policy, we use the hidden model only to report
\[
\Delta_c^k=c-V_u^{\pi^k},
\qquad
\Delta_r^k=V_r^{\pi^\star}-V_r^{\pi^k}.
\]
Hence $\Delta_c^k\leq0$ denotes feasibility. The reward gap is signed:
an infeasible policy may have $\Delta_r^k<0$ because it can exceed the
reward of the constrained optimum.

\paragraph{Compared methods.}
Both methods use the online primal-dual loop of Algorithm~\ref{alg:master-rpgpd}
and the componentwise optimistic ridge evaluation of Algorithm~\ref{alg:explicit-linear-ope}.
The saved runs use a fixed-dimensional log-linear actor fitted by the weighted
action-difference objective in~\eqref{eq:explicit-actor-loss} on a balanced
$15$-triplet coreset. The fit uses the minimum-norm solution on the rank-five
feature-difference span and no actor ridge. The critic ridge parameter is $1.0$,
the confidence level is $\delta=5.0\times10^{-2}$, and $\lambda_{\rm max}\approx22.5$.
Each run performs $K=1.5\times10^4$ policy updates, with stepsize
$\eta=2.0\times10^{-2}$ on-policy and $\eta=5.0\times10^{-2}$ off-policy.
The regularised method uses $\tau=2.0\times10^{-2}$, whereas the baseline
uses $\tau=0$. Both methods train and are evaluated at $c=2.14$.

\paragraph{On- and off-policy data.}
In the on-policy condition, the oracle discards past data and receives a
fresh batch of $N=3.2\times10^4$ trajectories at every update. In the
off-policy condition, it appends one new trajectory per update to cumulative
ridge sufficient statistics. This gives $4.8\times10^8$
trajectories per on-policy seed and $1.5\times10^4$ trajectories per
off-policy seed. The large fresh batches isolate last-iterate behaviour
rather than establish an on-policy sample-efficiency advantage.

The figures show one seed in each sampling mode.

\begin{figure}[htbp]
    \centering
    \includegraphics[width=\linewidth]{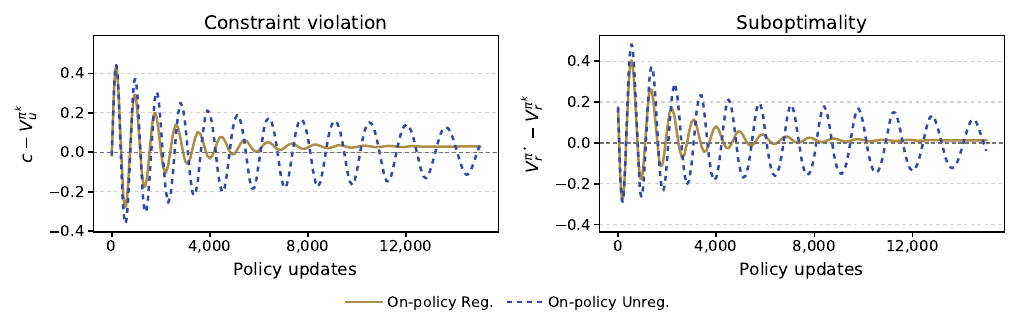}
    \caption{On-policy last-iterate performance for one seed.
    Solid and dashed curves denote the regularised and unregularised
    methods, both trained at $c$.}
    \label{fig:small-linear-cmdp-on-policy}
\end{figure}

\begin{figure}[htbp]
    \centering
    \includegraphics[width=\linewidth]{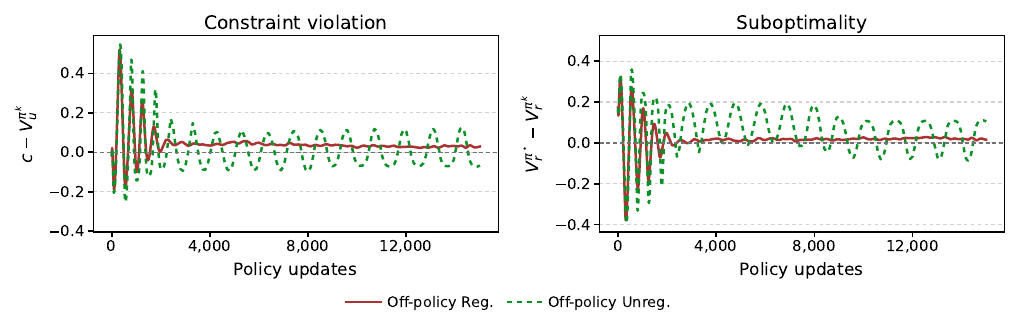}
    \caption{Off-policy last-iterate performance for one seed,
    using the same panels, visual encoding, and axis limits as
    Figure~\ref{fig:small-linear-cmdp-on-policy}.}
    \label{fig:small-linear-cmdp-off-policy}
\end{figure}

\begin{table}[htbp]
    \centering
    \caption{Late-training fluctuations of the signed constraint gap. For each run,
we sort the recorded gaps during updates $13{,}000$--$15{,}000$,
discard the lowest and highest $10\%$, and measure the width of the
remaining values. Each entry gives the median width across $30$
seeds, with the [25th, 75th] percentiles in brackets.}
    \label{tab:small-linear-cmdp-variability}
    \begin{tabular}{lcc}
        \toprule
        Sampling mode & Regularised & Unregularised \\
        \midrule
        On-policy & $0.0027\,[0.0023,\,0.0037]$ & $0.2234\,[0.2199,\,0.2247]$ \\
        Off-policy & $0.0068\,[0.0052,\,0.0084]$ & $0.1790\,[0.0794,\,0.3122]$ \\
        \bottomrule
    \end{tabular}
\end{table}

Figures~\ref{fig:small-linear-cmdp-on-policy}
and~\ref{fig:small-linear-cmdp-off-policy} show the two sampling modes
separately. Over $1.5\times10^4$ updates, the regularised curves
damp their initial oscillations more rapidly and remain flatter than the
unregularised curves in both modes.

We further complement the training-curve plot with the following
statistic, computed over 30 seed runs, to highlight the difference
in fluctuations between the vanilla primal-dual method and its
regularised counterpart.
For seed $j$, let $\Delta_{c,j}^k$ and $\Delta_{r,j}^k$ denote the
signed constraint and reward gaps at update $k$. Over the recorded
checkpoints
$\mathcal K_{\rm tail}=\{13{,}010,13{,}020,\ldots,15{,}000\}$,
the fluctuation score in
Table~\ref{tab:small-linear-cmdp-variability} is
\[
W_j=\operatorname{Quant}_{0.9}
\bigl(\Delta_{c,j}^k:k\in\mathcal K_{\rm tail}\bigr)
-\operatorname{Quant}_{0.1}
\bigl(\Delta_{c,j}^k:k\in\mathcal K_{\rm tail}\bigr),
\]
where $\operatorname{Quant}_p$ is the empirical $p$-quantile.
Intuitively, a larger $W_j$ indicates greater oscillation in the
late-training curve.


\end{document}